\documentclass{article}



\usepackage[preprint,nonatbib]{neurips_2020}



\usepackage[utf8]{inputenc} 
\usepackage[T1]{fontenc}    
\usepackage{url}            
\usepackage{booktabs}       
\usepackage{amsfonts}       
\usepackage{nicefrac}       
\usepackage{microtype}      

\usepackage{color}
\usepackage{tikz}
\usepackage{tikz-3dplot}
\usepackage{algorithm,algorithmic}
\usepackage[algo2e,ruled,vlined]{algorithm2e}
\usepackage{amssymb}
\usepackage{graphicx}
\usepackage{url}
\usepackage{setspace}
\usepackage{subfigure}
\usepackage{wrapfig}
\usepackage{enumitem}
\usepackage{hyperref}
\hypersetup{bookmarksnumbered,bookmarksopen,
colorlinks,citecolor=blue,linkcolor=blue,urlcolor=blue}

\usepackage{mathrsfs}
\usepackage{amsmath,amsfonts,amsthm}
\usepackage{bm}
\usepackage{multicol}
\usepackage{multirow}
\usepackage{tcolorbox}
\usepackage[page]{appendix}

\usepackage[normalem]{ulem}
\newcommand\redout{\bgroup\markoverwith
{\textcolor{red}{\rule[.5ex]{2pt}{0.4pt}}}\ULon}

\newcommand\blfootnote[1]{%
	\begingroup
	\renewcommand\thefootnote{}\footnote{#1}%
	\addtocounter{footnote}{-1}%
	\endgroup
}



\newcommand{\Appendix}[1]{the full version for}

\newtheorem{theorem}{Theorem}[section]
\newtheorem{lemma}[theorem]{Lemma}

\renewcommand{\u}{\bm{u}}
\renewcommand{\v}{\bm{v}}

\newcommand{\x}{\bm{x}}
\newcommand{\y}{\bm{y}}
\newcommand{\z}{\bm{z}}

\newcommand{\bfSigma}{\boldsymbol{\Sigma}}
\newcommand{\bfsigma}{\boldsymbol{\sigma}}

\newcommand{\I}{\bm{I}}

\newcommand{\Q}{\bm{Q}}
\newcommand{\R}{\mathbb{R}}
\renewcommand{\Re}{\mathbb{R}}
\renewcommand{\S}{\bm{S}}

\newcommand{\U}{\bm{U}}
\newcommand{\V}{\bm{V}}

\newcommand{\X}{\bm{X}}
\newcommand{\Y}{\bm{Y}}
\newcommand{\Z}{\bm{Z}}
\newcommand{\rank}{\textup{\textsf{rank}}}

\newcommand{\blambda}{\boldsymbol{\lambda}}

\newcommand{\0}{\mathbf{0}}

\newcommand{\tr}{\textup{\textsf{tr}}}

\newcommand{\cL}{\mathcal{L}}

\newcommand{\cT}{\mathcal{T}}

\newcommand{\bbS}{\mathbb{S}}

\DeclareMathOperator*{\argmax}{arg\,max}
\DeclareMathOperator*{\argmin}{arg\,min}
\usepackage{pifont}
\newcommand{\cmark}{\ding{51}}%
\newcommand{\xmark}{\ding{55}}%


\title{Learning Diverse and Discriminative Representations via the Principle of Maximal Coding Rate Reduction}

\author{
Yaodong Yu$^\dagger$ \;\;
Kwan Ho Ryan Chan$^\dagger$ \;\;
Chong You$^\dagger$ \;\;
Chaobing Song$^\ddagger$ \;\;
Yi Ma$^\dagger$
\\
\vspace*{-0.05in} 
\\
$^\dagger$Department of EECS, University of California, Berkeley\\
$^\ddagger$Tsinghua-Berkeley Shenzhen Institute, Tsinghua University\\
}

\begin{document}

\maketitle
\vspace*{-0.35in} 
\maketitle
\blfootnote{\hspace*{-1.4mm}$^*$The first two authors contributed equally to this work.}

\begin{abstract}
{To learn intrinsic low-dimensional structures from high-dimensional data that most discriminate between classes, we propose the principle of {\em Maximal Coding Rate Reduction} ($\text{MCR}^2$), an information-theoretic measure that maximizes the coding rate difference between the whole dataset and the sum of each individual class. We clarify its relationships with most existing frameworks such as cross-entropy, information bottleneck, information gain, contractive and contrastive learning, and provide theoretical guarantees for learning diverse and discriminative features. The coding rate can be accurately computed from finite samples of degenerate subspace-like distributions and can learn intrinsic representations in supervised, self-supervised, and unsupervised settings in a unified manner. Empirically, the representations learned using this principle alone are significantly more robust to label corruptions in classification than those using cross-entropy, and can lead to state-of-the-art results in clustering mixed data from self-learned invariant features.}
\end{abstract}

\section{Context and Motivation}\label{sec:motivation}
Given a random vector $\x \in \Re^D$ which is drawn from a mixture of, say $k$, distributions $\mathcal{D} = \{\mathcal{D}_j\}_{j=1}^k$, one of the most fundamental problems in machine learning is how to effectively and efficiently {\em learn the distribution} from a finite set of i.i.d samples, say $\X = [\x_1, \x_2, \ldots, \x_m] \in \Re^{D\times m}$. To this end, we {\em seek a good representation}  through a continuous mapping, $f(\x, \theta): \Re^D \rightarrow \Re^d$, that captures intrinsic structures of $\x$ and best facilitates subsequent tasks such as classification or clustering.

\textbf{Supervised learning of discriminative representations.} To ease the task of learning $\mathcal{D}$, in the popular supervised setting, a true class label, represented as a one-hot vector $\y_i \in \Re^k$, is given for each sample $\x_i$. Extensive studies have shown that for many practical datasets (images, audios, and natural languages, etc.), the mapping from the data $\bm x$ to its class label $\bm y$ can be effectively modeled by training a deep network~\cite{goodfellow2016deep}, here denoted as $f(\x, \theta):\x \mapsto \y$ with network parameters $\theta \in \Theta$. This is typically done by minimizing the {\em cross-entropy loss} over a training set $\{(\x_i, \y_i)\}_{i=1}^m$, 
through backpropagation over the network parameters $\theta$: 
\begin{equation}
   \min_{\theta \in \Theta} \; \mbox{CE}(\theta, \x, \y) \doteq - \mathbb{E}[\langle \y, \log[f(\x, \theta)] \rangle] \, \approx - \frac{1}{m}\sum_{i=1}^m \langle \y_i, \log[f(\x_i, \theta)] \rangle.
   \label{eqn:cross-entropy}
\end{equation}
Despite its effectiveness and enormous popularity, there are two serious limitations with this approach:  1) It aims only to predict the labels $\y$ even if they might be mislabeled. Empirical studies show that deep networks, used as a ``black box,'' can even fit random labels \cite{zhang2017understanding}. 2) With such an end-to-end data fitting, it is not clear to what extent the intermediate features learned by the network capture the intrinsic structures of the data that make meaningful classification possible in the first place.\footnote{despite plenty of empirical efforts in trying to illustrate or interpreting the so-learned features \cite{Zeiler-ECCV2014}.} The precise geometric and statistical properties of the learned features are also often obscured, which leads to the lack of interpretability and subsequent performance guarantees (e.g., generalizability, transferability, and robustness, etc.) in deep learning. Therefore, the goal of this paper is to address such limitations of current learning frameworks by reformulating the objective towards learning {\em explicitly meaningful} representations for the data $\x$.

\textbf{Minimal discriminative features via information bottleneck.} One popular approach to interpret the role of deep networks is to view outputs of intermediate layers of the network as selecting certain latent features $\z = f(\x, \theta) \in \Re^d$ of the data that are discriminative among multiple classes. Learned representations $\z$ then  facilitate the subsequent classification task for predicting the class label $\y$ by optimizing a classifier $g(\z)$: 
\begin{equation*}
    \x   \xrightarrow{\hspace{2mm} f(\x, \theta)\hspace{2mm}} \z(\theta)  \xrightarrow{\hspace{2mm} g(\z) \hspace{2mm}} \y.
\label{eqn:discriminative}
\end{equation*}
The {\em information bottleneck} (IB) formulation \cite{Tishby-ITW2015} further hypothesizes that the role of the network is to learn $\z$ as the minimal sufficient statistics for predicting $\y$. Formally, it seeks to maximize the mutual information $I(\z, \y)$\footnote{Mutual information is defined to be $I(\z, \y) \doteq H(\bm z) - H(\bm z \mid \y)$ where $H(\z)$ is the entropy of $\z$ \cite{Thomas-Cover}.} between $\z$ and $\y$ while minimizing $I(\x, \z)$ between $\x$ and $\z$:
\begin{equation}
    \max_{\theta\in \Theta}\; \mbox{IB}(\x, \y, \z(\theta)) \doteq I(\z(\theta), \y) - \beta I(\x, \z(\theta)), \quad \beta >0. 
\label{eqn:information-bottleneck}
\end{equation}
This framework has been successful in describing certain behaviors of deep networks.\footnote{given one can overcome some caveats associated with this framework \cite{kolchinsky2018caveats-ICLR2018} and practical difficulties such as how to accurately evaluate mutual information with finitely samples of degenerate distributions.} But by being task-dependent (depending on the label $\y$) and seeking a {\em minimal} set of most informative features for the task at hand (for predicting the label $\y$ only), the network sacrifices generalizability, robustness, or transferability.\footnote{in case the labels can be corrupted or the learned features be tackled.} To address this, our framework uses label $\y$ only as side information to assist learning discriminative features, hence making learned features more robust to mislabeled data.

\textbf{Contractive learning of generative representations.} 
{Complementary to the above supervised discriminative approach, {\em  auto-encoding} \cite{Baldi89,Kramer1991NonlinearPC} is another popular {\em unsupervised} (label-free) framework used to learn good latent representations.} The idea is to learn a compact latent representation $\z \in \Re^d$ that adequately regenerates  the original data $\x$ to certain extent, say through optimizing some decoder or generator $g(\z, \eta)$\footnote{hence the auto-encoding \cite{Baldi89,Kramer1991NonlinearPC} can be viewed as a nonlinear extension to the classical PCA \cite{Jolliffe2002}.}:
\begin{equation}
     \x \xrightarrow{\hspace{2mm} f(\x, \theta)\hspace{2mm}} \z(\theta)  \xrightarrow{\hspace{2mm} g(\z,\eta) \hspace{2mm}} \widehat{\x}(\theta, \eta).
     \label{eqn:generative}
\end{equation}
Typically, such representations are learned in an end-to-end fashion by imposing certain heuristics on  geometric or statistical ``compactness'' of $\z$, such as its dimension, energy, or volume. For example, the {\em contractive} autoencoder  \cite{contractive-ICML11} penalizes local volume expansion of learned features approximated by the Jacobian $\|\frac{\partial \z}{\partial \theta}\|$. Another key design factor of this approach is the choice of a proper, but often elusive, metric that can measure the desired {\em similarity} between $\x$ and the decoded $\widehat{\x}$, either between sample pairs $\x_i$ and  $\widehat{\x}_i$\footnote{for tasks such as denoising, in which the metric can be chosen to the $\ell^p$-norm between samples of $\x$ and $\hat{\x}$: $\min_{\theta,\eta} \mathbb{E}[\|\x -\widehat{ \x}\|_p]$, where typically $p = 1$ or 2, for tasks such as image denoising.} or
between the two distributions $\mathcal{D}_{\x}$ and $\mathcal{D}_{\widehat \x}$.\footnote{the distance between distributions of $\x$ and $\widehat{\x}$, say the KL divergence $\mbox{KL}(\mathcal{D}_{\x}|| \mathcal{D}_{\widehat{\x}})$, is very difficult to evaluate when the data distributions are discrete and degenerate. In practice, it can only be approximated with the help of an additional disriminative network, known as GAN~\cite{goodfellow2014generative,arjovsky2017wasserstein}.}

Representations learned through this framework can be arguably rich enough to regenerate the data to a certain extent. But depending on the choice of the regularizing heuristics on $\z$ and similarity metrics on $\x$ (or $\mathcal{D}_{\x}$), the objective is typically task-dependent and often grossly approximated \cite{contractive-ICML11,goodfellow2014generative}. When the data contain complicated {\em multi-modal} structures, naive heuristics or inaccurate metrics may fail to capture all internal subclass structures\footnote{One consequence of this is the phenomenon of {\em mode collapsing} in learning generative models for data that have mixed multi-modal structures; see \cite{li2020multimodal-IJCV} and references therein.} or to explicitly discriminate among them for classification or clustering purposes. To address this, we propose a principled measure (on $\z$) to learn representations that promotes multi-class discriminative property from data of mixed structures, which works in both supervised and unsupervised settings.

\begin{figure*}[t]
\begin{center}
    \subfigure{
    \tdplotsetmaincoords{60}{110}
\begin{tikzpicture}[scale=1.5]
  \coordinate (o1) at (0,0);
  \coordinate (o2) at (4.8,0.8);
  \coordinate (o3) at (6.1,1.2);
  
  \draw[very thick,->] (2.9,0.9) .. controls (3.5,0.7) .. (3.9,0.7);
  \draw (3.3,0.75) node[anchor=north]{$f(\x,{\theta})$};
  \tdplotsetrotatedcoords{100}{0}{0}
  \tdplotsetrotatedcoordsorigin{(o1)}
  \begin{scope}[tdplot_rotated_coords]
  \draw[thick,->] (0,0,0) -- (2,0,0);
  \draw[thick,->] (0,0,0) -- (0,2.0,0);
  \draw[thick,->] (0,0,0) -- (0,0,2);
  \draw (0.3,0.1,1.7) node{$\Re^D$};
  \draw (3.9,0.8,1.7) node{$\Re^d$};

  \draw[thick] (0.5,0.5,0.5) .. controls (1.2,1,0.55) .. (2.4,0.5,0.6);
  \draw[thick] (2.4,0.5,0.6) .. controls (2.4,1.5,0.55) .. (2.9,2.1,0.5);
  \draw[thick] (2.9,2.1,0.5) .. controls (2.0,2.5,0.65) .. (0.8,2.4,0.8);
  \draw[thick] (0.8,2.4,0.8) .. controls (0.55,1.5,0.65) .. (0.5,0.5,0.5);
  \draw (0.6,0.6,0.55) node[anchor=south west]{\small $\mathcal{M}$};
  
  \draw[red] (0.85,0.85,0.75) .. controls (1.6,1.5,0.8) .. (2.6,1.5,0.8);
  \draw[black!30!green] (0.9,1.8,0.75) .. controls (1.6,1.5,0.8) .. (2.4,0.8,0.8);
  \draw[black!10!blue] (1.6,0.6,0.75) .. controls (1.7,1.5,0.8) .. (1.7,2.1,0.8);
  \draw (2.2,2.2,0.55) node[red]{\small{$ \mathcal{M}_1$}};
  \draw (2.15,1.1,0.55) node[black!30!green]{\small $\mathcal{M}_2$};
  \draw (1.35,2.4,0.55) node[black!10!blue]{\small $\mathcal{M}_j$}; 
  \draw (1.4,1.4,0.55) node{$\x_i$}; 
  
  \def\points{(0.95,0.93,0.75), (1.05,1.01,0.75), (1.4,1.35,0.75), (1.75,1.5,0.75), (2.0,1.55,0.75), (2.3,1.56,0.75)}
  \foreach \p in \points {
    \draw plot [mark=*, mark size=0.8, mark options={draw=red, fill=red}] coordinates{\p}; 
  }
  \def\points{(0.95,1.75,0.75), (1.15,1.7,0.75), (1.4,1.6,0.75), (1.75,1.4,0.75), (2.0,1.2,0.75), (2.3,0.95,0.75)}
  \foreach \p in \points {
    \draw plot [mark=*, mark size=0.8, mark options={draw=black!30!green, fill=black!30!green}] coordinates{\p};
  }
  \def\points{(1.61,0.7,0.75), (1.62,0.8,0.75), (1.64,1.1,0.75), (1.65,1.4,0.75), (1.66,1.8,0.75), (1.67,2.1,0.75)}
  \foreach \p in \points {
    \draw plot [mark=*, mark size=0.8, mark options={draw=black!10!blue, fill=black!10!blue}] coordinates{\p};
  }
  \end{scope}
  
  \tdplotsetrotatedcoords{-30}{0}{10}
  \tdplotsetrotatedcoordsorigin{(o2)}
  \begin{scope}[tdplot_rotated_coords]
    \draw [red, ->] (-1.2,0,0) -- (1.2,0,0) node[anchor=north east]{$\mathcal{S}_1$};
    \draw [black!30!green, ->] (0,-1,0) -- (0,1,0) node[anchor=north west]{$\mathcal{S}_2$};
    \draw [black!10!blue, ->] (0,0,-1) -- (0,0,1) node[anchor=east]{$\mathcal{S}_j$};
    \draw (0,0,0.25) -- (-0.25,0,0.25) -- (-0.25,0,0);
    \def\points{(-0.8,0,0), (-0.6,0,0), (-0.3,0,0), (-0.1,0,0), (0.3,0,0), (0.6,0,0)}
    \foreach \p in \points {
      \draw plot [mark=*, mark size=0.8, mark options={draw=red, fill=red}] coordinates{\p}; 
    }
    \def\points{(0,-0.85,0), (0,-0.6,0), (0,-0.3,0), (0,0.1,0), (0,0.4,0), (0,0.6,0)}
    \foreach \p in \points {
      \draw plot [mark=*, mark size=0.8, mark options={draw=black!30!green, fill=black!30!green}] coordinates{\p};
    }
    \def\points{(0,0,-0.9), (0,0,-0.7), (0,0,-0.2), (0,0,-0.1), (0,0,0.4), (0,0,0.8)}
    \foreach \p in \points {
    \draw plot [mark=*, mark size=0.8, mark options={draw=black!10!blue, fill=black!10!blue}] coordinates{\p};
  }
  \draw (0.3,0.0,-0.25) node{$\z_i$}; 
  \end{scope}
\end{tikzpicture}}
    \subfigure{\includegraphics[width=0.27\textwidth]{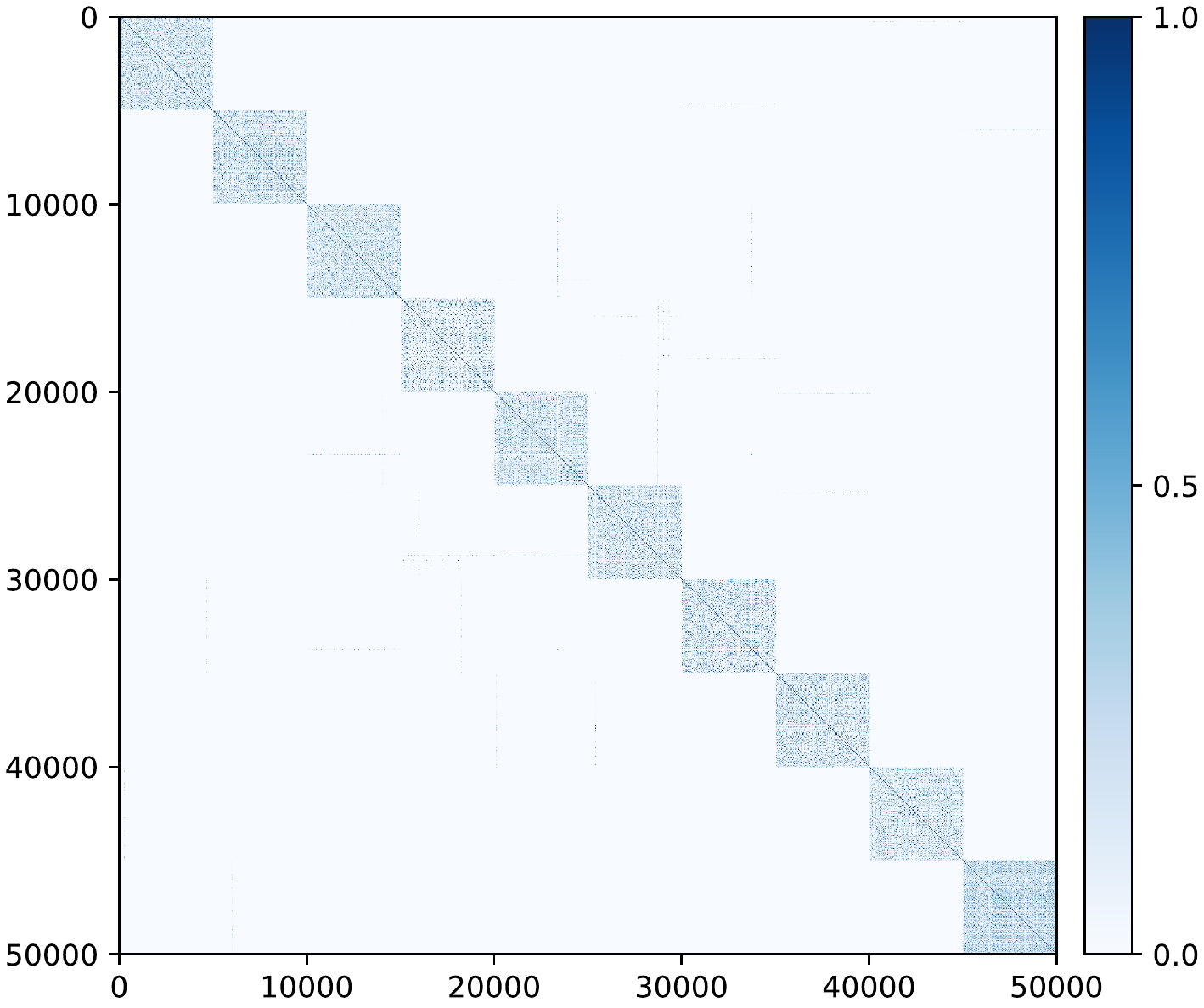}}
      \vskip -0.05in
\caption{\small \textbf{Left and Middle:} The distribution $\mathcal D$ of high-dim data $\x\in \Re^D$ is supported on  a manifold $\mathcal{M}$ and its classes on low-dim submanifolds $\mathcal{M}_j$, we learn a map $f(\x, \theta)$ such that $\z_i = f(\x_i, \theta)$ are on a union of maximally uncorrelated subspaces $\{\mathcal{S}_j\}$. \textbf{Right:} Cosine similarity between learned features by our method for the CIFAR10 training dataset. Each class has 5,000 samples and their features span a subspace of over 10 dimensions (see Figure~\ref{fig:train-test-loss-pca-3}).}
\label{fig:low-dim}
\end{center}
\vskip -0.25in
\end{figure*}

{\textbf{This work: Learning diverse and discriminative representations.} Whether the given data $\X$ of a mixed distribution $\mathcal{D}$ can be effectively classified depends on how separable (or discriminative) the component distributions $\mathcal{D}_j$ are (or can be made).}  
One popular working assumption is that the distribution of each class has relatively {\em low-dimensional} intrinsic structures.\footnote{There are many reasons why this assumption is plausible: 1. high dimensional data are highly redundant; 2. data that belong to the same class should be similar and correlated to each other; 3. typically we only care about equivalent structures of $\x$ that are invariant to certain classes of deformation and augmentations.} Hence we may assume the distribution $\mathcal{D}_j$ of each class has a support on a low-dimensional submanifold, say $\mathcal{M}_j$ with dimension $d_j \ll D$, and the distribution $\mathcal D$ of $\x$ is supported on the mixture of those submanifolds, $\mathcal M = \cup_{j=1}^k \mathcal{M}_j$,  in the high-dimensional ambient space $\Re^D$, as illustrated in Figure~\ref{fig:low-dim} left. 


With the manifold assumption in mind, we want to learn a mapping {$\z = f(\x, \theta)$} that maps each of the submanifolds $\mathcal{M}_j \subset \Re^D$  to a {\em linear} subspace $\mathcal{S}_j \subset \Re^d$ (see Figure~\ref{fig:low-dim} middle). To do so, we require our learned representation to have the following properties: 
\begin{enumerate}
    \item {\em Between-Class Discriminative:} Features of samples from different classes/clusters should be highly {\em uncorrelated} and belong to different low-dimensional linear subspaces.
    \item {\em Within-Class Compressible:} Features of samples from the same class/cluster should be relatively {\em correlated} in a sense that they belong to a low-dimensional linear subspace.
    \item {\em Maximally Diverse Representation:} Dimension (or variance) of features for each class/cluster should be {\em as large as possible} as long as they stay uncorrelated from the other classes.
\end{enumerate}
Notice that, although the intrinsic structures of each class/cluster may be low-dimensional, they are by no means simply linear in their original representation $\x$. Here the subspaces $\{\mathcal{S}_j\}$ can be viewed as nonlinear {\em generalized principal components} for $\bm x$ \cite{GPCA}.  Furthermore, for many clustering or classification tasks (such as object recognition), we consider two samples as {\em equivalent} if they differ by certain class of domain deformations or augmentations $\cT = \{\tau \}$. Hence, we are only interested in low-dimensional structures that are {\em invariant} to such deformations,\footnote{So $\x \in \mathcal{M}$ iff $\tau(\x) \in \mathcal{M}$ for all $\tau \in \cT$.} which are known to have sophisticated geometric and topological structures \cite{Wakin-2005} and can be difficult to learn in a principled manner even with CNNs \cite{Cohen-ICML-2016, cohen2019general}. There are previous attempts to directly enforce subspace structures on features learned by a deep network for supervised \cite{lezama2018ole} or unsupervised learning \cite{Ji-NIPS2017,zhang2018scalable,peng2017deep,zhou2018deep,zhang2019neural,zhang2019self,lezama2018ole}. However, the {\em self-expressive} property of subspaces exploited by \cite{Ji-NIPS2017} does not enforce all the desired properties listed above; \cite{lezama2018ole} uses a nuclear norm based geometric loss to enforce orthogonality between classes, but does not promote diversity in the learned representations, as we will soon see. Figure~\ref{fig:low-dim} right
illustrates a representation learned by our method on the CIFAR10 dataset. More details can be found in the experimental Section \ref{sec:experiments}.

\section{Technical Approach and Method}
\subsection{Measure of Compactness for a Representation}\label{sec:lossy-coding}
Although the above  properties are all highly desirable for the latent representation $\z$, they are by no means easy to obtain: Are these properties compatible so that we can expect to achieve them all at once? If so, is there  a {\em simple but principled} objective that can measure the goodness of the resulting representations in terms of all these properties? The key to  these questions {is to find} a principled ``measure of compactness'' for the distribution of a random variable $\z$ or from its finite samples $\Z$. Such a measure should directly and accurately characterize intrinsic geometric or statistical properties of the distribution, in terms of its intrinsic dimension or {volume}. Unlike cross-entropy \eqref{eqn:cross-entropy} or information bottleneck \eqref{eqn:information-bottleneck}, such a measure should not depend explicitly on class labels so that it can work in all supervised, self-supervised, semi-supervised, and unsupervised settings.

\textbf{Low-dimensional degenerate distributions.} In information theory \cite{Thomas-Cover}, the notion of entropy $H(\z)$ is designed to be such a measure.\footnote{given the probability density $p(\z)$ of a random variable, $H(\z) \doteq - \int p(\z) \log p(\z) \, d\z.$} However, entropy is not  well-defined for continuous random variables with degenerate distributions.\footnote{The same difficulty resides with evaluating mutual information $I(\x, \z)$ for degenerate distributions.} This is unfortunately the case here. To alleviate this difficulty, another related concept in information theory, more specifically in lossy data compression, that measures the ``compactness'' of a random distribution is the so-called {\em rate distortion} \cite{Thomas-Cover}: {Given a random variable $\z$ and a prescribed precision $\epsilon >0$, the rate distortion $R(\z, \epsilon)$ is the minimal number of binary bits needed to encode $\z$ such that the expected decoding error\footnote{Say in terms of the $\ell^2$-norm, we have $\mathbb E[\|\z - \widehat \z \|_2] \le \epsilon$ for the decoded $\widehat \z$.} is less than $\epsilon$. Although this framework has been successful in explaining feature selection in deep networks \cite{rate-distortion}, the rate distortion of a random variable is difficult, if not impossible to compute, except for simple distributions such as discrete and Gaussian. }

\textbf{Nonasymptotic rate distortion for finite samples.} When evaluating the lossy coding rate $R$, one practical difficulty is that we normally do not know the distribution of $\z$.  Instead, we have a finite number of samples as learned representations where $\z_{i} = f(\x_i, \theta) \in \R^{d}, i = 1,\ldots, m$, for the given data samples $\X = [\x_1, \ldots, \x_m]$. Fortunately, \cite{ma2007segmentation} provides a precise estimate on the number of binary bits needed to encoded finite samples from a subspace-like distribution. In order to encode the learned representation $\Z = [\z_1, \dots, \z_m]$ up to a precision $\epsilon$, the total number of bits needed is given by the following expression\footnote{This formula can be derived either by packing $\epsilon$-balls into the space spanned by $\Z$ or by computing the number of bits needed to quantize the SVD of $\Z$ subject to the precision, see \cite{ma2007segmentation} for proofs.}:  $\cL(\Z, \epsilon) \doteq \left(\frac{m + d}{2}\right)\log \det\left(\I + \frac{d}{m\epsilon^{2}}\Z\Z^{\top}\right)$. Therefore, the compactness of learned features {\em as a whole} can be measured in terms of the average coding length per sample (as the sample size $m$ is large),  a.k.a. the {\em coding rate} subject to the distortion $\epsilon$:
\begin{equation}
R(\Z,\epsilon) \doteq \frac{1}{2}\log\det\left(\I + \frac{d}{m\epsilon^{2}}\Z\Z^{\top}\right).
\label{eqn:coding-length-eval}
\end{equation}

\textbf{Rate distortion of data with a mixed distribution.} {In general,}  the features $\Z$ of multi-class data may belong to multiple low-dimensional subspaces. To evaluate the rate distortion of such mixed data {\em more accurately}, we may partition the data $\Z$ into multiple subsets: $\Z = \Z_1 \cup \cdots \cup \Z_k$, {with} each in one low-dim subspace. So the above coding rate \eqref{eqn:coding-length-eval} is accurate for each subset. For convenience, let $\bm{\Pi} = \{\bm{\Pi}_j \in \Re^{m \times m}\}_{j=1}^{k}$ be a set of diagonal matrices whose diagonal entries encode the membership of the $m$ samples in the $k$ classes.\footnote{That is, the diagonal entry $\bm \Pi_j(i,i)$ of $\bm \Pi_j$ indicates the probability of sample $i$ belonging to subset $j$. Therefore $\bm{\Pi}$ lies in a simplex: ${\Omega} \doteq \{\bm{\Pi} \mid \bm{\Pi}_j \ge \mathbf{0}, \; \bm{\Pi}_1 + \cdots + \bm{\Pi}_k = \I\}.$} Then, according to \cite{ma2007segmentation}, with respect to this partition, the average number of bits per sample (the coding rate) is
\begin{equation}
R^c(\Z,  \epsilon \mid \bm{\Pi}) \doteq \sum_{j=1}^{k}
\frac{\tr(\bm{\Pi}_j)}{2m}\log\det\left(\I + \frac{d}{\tr(\bm{\Pi}_j)\epsilon^{2}}\Z\bm{\Pi}_j\Z^{\top}\right).
\label{eqn:compress-loss-eval}
\end{equation}
Notice that when $\Z$ is given, $R^c(\Z, \epsilon \mid \bm{\Pi})$ is a concave function of $\bm{\Pi}$.
The function $\log\det(\cdot)$ in the above expressions has been long known as an effective heuristic for rank minimization problems, with guaranteed convergence to local minimum \cite{fazel2003log-det}. {As it nicely characterizes the rate distortion of Gaussian or subspace-like distributions, $\log\det(\cdot)$ can be very effective in clustering or classification of mixed data \cite{ma2007segmentation,wright2008classification,kang2015logdet}.} We will soon reveal more desired properties of this function.

\subsection{Principle of Maximal Coding Rate Reduction}\label{sec:principle-mcr2}
On one hand, for learned features to be discriminative, features of different classes/clusters are preferred to be {\em maximally incoherent} to each other. Hence they together should span a space of the largest possible volume (or dimension) and the coding rate of the whole set $\Z$ should be as large as possible. On the other hand, learned features of the same class/cluster should be highly correlated and coherent. Hence, each class/cluster should only span a space (or subspace) of a very small volume and the coding rate should be as small as possible. Therefore, a good representation $\Z$ of $\X$ is one such that, given a partition $\bm{\Pi}$ of $\Z$, achieves a large difference between the coding rate for the whole and that for all the subsets:
\begin{equation}
\Delta R(\Z, \bm{\Pi}, \epsilon) \doteq R(\Z, \epsilon) - R^c(\Z, \epsilon \mid  \bm{\Pi}).
\label{eqn:coding-length-reduction}
\end{equation}
If we choose our feature mapping $\z = f(\x,\theta)$ to be a deep neural network, the overall process of the feature representation and the resulting rate reduction w.r.t. certain partition $\bm{\Pi}$ can be illustrated by the following diagram:
\begin{equation}
    \X 
    \xrightarrow{\hspace{2mm} f(\x, \theta)\hspace{2mm}} \Z(\theta) \xrightarrow{\hspace{2mm} \bm{\Pi},\epsilon \hspace{2mm}} \Delta R(\Z(\theta), \bm{\Pi}, \epsilon).
    \label{eqn:flow}
\end{equation}

Note that $\Delta R$ is {\em monotonic} in the scale of the features $\Z$. So to make the amount of reduction comparable between different representations,\footnote{Here different representations can be either representations associated with different network parameters or representations learned after different layers of the same deep network.} we need to {\em normalize the scale} of the learned features, either by imposing the Frobenius norm of each class $\Z_j$ to scale with the number of features in $\Z_j \in \mathbb R^{d \times m_j}$: $\|\Z_j\|_F^2 = m_j$ or by normalizing each feature to be on the unit sphere: $\z_i \in \mathbb{S}^{d-1}$. This formulation offers a natural justification for the need of ``batch normalization'' in the practice of 
training deep neural networks \cite{ioffe2015batch}. An alternative, arguably simpler, way to normalize the scale of learned representations is to ensure that the mapping of each layer of the network is approximately {\em isometric} \cite{ISOnet}.

Once the representations are comparable, our goal becomes to learn a set of features $\Z(\theta) = f(\X, \theta)$ and their partition $\bm \Pi$ (if not given in advance) such that they maximize the reduction between the coding rate of all features and that of the sum of features w.r.t. their classes:
\vspace*{0.05in} 
\begin{equation}
 \max_{\theta, \bm{\Pi}} \;  \Delta R\big(\Z(\theta), \bm{\Pi}, \epsilon\big) = R(\Z(\theta), \epsilon) - R^c(\Z(\theta),  \epsilon \mid \bm{\Pi}), \quad \mbox{s.t.} \ \ \,  \|\Z_j(\theta)\|_F^2 = m_j, \, \bm{\Pi} \in {\Omega}.
\label{eqn:maximal-rate-reduction}
\end{equation}
We refer to this as the principle of {\em maximal coding rate reduction} (MCR$^2$), 
{an embodiment of Aristotle's famous quote:  ``{\em the whole is greater than the sum of the  parts.}''}  Note that for the clustering purpose alone, one may only care about the sign of $\Delta R$ for deciding whether to partition the data or not, which leads to the greedy algorithm in \cite{ma2007segmentation}.\footnote{Strictly speaking, in the context of clustering {\em finite} samples, one needs to use the more precise measure of the coding length mentioned earlier, see \cite{ma2007segmentation} for more details.} Here to seek or learn the best representation, we further desire the whole is {\em maximally} greater than its parts.

\textbf{Relationship to  information gain.} 
The maximal coding rate reduction can be viewed as a generalization to {\em Information Gain} (IG), which aims to maximize the reduction of entropy of a random variable, say $\bm z$, with respect to an observed attribute, say $\bm \pi$:
$
\max_{\bm \pi} \; \mbox{IG}(\bm z, \bm \pi) \doteq H(\bm z) - H(\bm z \mid \bm \pi),
$
i.e., the {\em mutual information} between $\z$ and $\bm \pi$ \cite{Thomas-Cover}. Maximal information gain has been widely used in areas such as decision trees \cite{decision-trees}. However, MCR$^2$ is used differently in several ways: 1) One typical setting of MCR$^2$ is when the data class labels are given, i.e. $\bm \Pi$ is known, MCR$^2$ focuses on learning representations $\bm z(\theta)$ rather than fitting labels. 2) In traditional settings of IG, the number of attributes in $\bm z$ cannot be so large and their values are discrete (typically binary). Here the ``attributes'' $\bm \Pi$  represent the probability of a multi-class partition for all samples and their values can even be continuous. 3) As mentioned before, entropy $H(\bm z)$ or mutual information $I(\bm z, \bm \pi)$ \cite{hjelm2018learning} is not well-defined for degenerate continuous distributions whereas the rate distortion $R(\bm z, \epsilon)$ is and can be accurately and efficiently computed for (mixed)  subspaces, at least.

\begin{figure}
\begin{center}
\includegraphics[width=0.65\textwidth]{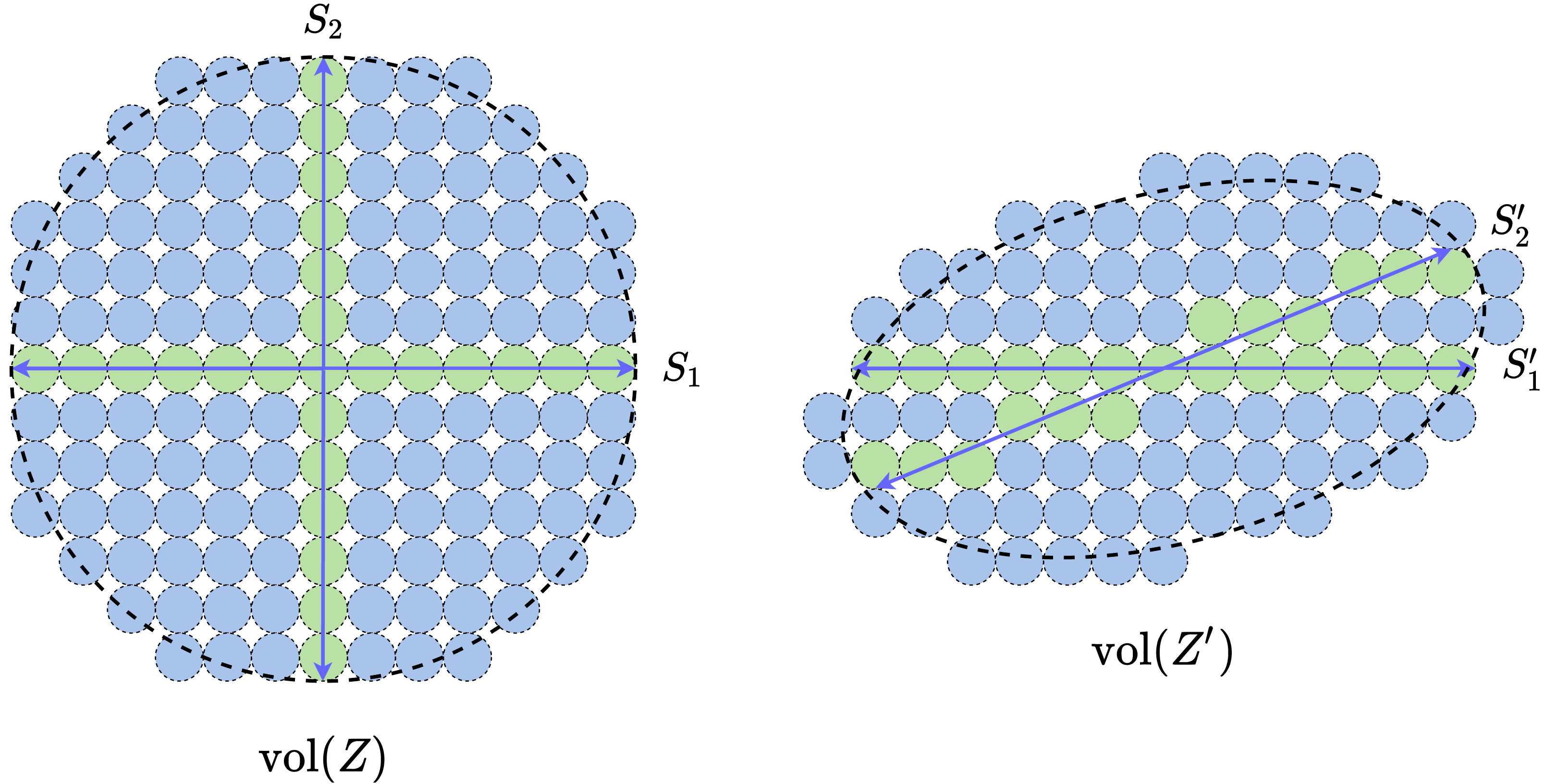}
\end{center}
\vskip -0.05in
\caption{\small Comparison of two learned representations $\Z$ and $\Z'$ via reduced rates: $R$ is the number of $\epsilon$-balls packed in the joint distribution and $R^c$ is the sum of the numbers for all the subspaces (the green balls). $\Delta R$ is their difference (the number of blue balls). The MCR$^2$ principle prefers $\Z$ (the left one).}\label{fig:sphere-packing}
\vskip -0.15in
\end{figure}

\subsection{Properties of the Rate Reduction Function}  
In theory, the MCR$^2$ principle \eqref{eqn:maximal-rate-reduction} benefits from great generalizability and can be applied to representations $\Z$ of {\em any} distributions with {\em any} attributes $\bm \Pi$ as long as the rates $R$ and $R^c$ for the distributions can be accurately and efficiently evaluated. The optimal representation $\Z^*$ and partition $\bm \Pi^*$ should have some interesting geometric and statistical properties. We here reveal nice properties of the optimal representation with the special case of subspaces, which have many important use  cases in machine learning. When the desired representation for $\Z$ is multiple subspaces, the rates $R$ and $R^c$ in \eqref{eqn:maximal-rate-reduction} are given by \eqref{eqn:coding-length-eval}
and \eqref{eqn:compress-loss-eval}, respectively. At the maximal rate reduction, MCR$^2$ achieves its optimal representations, denoted as $\Z^* = \Z_1^*\cup \cdots \cup \Z_k^* \subset \Re^d$ with $\rank{(\Z^*_j)}\le d_j$. One can show that $\Z^*$ has the following desired properties (see Appendix \ref{ap:rate-reduction} for a formal statement and detailed proofs).

\begin{theorem}[Informal Statement]
Suppose $\Z^* = \Z_1^*\cup \cdots \cup \Z_k^*$ is the optimal solution that maximizes the rate reduction~\eqref{eqn:maximal-rate-reduction}. We have:
\begin{itemize}
\item {\em Between-class Discriminative}: As long as the ambient space is adequately large ($d \ge \sum_{j=1}^{k} d_j$), the subspaces are all orthogonal to each other, {\em i.e.} $(\Z_i^*)^{\top} \Z_{j}^* = \bm 0$ for $i \not= j$.
\item {\em Maximally Diverse Representation}: 
As long as the coding precision is adequately high, i.e., $\epsilon ^4 < \min_{j}\left\{ \frac{m_j}{m}\frac{d^2}{d_j^2}\right\}$, each subspace achieves its maximal dimension, i.e. $\rank{(\Z^{*}_{j})}= d_j$. In addition, the largest $d_j - 1$ singular values of $\Z^{*}_{j}$ are equal. 
\label{thm:MCR2-properties}
\end{itemize}
\end{theorem}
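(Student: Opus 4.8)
The plan is to reduce the constrained problem~\eqref{eqn:maximal-rate-reduction} to a scalar optimization over the singular values of the per-class feature blocks, and then dispatch the two claims separately. Throughout I work with the partition induced by the blocks $\Z^*_1,\dots,\Z^*_k$, so each $\Z_j\in\Re^{d\times m_j}$ with $\sum_j m_j = m$, and I treat the free variables as the blocks $\Z_1,\dots,\Z_k$ subject to $\|\Z_j\|_F^2 = m_j$ together with the structural requirement $\rank(\Z_j)\le d_j$. Setting $\W_j \doteq \Z_j\Z_j^\top\succeq 0$ and $a\doteq\tfrac{d}{m\epsilon^2}$, $a_j\doteq\tfrac{d}{m_j\epsilon^2}$, the objective reads
\[
\Delta R \;=\; \tfrac12\log\det\!\Bigl(\I + a\,\textstyle\sum_{j}\W_j\Bigr)\;-\;\sum_{j}\tfrac{m_j}{2m}\log\det\bigl(\I + a_j\W_j\bigr),
\]
and the key structural point is that the second term ($R^c$) depends on each $\W_j$ only through its eigenvalues — equivalently, through the squared singular values $\sigma_{j,1}^2\ge\sigma_{j,2}^2\ge\cdots$ of $\Z_j$ — whereas the first term ($R$) also sees the relative orientation of the blocks' column spaces.

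For the between-class claim I would hold all per-class squared singular values fixed and ask which arrangement of the column spaces maximizes $R$. Since $\trace(\sum_j\W_j)=\sum_j m_j = m$ is fixed, $R=\tfrac12\sum_i\log(1+a\,\lambda_i)$ is a symmetric concave — hence Schur-concave — function of the eigenvalues $\lambda_i$ of $\sum_j\W_j$. By the Lidskii--Wielandt majorization inequalities, among all feasible arrangements the one with mutually orthogonal column spaces — realizable precisely because $\sum_j\rank(\Z_j)\le\sum_j d_j\le d$ — makes the eigenvalue vector of $\sum_j\W_j$ minimal in the majorization order (it is the disjoint union of the per-block spectra), hence maximizes $R$ while leaving $R^c$ untouched. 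Therefore any maximizer of $\Delta R$ already has mutually orthogonal blocks; strict Schur-concavity of $\sum_i\log(1+a\,\lambda_i)$ on unequal coordinates shows that any overlap would strictly decrease $\Delta R$, giving $(\Z_i^*)^\top\Z_j^* = \bm{0}$ for $i\neq j$.

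For the maximally-diverse claim, orthogonality makes the objective decouple: $\Delta R = \sum_j \tfrac12\sum_{\ell}\phi_j(\sigma_{j,\ell}^2)$ with $\phi_j(s)\doteq\log(1+a\,s)-\tfrac{m_j}{m}\log(1+a_j s)$, so for each $j$ I maximize $\sum_{\ell=1}^{d_j}\phi_j(s_\ell)$ over $s_\ell\ge 0$ with $\sum_\ell s_\ell = m_j$. A short computation gives $\phi_j(0)=0$, $\phi_j'>0$ on $(0,\infty)$ with $\phi_j'(0)=0$, and $\phi_j$ convex on $(0,s^*)$ with $s^*=\epsilon^2\sqrt{m m_j}/d$ and concave on $(s^*,\infty)$; in particular $\phi_j'$ is unimodal and attains each value in its range at most twice. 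KKT stationarity $\phi_j'(s_\ell)=\text{const}$ then forces the positive $s_\ell$ to take at most two distinct values; an exchange argument — if two coordinates share a common value inside the convex region, perturbing them apart changes the sum by $\delta^2\phi_j''(\cdot)+O(\delta^4)>0$ — rules out two coordinates at the smaller value, so at most one coordinate is off the common (larger) level, i.e. the largest $d_j-1$ squared singular values coincide. It then remains to show all $d_j$ coordinates are used: comparing equal allocations over $n$ versus $n-1$ coordinates reduces to the sign of $\phi_j(m_j/n)-\tfrac{m_j}{n}\phi_j'(m_j/n)$, and the hypothesis $\epsilon^4<\min_j\{\tfrac{m_j}{m}\tfrac{d^2}{d_j^2}\}$ is exactly the regime in which $s=m_j/d_j$ makes this comparison favor using all $d_j$ coordinates, forcing $\rank(\Z_j^*)=d_j$.

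The main obstacle I anticipate is the last paragraph: promoting the local, first-order picture of $\phi_j$ to a clean global optimum, and in particular verifying that the stated $\epsilon$-threshold is precisely what excludes zero singular values and that the exchange step genuinely leaves strictly at most one coordinate at the smaller value (the concave branch is handled by injectivity of $\phi_j'$ there, but the convex branch needs the perturbation argument). By contrast the orthogonality half is essentially a one-line consequence of majorization once the $R$/$R^c$ decomposition is in place; I would organize the write-up so that this clean part comes first and the scalar optimization of $\phi_j$ is isolated as a self-contained lemma.
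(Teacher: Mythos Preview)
Your overall two-step plan—first establish pairwise orthogonality of the optimal blocks, then decouple and analyze the scalar sum $\sum_\ell \phi_j(s_\ell)$ per class—is exactly the paper's strategy, and your within-class analysis (unimodality of $\phi_j'$, hence at most two distinct positive levels via KKT, then a second-order/exchange argument to rule out two coordinates at the low level) mirrors the paper's simplex-optimization lemma almost step for step.

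The one genuine difference is how you argue orthogonality. The paper does not use majorization: it applies the tangent-line inequality for the strictly concave function $\log\det$ at the block-diagonal point $\S=\mathrm{diag}(\I+a\Z_j^\top\Z_j)$, evaluated at $\Q=\I+a\Z^\top\Z$, and checks that the linear term $\tr(\S^{-1}\Q)-m$ vanishes identically; this gives $\log\det(\I+a\Z\Z^\top)\le\sum_j\log\det(\I+a\Z_j\Z_j^\top)$ in one line, with equality iff every off-diagonal block $\Z_i^\top\Z_j$ is zero. Your Schur-concavity route also works, but the key spectral fact you invoke—that the block-orthogonal placement is majorization-minimal among all arrangements with fixed per-block spectra—is not literally Lidskii--Wielandt (it follows more directly from the Ky Fan variational characterization of partial eigenvalue sums), and the strict-inequality case then needs a separate check that non-orthogonality actually changes the spectrum. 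The paper's approach is shorter and delivers the equality characterization for free.

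For the full-rank step you are right to flag it as the delicate point, but your proposed device—comparing equal allocations over $n$ versus $n{-}1$ slots via the sign of $\phi_j(m_j/n)-\tfrac{m_j}{n}\phi_j'(m_j/n)$—does not close the gap, since the competing optimizer over fewer slots need not be an equal allocation. The paper instead argues directly from first-order KKT: combining complementary slackness with $\phi_j'(0)=0<\phi_j'(s)$ for $s>0$ to exclude any zero coordinate. If you keep your outline, replace the allocation-comparison step with that KKT argument.
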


In other words, in the case of subspaces, the MCR$^2$ principle promotes embedding of data into multiple independent subspaces, with features distributed {\em isotropically}  in each subspace (except for possibly one dimension). In addition, among all such discriminative representations, it prefers the one with the highest dimensions in the ambient space. This is substantially different from the objective of information bottleneck~\eqref{eqn:information-bottleneck}.

\textbf{Comparison to the geometric OLE loss.} To encourage the learned features to be uncorrelated between classes, the work of \cite{lezama2018ole} has proposed to maximize the difference between the nuclear norm of the whole $\Z$ and its subsets $\Z_j$, called the {\em orthogonal low-rank embedding} (OLE) loss:
$
    \max_{\theta}\,
    \mbox{OLE}(\Z(\theta), \bm \Pi) \doteq  \|\Z(\theta)\|_* - \sum_{j=1}^k \|\Z_j(\theta)\|_*,
$
added as a regularizer to the cross-entropy loss \eqref{eqn:cross-entropy}. The nuclear norm $\|\cdot \|_*$ is a {\em nonsmooth convex}\footnote{Nonsmoothness poses additional difficulties in using this loss to learn features via gradient descent.} surrogate for low-rankness, whereas $\log\det(\cdot)$ is {\em smooth  concave} instead. Unlike the rate reduction $\Delta R$, OLE is always {\em negative} and achieves the maximal value $0$ when the subspaces are orthogonal, regardless of their dimensions. So in contrast to $\Delta R$, this loss serves as a geometric heuristic and does not promote diverse representations.  In fact, OLE typically promotes learning one-dim representations per class, whereas MCR$^2$ encourages learning subspaces with maximal dimensions (Figure~7 of \cite{lezama2018ole} versus our Figure~\ref{fig:pca-plot}).

\textbf{Relation to contrastive learning.}
If samples are {\em evenly} drawn from $k$ classes, a randomly chosen pair $(\x_i, \x_j)$ is of high probability belonging to difference classes if $k$ is large.\footnote{For example, when $k \ge 100$, a random pair is of probability 99\% belonging to different classes.} We may view the learned features of two samples together with their their augmentations $\Z_i$ and $\Z_j$ as two classes. Then the rate reduction $\Delta R_{ij} = R(\Z_i\cup \Z_j, \epsilon) - \frac{1}{2}(R(\Z_i, \epsilon) + R(\Z_j, \epsilon))$ gives a ``distance'' measure for how far the two sample sets are. We may try to further ``expand'' pairs that likely belong to different classes. From Theorem \ref{thm:MCR2-properties},  the (averaged) rate reduction $\Delta R_{ij}$ is maximized  when features from different samples are uncorrelated $\Z_i^\top \Z_j = \bm 0$  (see Figure~\ref{fig:sphere-packing}) and features $\Z_i$ from the same sample are highly correlated. Hence, when applied to sample pairs, MCR$^2$ naturally conducts the so-called {\em contrastive learning} \cite{hadsell2006dimensionality,oord2018representation,he2019momentum}. But  MCR$^2$ is {\em not} limited to expand (or compress) pairs of samples and can uniformly conduct ``contrastive learning'' for a subset with {\em any number} of samples as long as we know they likely belong to different (or the same) classes, say by randomly sampling subsets from a large number of classes or with a good clustering method.

\section{Experiments with Instantiations of MCR$^2$}\label{sec:experiments}
Our theoretical analysis above shows how the {\em maximal coding rate reduction} (MCR$^2$) is a principled measure for learning discriminative and diverse representations for mixed data. In this section, we demonstrate experimentally how this principle alone, {\em without any other heuristics,} is adequate to learning good representations in the supervised, self-supervised, and unsupervised learning settings in a unified fashion. Due to limited space and time, instead of trying to exhaust all its potential and practical implications with extensive engineering, our goal here is only to validate effectiveness of this principle through its most basic usage and fair comparison with existing frameworks. More implementation details and experiments are given in Appendix~\ref{ap:additional-exp}. The code can be found in \url{https://github.com/ryanchankh/mcr2}.

\subsection{Supervised Learning of Robust Discriminative Features}\label{sec:supervised-experiments}
\textbf{Supervised learning via rate reduction.} When class labels are provided during training, we assign the membership (diagonal) matrix $\bm{\Pi} = \{\bm{\Pi}_j\}_{j=1}^{k}$ as follows: for each sample $\bm{x}_i$ with label $j$, set $\bm{\Pi}_{j}(i,i) = 1$ and  $\bm{\Pi}_l(i,i)=0, \forall l \not= j$. Then the mapping $f(\cdot, \theta)$ can be learned by optimizing \eqref{eqn:maximal-rate-reduction}, where $\bm{\Pi}$ 
remains constant. We apply stochastic gradient descent to optimize MCR$^2$, and for each iteration we use mini-batch data $\{(\bm{x}_i, \bm{y}_{i})\}_{i=1}^{m}$ to approximate the MCR$^2$ loss.

\textbf{Evaluation via classification.} As we will see, in the supervised setting, the learned representation has very clear subspace structures. So to evaluate the learned representations, we consider a natural nearest subspace classifier. For each class of learned features $\Z_j$, let $\bm{\mu}_j \in \Re^p$ be its mean and $\U_j \in \Re^{p \times r_j}$ be the first $r_j$ principal components for $\Z_j$, where $r_j$ is the estimated  dimension of class $j$. The predicted label of a test data $\x'$ is given by
$
j' = \argmin_{j \in \{1, \ldots, k\}}\|(\I - \U_j \U_j^{\top}) (f(\x', \theta) - \bm{\mu}_j)\|_2^2.
$

\begin{figure*}[t]
\subcapcentertrue
  \begin{center}
    \subfigure[\label{fig:train-test-loss-pca-1} Evolution of $R, R^c, \Delta R$ during the training process.]{\includegraphics[width=0.32\textwidth]{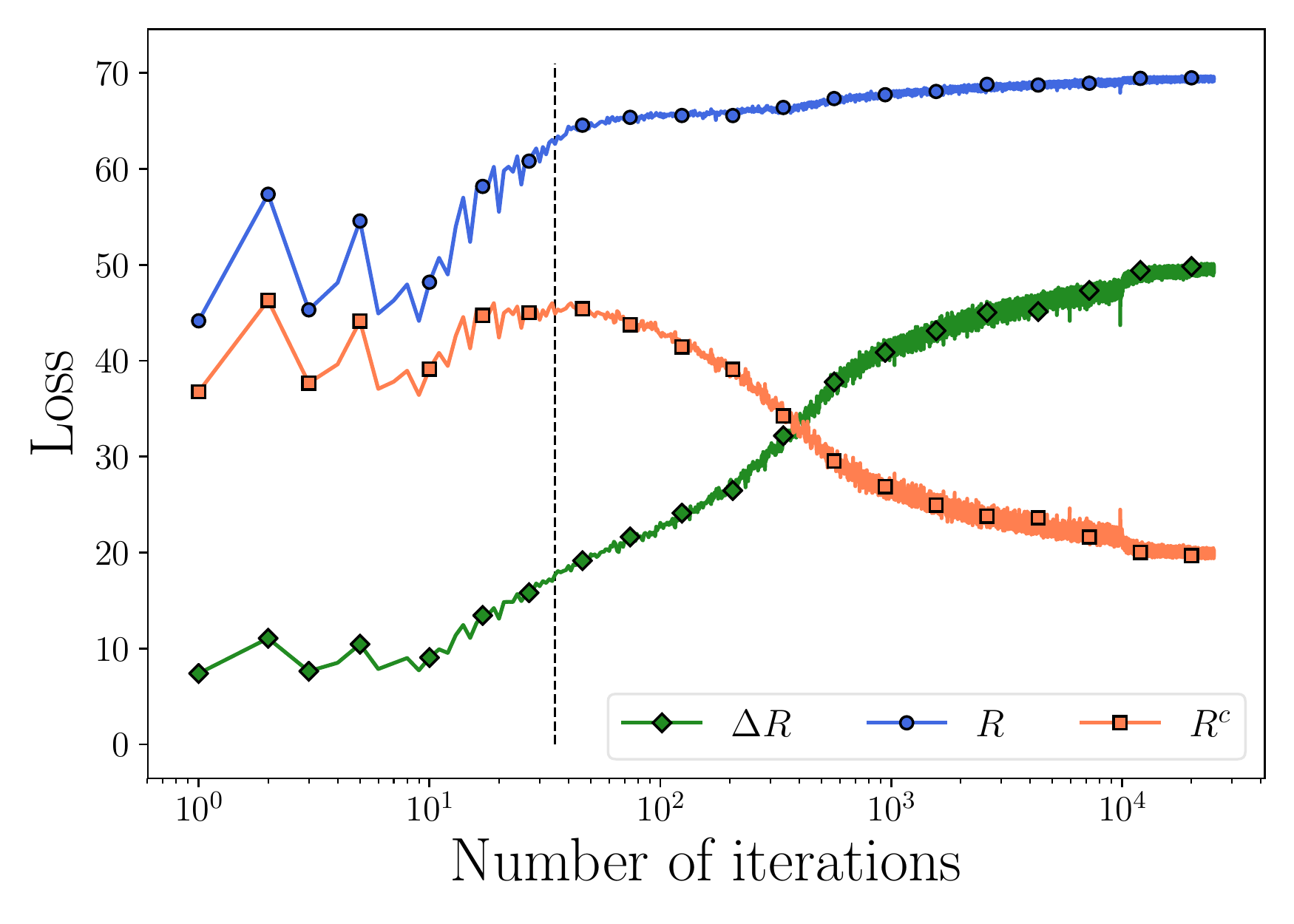}}
    \subfigure[\label{fig:train-test-loss-pca-2}Training loss versus testing loss.]{\includegraphics[width=0.32\textwidth]{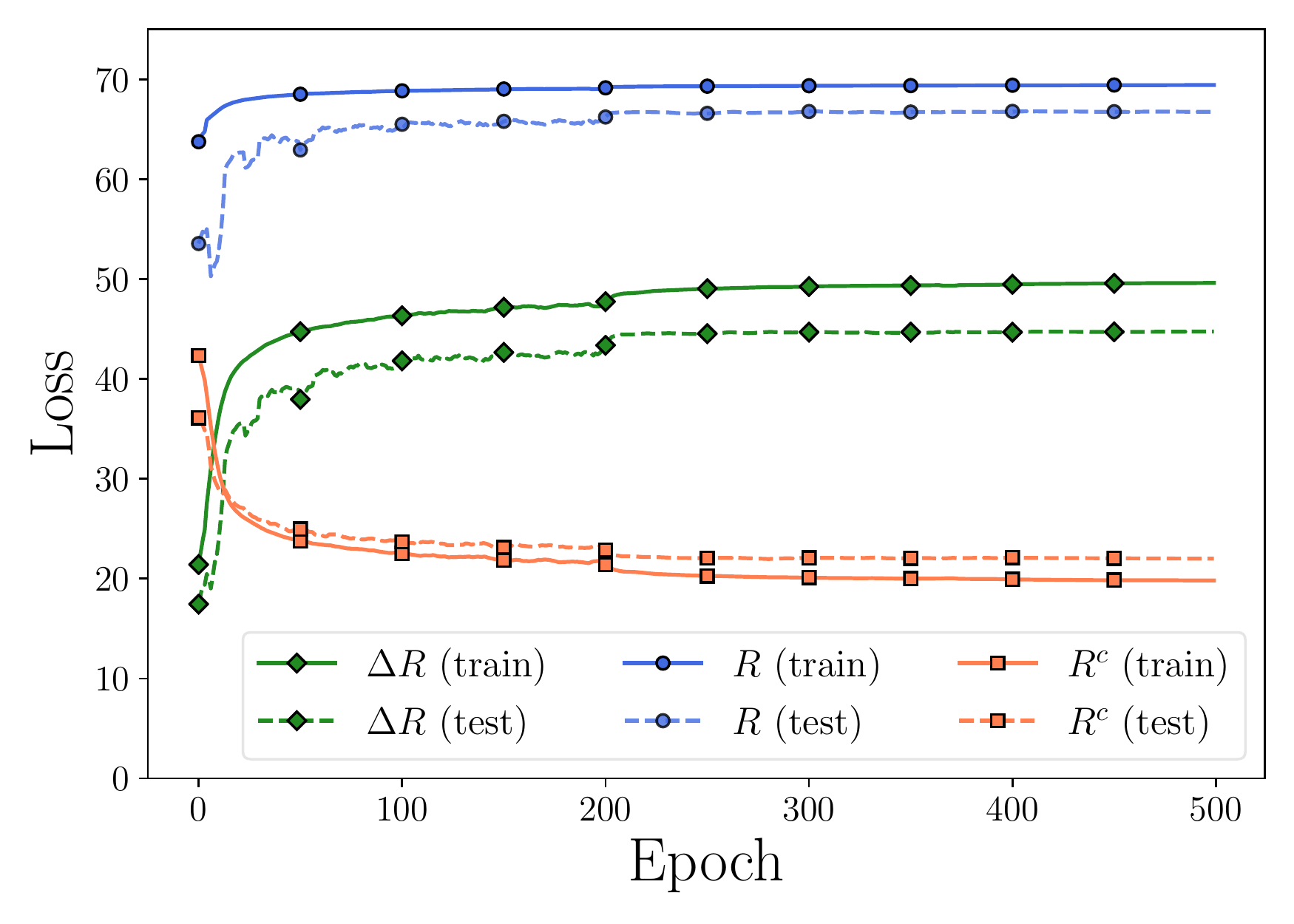}}
    \subfigure[\label{fig:train-test-loss-pca-3}PCA: {\small (\textbf{red}) overall data; (\textbf{blue}) individual classes}.]
    {\includegraphics[width=0.32\textwidth]{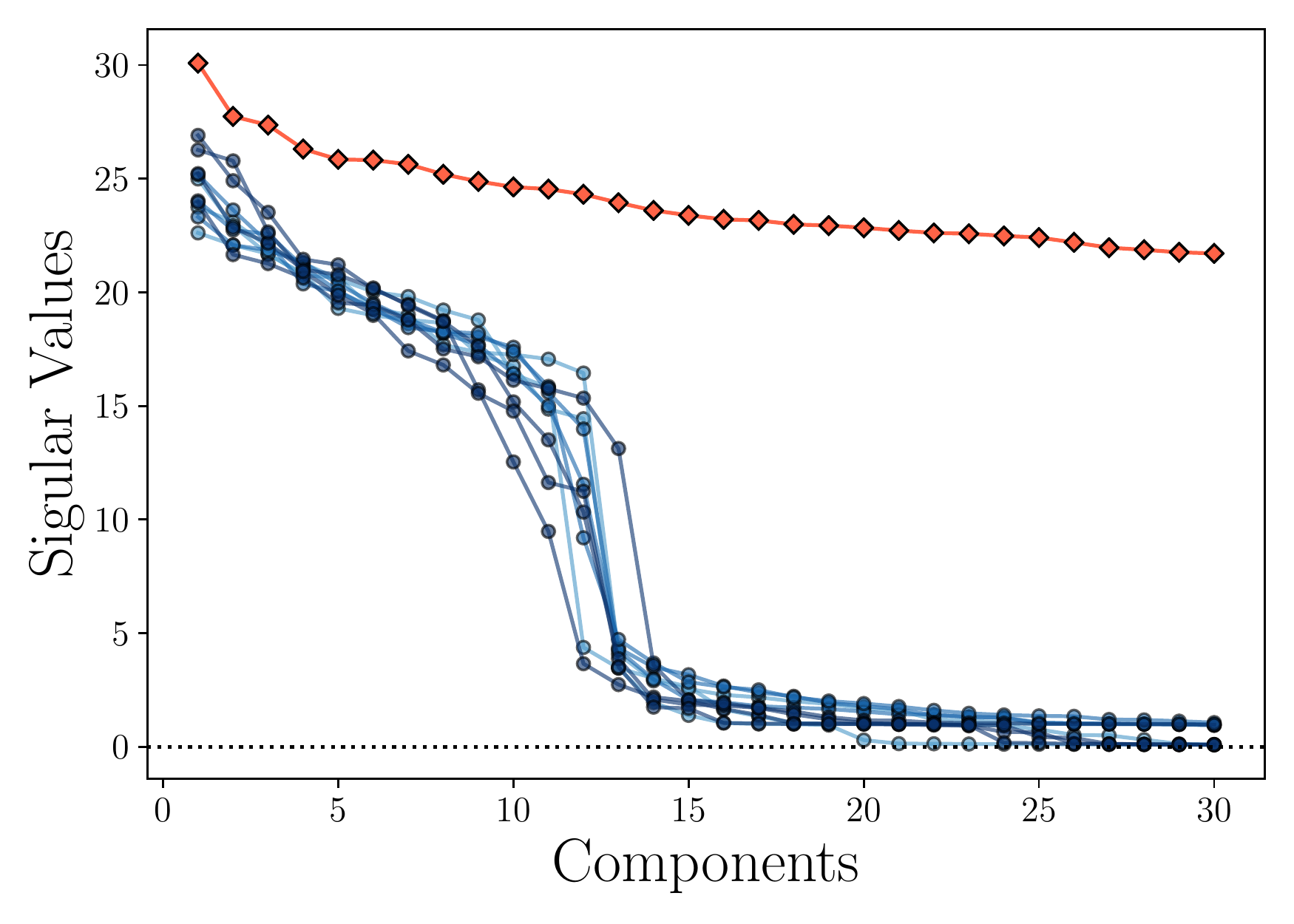}}
    \vskip -0.05in
    \caption{\small Evolution of the rates of MCR$^2$ in the training process and principal components of learned features.}
    \label{fig:train-test-loss-pca}
  \end{center}
  \vskip -0.1in
\end{figure*} 

\begin{figure*}[t]
  \begin{center}
    \subfigure[\label{fig:train-label-noise-1} $\Delta R\big(\Z(\theta), \bm{\Pi}, \epsilon\big)$.]{\includegraphics[width=0.32\textwidth]{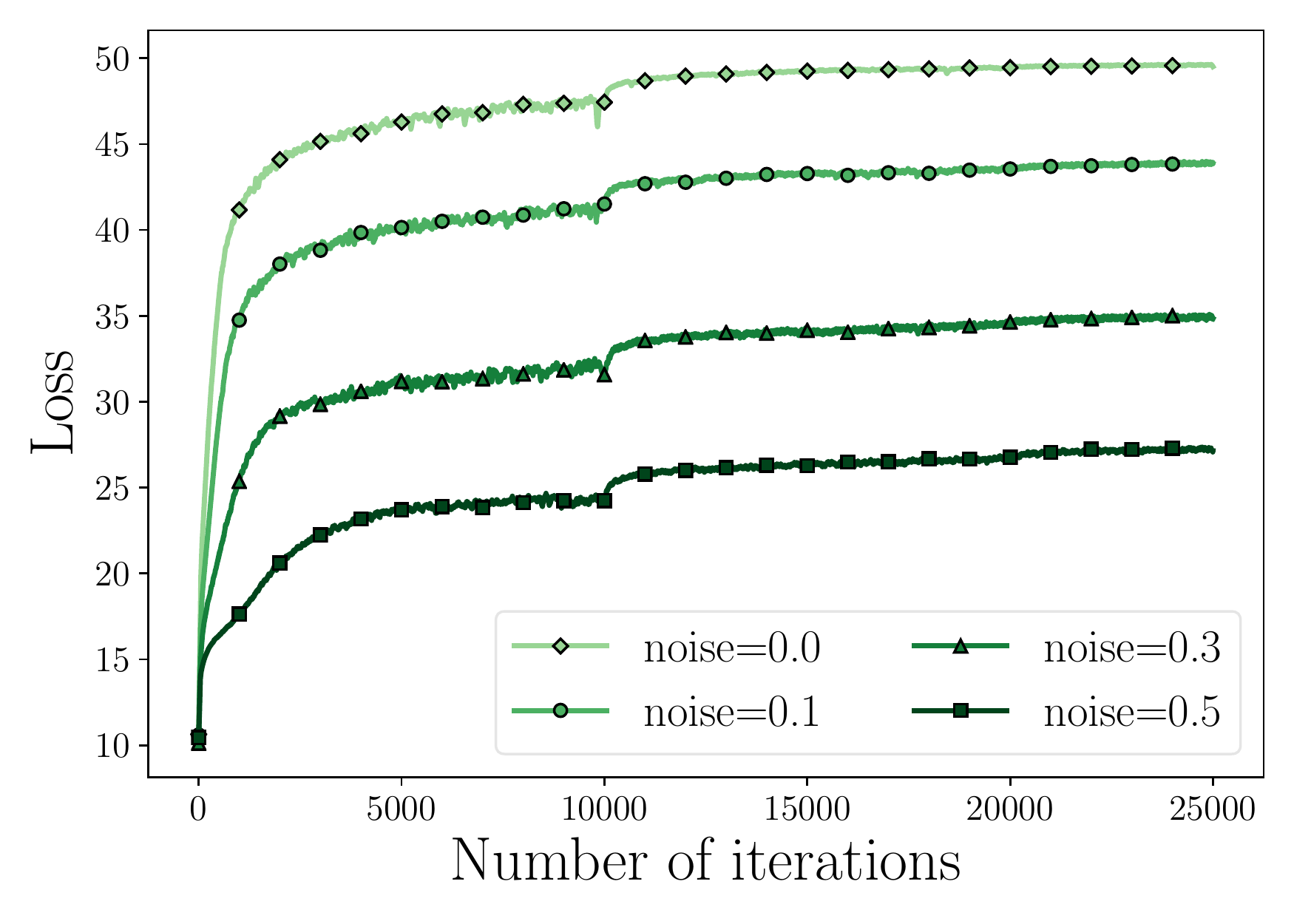}}
    \subfigure[\label{fig:train-label-noise-2} $R(\Z(\theta), \epsilon)$.]{\includegraphics[width=0.32\textwidth]{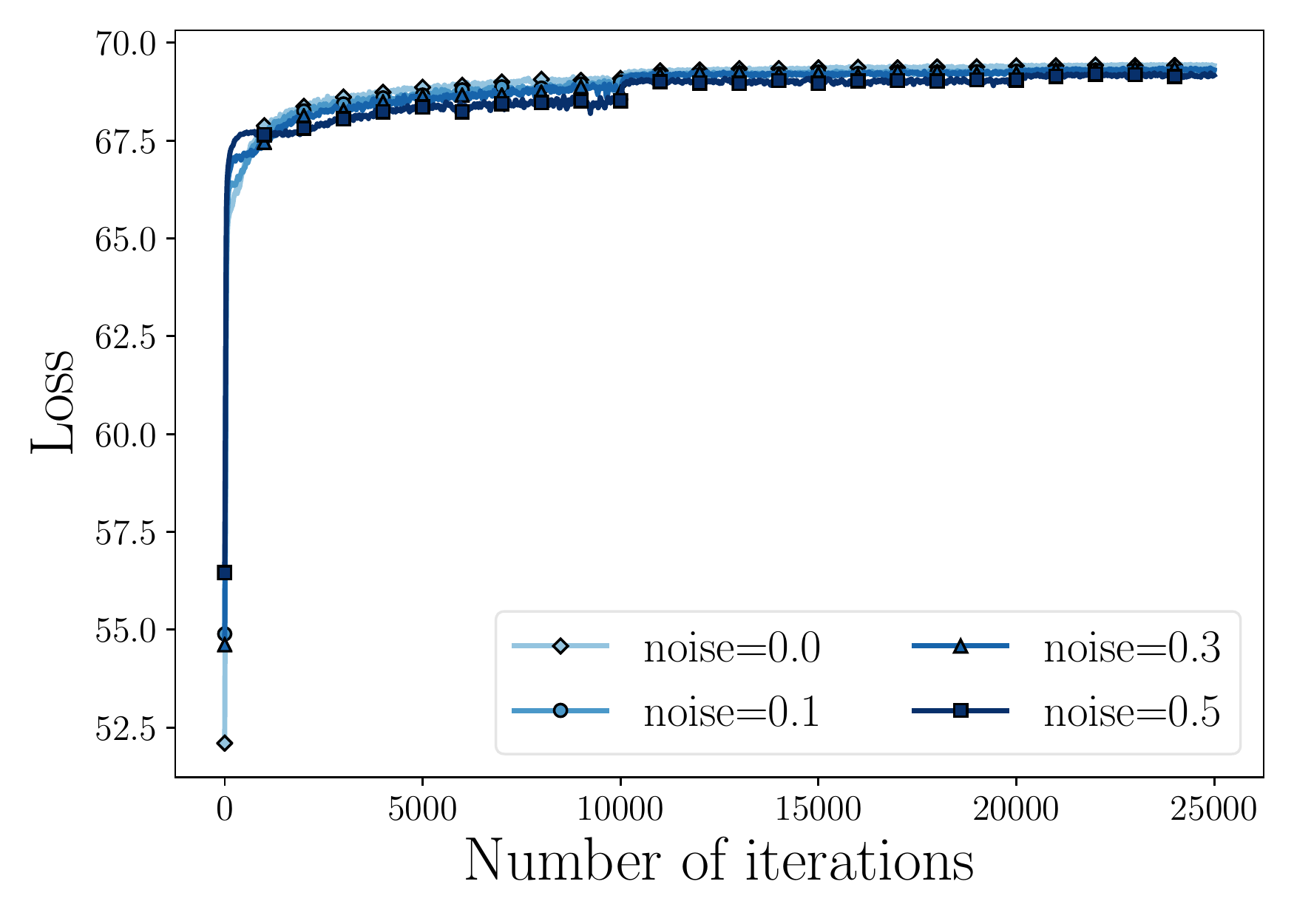}}
    \subfigure[\label{fig:train-label-noise-3}  $R^c(\Z(\theta),  \epsilon \mid \bm{\Pi})$.]{\includegraphics[width=0.32\textwidth]{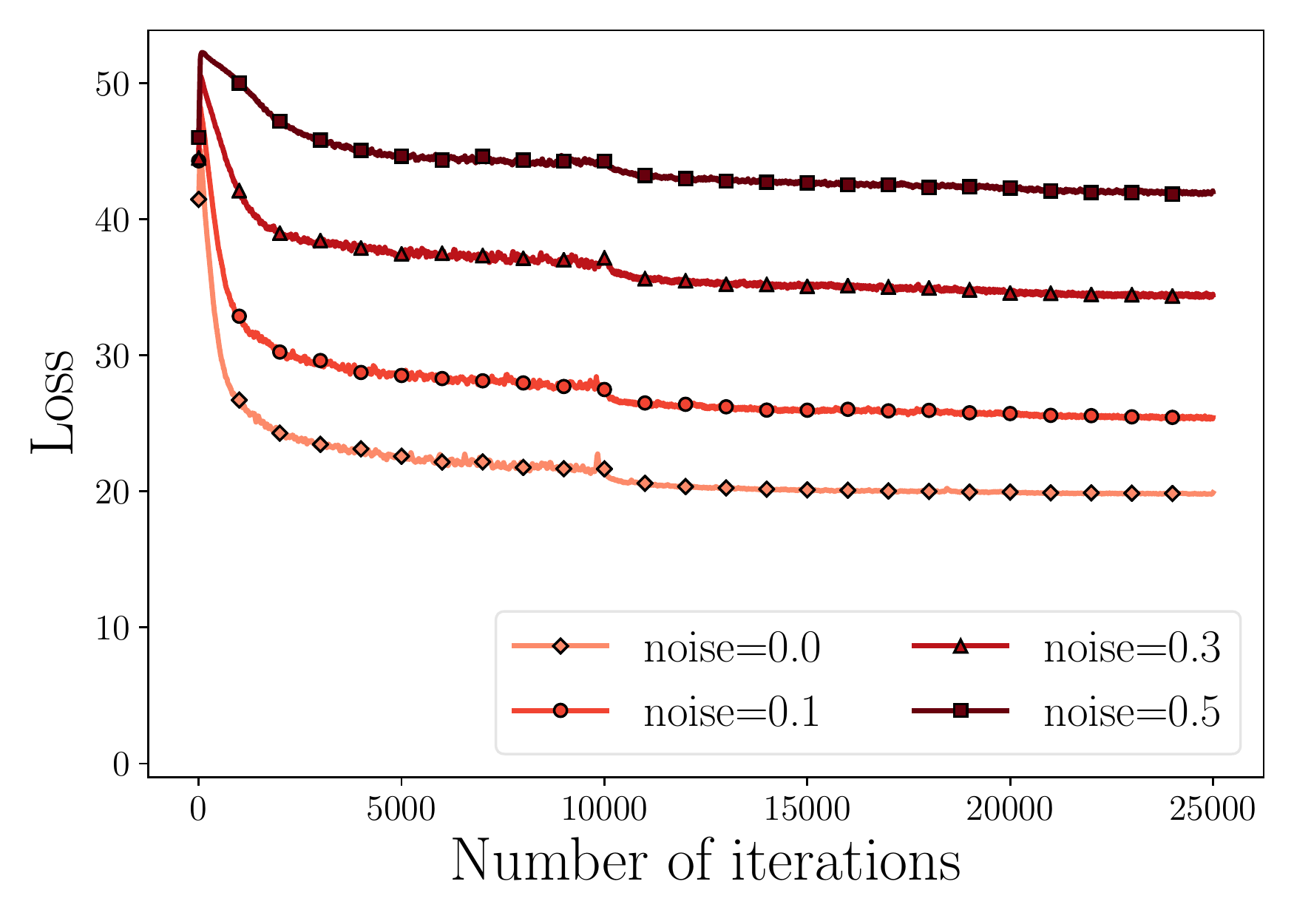}}
    \vskip -0.05in
    \caption{\small Evolution of rates $R, R^c, \Delta R$ of MCR$^2$ during training with corrupted labels.}
    \label{fig:train-label-noise}
  \end{center}
\vskip -0.15in
\end{figure*} 

\textbf{Experiments on real data.} We consider CIFAR10 dataset~\cite{krizhevsky2009learning} and ResNet-18~\cite{he2016deep} for $f(\cdot, \theta)$. We replace the last linear layer of ResNet-18 by a two-layer fully connected network with ReLU activation function such that the output dimension is 128. We set the mini-batch size as $m = 1,000$ and the precision parameter $\epsilon^2 = 0.5$. More results can be found in Appendix~\ref{sec:appendix-subsec-sup}.

Figure~\ref{fig:train-test-loss-pca-1} illustrates how the two rates and their difference (for both training and test data) evolves over epochs of training: After an initial phase, $R$ gradually increases while $R^c$ decreases, indicating that features $\bm Z$ are expanding as a whole while each class $\bm Z_j$ is being compressed.  
Figure~\ref{fig:train-test-loss-pca-3} shows the distribution of singular values per $\Z_j$ and Figure~\ref{fig:low-dim} (right) shows the angles of features sorted by class.  Compared to the geometric loss \cite{lezama2018ole}, our features are {\em not only orthogonal but also of much higher dimension}. We compare the singular values of representations, both overall data and individual classes, learned by using cross-entropy and MCR$^2$ in Figure~\ref{fig:pca-plot} and Figure~\ref{fig:heatmap-plot} in Appendix \ref{sec:subsec-pca}. We 
find that the representations learned by using MCR$^2$ loss are much more diverse than the ones learned by using cross-entropy loss. In addition, we find that we are able to select diverse images from the same class according to the ``principal'' components of the learned features (see Figure~\ref{fig:visual-class-2-8} and Figure~\ref{fig:visual-overall-data} in Appendix \ref{sec:subsec-pca}).

\textbf{Robustness to corrupted labels.} 
Because MCR$^2$ by design encourages richer representations that preserves intrinsic structures from the data $\X$, training relies less on class labels than traditional loss such as cross-entropy (CE). To verify this, we train the same network\footnote{Both CE and MCR$^2$ can have better performance by choosing larger models for our mapping.}  using both CE and MCR$^2$ with certain ratios of \textit{randomly corrupted} training labels.  Figure~\ref{fig:train-label-noise} illustrates the learning process: for different levels of corruption, while the rate for the whole set always converges to the same value, the rates for the classes are inversely proportional to the ratio of corruption, indicating our method only compress samples with valid labels. The classification results are summarized in Table~\ref{table:label-noise}. By applying \textit{exact the same} training parameters, MCR$^2$ is significantly more robust than CE, especially with  higher ratio of corrupted labels. This can be an advantage in the settings of self-supervised learning or constrastive learning when the grouping information can be very noisy. 

\begin{table}[h]
\begin{center}
\caption{\small Classification results with features learned with labels corrupted at different levels.}
\label{table:label-noise}
\begin{small}
\begin{sc}
\begin{tabular}{l | c c c c c }
\toprule
 & Ratio=0.1 &  Ratio=0.2 &  Ratio=0.3 &  Ratio=0.4 &  Ratio=0.5 \\
\midrule
CE Training & 90.91\% & 86.12\% & 79.15\% & 72.45\%  & 60.37\% \\
MCR$^2$ Training  & \textbf{91.16\%} & \textbf{89.70\%} & \textbf{88.18\%} & \textbf{86.66\%} &  \textbf{84.30\%}\\
\bottomrule
\end{tabular}
\end{sc}
\end{small}
\end{center}
\vspace{-2mm}
\end{table}

\subsection{Self-supervised Learning of Invariant Features} 
\textbf{Learning invariant features via rate reduction.} Motivated by self-supervised learning algorithms~\cite{lecun2004learning,kavukcuoglu2009learning,oord2018representation,he2019momentum,wu2018unsupervised}, we use the MCR$^2$ principle to learn representations that are {\em invariant} to certain class of transformations/augmentations, say $\mathcal T$ with a distribution $P_{\mathcal T}$. Given a mini-batch of data  $\{ \bm{x}_j\}_{j=1}^{k}$, we augment each sample $\x_j$ with $n$  transformations/augmentations $\{\tau_{i}(\cdot)\}_{i=1}^n$ randomly drawn from $P_{\mathcal{T}}$.
We simply label all the augmented samples $\X_j = [\tau_{1}(\bm{x}_j), \ldots, \tau_{n}(\bm{x}_j)]$ of $\bm x_j$ as the $j$-th class, and $\Z_j$ the corresponding learned features. Using this self-labeled data, we train our feature mapping $f(\cdot, \theta)$ the same way as the supervised setting above. For every mini-batch, the total number of samples for training is $m = k n$.

\textbf{Evaluation via clustering.} To learn invariant features, our formulation itself does {\em not} require the original samples $\x_j$ come from a fixed number of classes. For evaluation, we may train on a few classes and observe how the learned features facilitate classification or clustering of the data. A common method to evaluate learned features is to train an additional linear classifier~\cite{oord2018representation,he2019momentum}, with ground truth labels. But for our purpose, because we explicitly verify whether the so-learned invariant features have good subspace structures when the samples come from $k$ classes, we use an off-the-shelf subspace clustering algorithm EnSC~\cite{you2016oracle}, which is computationally efficient and is provably correct for data with well-structured subspaces.
We also use K-Means on the original data $\bm X$ as our baseline for comparison.  We use normalized mutual information (NMI), clustering accuracy (ACC), and adjusted rand index (ARI) for our evaluation metrics, see Appendix~\ref{sec:appendix-subsec-clustering} for their detailed definitions.

\textbf{Controlling dynamics of expansion and compression.} By directly optimizing the rate reduction $\Delta R = R - R^c$, we achieve $0.570$ clustering accuracy on CIFAR10 dataset, which is the second best result compared with previous methods. More details can be found in Appendix~\ref{sec:appendix-subsec-selfsup}. Empirically, we observe that, without class labels, the overall \textit{coding rate} $R$ expands quickly and the MCR$^2$ loss saturates (at a local maximum), see Fig~\ref{fig:train-test-loss-selfsup-mcr}.  Our experience suggests that learning a good representation from unlabeled data might be too ambitious when directly optimizing the original $\Delta R$. Nonetheless, from the  geometric meaning of $R$ and $R^c$, one can design a different learning strategy by controlling the dynamics of expansion and compression differently during training. 
For instance, we may re-scale the rate by replacing $R(\Z, \epsilon)$ with $\widetilde R(\Z,\epsilon) \doteq \frac{1}{2 \gamma_1}\log\det(\I + \frac{\gamma_2 d}{ m\epsilon^{2}}\Z\Z^{\top})$. With $\gamma_1 = \gamma_2 = k$, the learning dynamics change from Fig~\ref{fig:train-test-loss-selfsup-mcr} to Fig~\ref{fig:train-test-loss-selfsup-mcr-ctrl}: All features are first compressed then gradually expand. We denote the controlled MCR$^2$ training by MCR$^2$-{\scriptsize CTRL}.

\begin{figure*}[t]
\subcapcentertrue
\begin{center}
    \subfigure[\label{fig:train-test-loss-selfsup-mcr}MCR$^2$]{\includegraphics[width=0.45\textwidth]{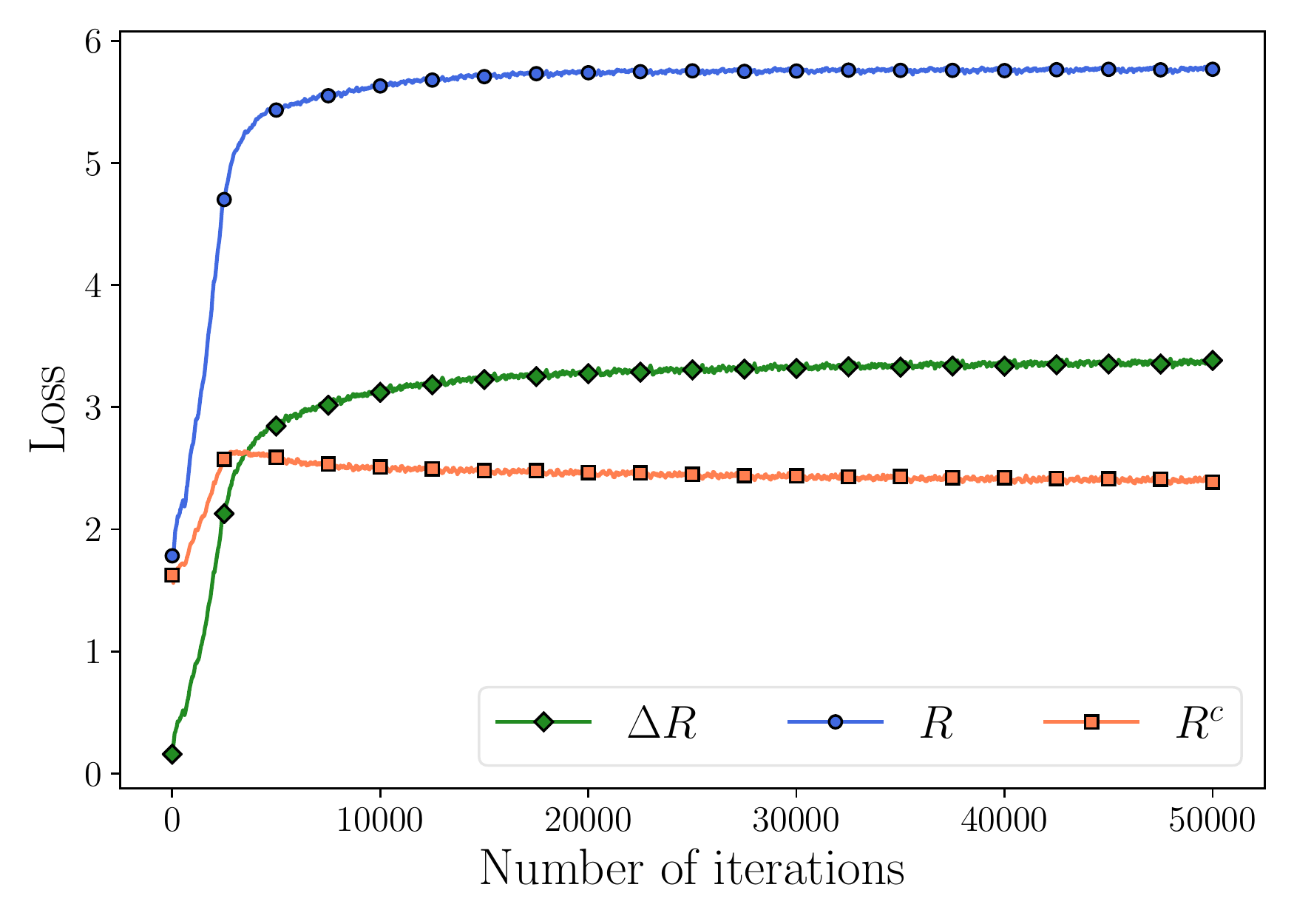}} 
    \hspace{5mm}
    \subfigure[\label{fig:train-test-loss-selfsup-mcr-ctrl}MCR$^2$-{\scriptsize CTRL}.]{\includegraphics[width=0.45\textwidth]{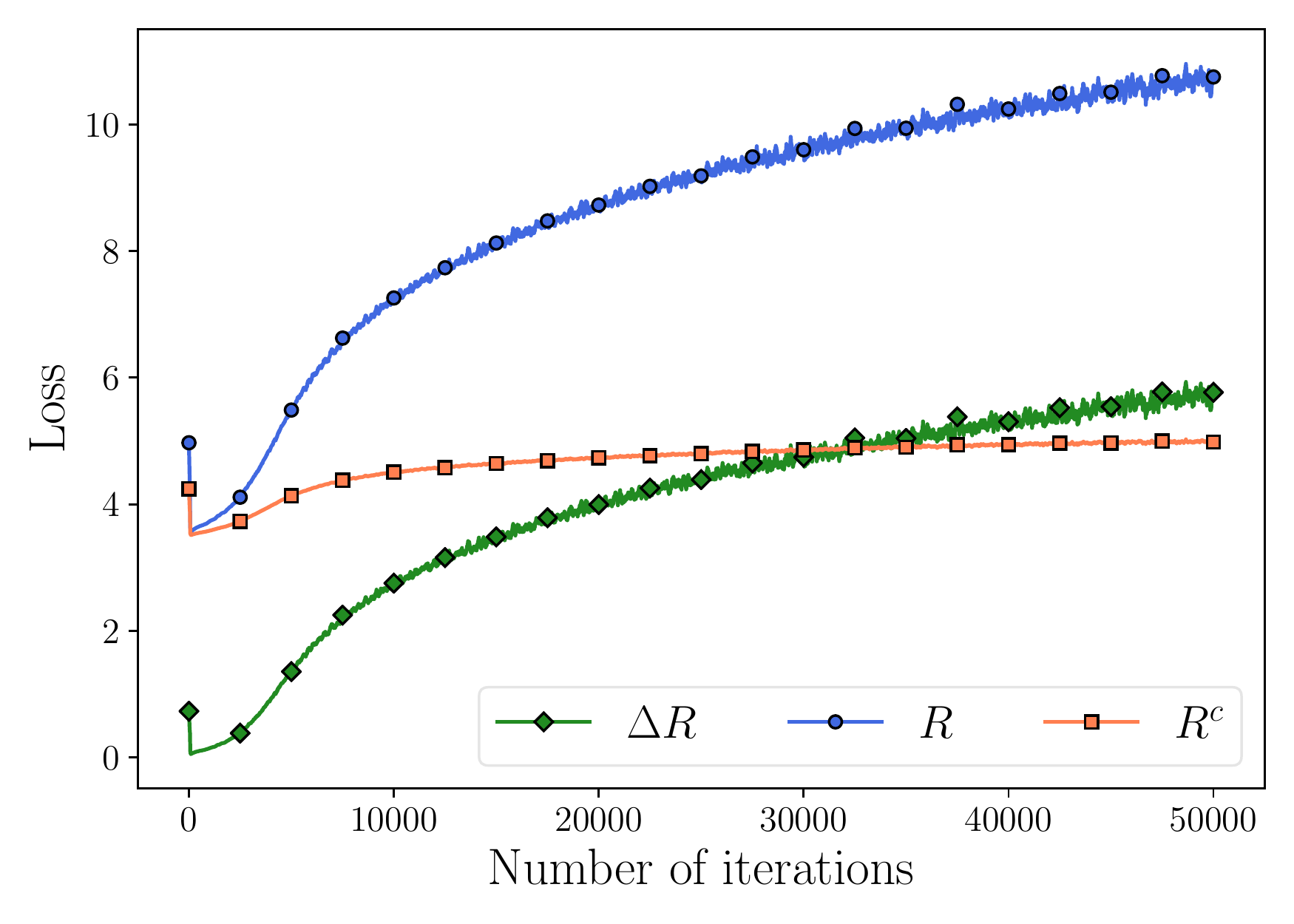}}
    \vskip -0.05in
    \caption{\small Evolution of the rates of (\textbf{left}) MCR$^2$   and (\textbf{right}) MCR$^2$-{\scriptsize CTRL} in the training process in the self-supervised setting on CIFAR10 dataset.}
\label{fig:training-dynamic-controlling-compare}
\end{center}
\vskip -0.1in
\end{figure*}

\textbf{Experiments on real data.} Similar to the supervised learning setting, we train {\em exactly the same} ResNet-18 network on the CIFAR10, CIFAR100, and STL10~\cite{coates2011analysis} datasets.
We set the mini-batch size as $k = 20$, number of augmentations for each sample as $n=50$ and the precision parameter as $\epsilon^2 = 0.5$. Table \ref{table:clustering} shows the results of the proposed MCR$^2$-{\scriptsize CTRL}  in comparison with methods JULE~\cite{yang2016joint}, RTM~\cite{nina2019decoder},  DEC~\cite{xie2016unsupervised}, DAC~\cite{chang2017deep}, and DCCM~\cite{wu2019deep} that have achieved the best results on these datasets. {Surprisingly, without utilizing any inter-class or inter-sample information and heuristics on the data, the invariant features learned by our method with augmentations alone achieves a better performance over other highly engineered clustering methods.} More ablation studies can be found in Appendix~\ref{sec:appendix-subsec-clustering}. 

Nevertheless, compared to the representations learned in the supervised setting where the optimal partition $\bm{\Pi}$ in \eqref{eqn:maximal-rate-reduction} is initialized by correct class information, the representations here learned with self-supervised  classes are far from being optimal\footnote{We find that the supervised learned representation on CIFAR10 in Section \ref{sec:supervised-experiments} can easily achieve a clustering accuracy over 99\% on the entire training data.} -- they at best correspond to local maxima of the MCR$^2$ objective \eqref{eqn:maximal-rate-reduction} when  $\theta$ and $\bm{\Pi}$ are {\em jointly optimized}. It remains wide open how to design better optimization strategies and dynamics to learn from unlabelled or partially-labelled data better representations (and the associated partitions) close to the global maxima of the MCR$^2$ objective \eqref{eqn:maximal-rate-reduction}.

\begin{table}[t]
\begin{center}
\caption{\small Clustering results on CIFAR10, CIFAR100, and STL10 datasets.}
\label{table:clustering}
\begin{small}
\begin{sc}
\begin{tabular}{l l | c c c c c c c c c }
\toprule
Dataset & Metric & K-Means & JULE &  RTM    & DEC  & DAC  &  DCCM & MCR$^2$-{\scriptsize Ctrl} \\
\midrule
\multirow{3}{*}{CIFAR10}    
& NMI & 0.087 & 0.192 & 0.197 & 0.257 & 0.395  & 0.496 & \textbf{0.630}       \\
& ACC & 0.229 & 0.272 & 0.309 & 0.301 & 0.521  & 0.623 & \textbf{0.684}       \\
& ARI & 0.049 & 0.138 & 0.115 & 0.161 & 0.305  & 0.408 & \textbf{0.508}       \\
\midrule
\multirow{3}{*}{CIFAR100}    
& NMI & 0.084 & 0.103 & - & 0.136 & 0.185  & 0.285 & \textbf{0.362}       \\
& ACC & 0.130 & 0.137 & - & 0.185 & 0.237  & 0.327 & \textbf{0.347}       \\
& ARI & 0.028 & 0.033 & - & 0.050 & 0.087  & \textbf{0.173} & {0.167}       \\
\midrule
\multirow{3}{*}{STL10}    
& NMI & 0.124 & 0.182 & - & 0.276 &  0.365 & 0.376 &  \textbf{0.446}       \\
& ACC & 0.192 & 0.182 & - & 0.359 & 0.470 & 0.482  &  \textbf{0.491}       \\
& ARI & 0.061 & 0.164 & - & 0.186 & 0.256 & 0.262 &  \textbf{0.290}       \\
\bottomrule
\end{tabular}
\end{sc}
\end{small}
\end{center}
\vskip -0.2in
\end{table}

\section{Conclusion and Future Work}
This work provides rigorous theoretical justifications and clear empirical evidences for why the maximal coding rate reduction (MCR$^2$) is a fundamental principle for learning discriminative low-dim representations in almost all learning settings. It unifies and explains existing effective frameworks and heuristics widely practiced in the (deep) learning literature. It remains open {\em why} MCR$^2$ is robust to label noises in the supervised setting,  {\em why} self-learned features with MCR$^2$ alone are effective for clustering, and {\em how} in future practice instantiations of this principle can be systematically harnessed to further improve clustering or classification tasks. 

We believe that MCR$^2$ gives a principled and practical objective for (deep) learning and can potentially lead to better design operators and architectures of a deep network. A potential direction is to  monitor quantitatively the amount of rate reduction $\Delta R$ gained through every layer of the deep network. By optimizing the rate reduction through the network layers, it is no longer engineered as a ``black box.'' 

On the learning theoretical aspect, although this work has demonstrated only with mixed subspaces, this principle applies to any mixed distributions or structures, for which configurations that achieve maximal rate reduction are of independent theoretical interest. Another interesting note is that the MCR$^2$ formulation goes beyond the supervised multi-class learning setting often studied through empirical risk minimization (ERM)   \cite{PAC-multiclass-2015}. It is more related to the expectation maximization (EMX) framework \cite{bendavid2017learning}, in which the notion of ``compression'' plays a crucial role for purely theoretical analysis. We hope this work provides a good connection between machine learning theory and its practice.

\section*{Acknowledgements}
Yi would like to thank Professor Yann LeCun of New York University for having a stimulating discussion in his NYU office last November about the search for a proper ``energy'' function for features to be learned by a deep network \cite{lecun2006tutorial}, during the preparation of a joint proposal. Professor John Wright of Columbia University, who was the leading author of the first two papers on the lossy coding approach to clustering and classification \cite{ma2007segmentation,wright2008classification}, has provided valuable insights and suggestions during germination of this work. We would like to thank Professor Emmanuel Cand{\'e}s of Stanford University for having an online discussion with Yi, during the pandemic, about the rate distortion function for low-dimensional structures. Yi also likes to thank Professor Zhi Ding of UC Davis for discussing the role of rate distortion and lossy coding in communications and information theory and for providing us some pertinent references. 

Professor Shankar Sastry of UC Berkeley has always encouraged us to look into fundamental connections between low-dimensional subspaces and deep learning from the perspective of Generalized PCA \cite{GPCA}. Coincidentally, this work was partly motivated to address an inquiry from Professor Ruzena Bajcsy of UC Berkeley earlier this year on how to clarify the role of ``latent features'' learned in a network in a principled manner. We would also like to thank Professor Jiantao Jiao and Professor Jacob Steinhardt of UC Berkeley for extensive discussions about how to make deep learning robust. During the preparation of this manuscript, Dr. Harry Shum, who collaborated with Yi on lossy coding during his visit to Microsoft Research Asia in 2007 \cite{wright2008classification}, has given excellent suggestions on how to better visualize the learned features, leading to some of the interesting illustrations in this work. 

Yaodong would like to thank Zitong Yang and Xili Dai for helpful discussions on the $\log \det (\cdot)$ function. Ryan would like to thank Yuexiang Zhai for helpful discussions on learning subspace structures. Last but not the least,  we are very grateful for Xili Dai and Professor Xiaojun Yuan of UESTC and Professor Hao Chen of UC Davis who have generously provided us their GPU clusters to help us conduct the extensive experiments reported in this paper.

\bibliographystyle{alpha}
\bibliography{reference}

\newcommand{\etalchar}[1]{$^{#1}$}
\begin{thebibliography}{BDHM{\etalchar{+}}17}

\bibitem[ACB17]{arjovsky2017wasserstein}
Martin Arjovsky, Soumith Chintala, and L{\'e}on Bottou.
\newblock Wasserstein generative adversarial networks.
\newblock In {\em International Conference on Machine Learning}, pages
  214--223, 2017.

\bibitem[BDHM{\etalchar{+}}17]{bendavid2017learning}
Shai Ben-David, Pavel Hrubes, Shay Moran, Amir Shpilka, and Amir Yehudayoff.
\newblock A learning problem that is independent of the set theory {ZFC}
  axioms, 2017.

\bibitem[BH89]{Baldi89}
Pierre Baldi and Kurt Hornik.
\newblock Neural networks and principal component analysis: Learning from
  examples without local minima.
\newblock {\em Neural networks}, 2(1):53--58, 1989.

\bibitem[BV04]{boyd2004convex}
Stephen~P Boyd and Lieven Vandenberghe.
\newblock {\em Convex optimization}.
\newblock Cambridge university press, 2004.

\bibitem[CGW19]{cohen2019general}
Taco~S Cohen, Mario Geiger, and Maurice Weiler.
\newblock A general theory of equivariant cnns on homogeneous spaces.
\newblock In {\em Advances in Neural Information Processing Systems}, pages
  9142--9153, 2019.

\bibitem[CNL11]{coates2011analysis}
Adam Coates, Andrew Ng, and Honglak Lee.
\newblock An analysis of single-layer networks in unsupervised feature
  learning.
\newblock In {\em International Conference on Artificial Intelligence and
  Statistics}, pages 215--223, 2011.

\bibitem[CT06]{Thomas-Cover}
Thomas~M. Cover and Joy~A. Thomas.
\newblock {\em Elements of Information Theory (Wiley Series in
  Telecommunications and Signal Processing)}.
\newblock Wiley-Interscience, USA, 2006.

\bibitem[CW16]{Cohen-ICML-2016}
Taco Cohen and Max Welling.
\newblock Group equivariant convolutional networks.
\newblock In {\em International Conference on Machine Learning}, pages
  2990--2999, 2016.

\bibitem[CWM{\etalchar{+}}17]{chang2017deep}
Jianlong Chang, Lingfeng Wang, Gaofeng Meng, Shiming Xiang, and Chunhong Pan.
\newblock Deep adaptive image clustering.
\newblock In {\em Proceedings of the IEEE International Conference on Computer
  Vision}, pages 5879--5887, 2017.

\bibitem[DSBDSS15]{PAC-multiclass-2015}
Amit Daniely, Sivan Sabato, Shai Ben-David, and Shai Shalev-Shwartz.
\newblock Multiclass learnability and the {ERM} principle.
\newblock {\em J. Mach. Learn. Res.}, 16(1):2377–2404, January 2015.

\bibitem[FHB03]{fazel2003log-det}
Maryam Fazel, Haitham Hindi, and Stephen~P Boyd.
\newblock Log-det heuristic for matrix rank minimization with applications to
  hankel and euclidean distance matrices.
\newblock In {\em Proceedings of the 2003 American Control Conference, 2003.},
  volume~3, pages 2156--2162. IEEE, 2003.

\bibitem[GBC16]{goodfellow2016deep}
Ian Goodfellow, Yoshua Bengio, and Aaron Courville.
\newblock {\em Deep Learning}.
\newblock MIT Press, 2016.
\newblock \url{http://www.deeplearningbook.org}.

\bibitem[GPAM{\etalchar{+}}14]{goodfellow2014generative}
Ian Goodfellow, Jean Pouget-Abadie, Mehdi Mirza, Bing Xu, David Warde-Farley,
  Sherjil Ozair, Aaron Courville, and Yoshua Bengio.
\newblock Generative adversarial nets.
\newblock In {\em Advances in Neural Information Processing Systems}, pages
  2672--2680, 2014.

\bibitem[HA85]{hubert1985comparing}
Lawrence Hubert and Phipps Arabie.
\newblock Comparing partitions.
\newblock {\em Journal of Classification}, 2(1):193--218, 1985.

\bibitem[HCL06]{hadsell2006dimensionality}
Raia Hadsell, Sumit Chopra, and Yann LeCun.
\newblock Dimensionality reduction by learning an invariant mapping.
\newblock In {\em 2006 IEEE Computer Society Conference on Computer Vision and
  Pattern Recognition, CVPR 2006}, pages 1735--1742, 2006.

\bibitem[HFLM{\etalchar{+}}18]{hjelm2018learning}
R~Devon Hjelm, Alex Fedorov, Samuel Lavoie-Marchildon, Karan Grewal, Phil
  Bachman, Adam Trischler, and Yoshua Bengio.
\newblock Learning deep representations by mutual information estimation and
  maximization.
\newblock {\em arXiv preprint arXiv:1808.06670}, 2018.

\bibitem[HFW{\etalchar{+}}19]{he2019momentum}
Kaiming He, Haoqi Fan, Yuxin Wu, Saining Xie, and Ross Girshick.
\newblock Momentum contrast for unsupervised visual representation learning.
\newblock {\em arXiv preprint arXiv:1911.05722}, 2019.

\bibitem[HZRS16]{he2016deep}
Kaiming He, Xiangyu Zhang, Shaoqing Ren, and Jian Sun.
\newblock Deep residual learning for image recognition.
\newblock In {\em Proceedings of the IEEE Conference on Computer Vision and
  Pattern Recognition}, pages 770--778, 2016.

\bibitem[IS15]{ioffe2015batch}
Sergey Ioffe and Christian Szegedy.
\newblock Batch normalization: Accelerating deep network training by reducing
  internal covariate shift.
\newblock {\em arXiv preprint arXiv:1502.03167}, 2015.

\bibitem[Jol02]{Jolliffe2002}
Ian~T Jolliffe.
\newblock {\em Principal {C}omponent {A}nalysis}.
\newblock Springer-Verlag, 2nd edition, 2002.

\bibitem[JZL{\etalchar{+}}17]{Ji-NIPS2017}
Pan Ji, Tong Zhang, Hongdong Li, Mathieu Salzmann, and Ian Reid.
\newblock Deep subspace clustering networks.
\newblock In {\em Advances in Neural Information Processing Systems}, pages
  24--33, 2017.

\bibitem[KPCC15]{kang2015logdet}
Zhao Kang, Chong Peng, Jie Cheng, and Qiang Cheng.
\newblock Logdet rank minimization with application to subspace clustering.
\newblock {\em Computational Intelligence and Neuroscience}, 2015, 2015.

\bibitem[Kra91]{Kramer1991NonlinearPC}
Mark~A Kramer.
\newblock Nonlinear principal component analysis using autoassociative neural
  networks.
\newblock {\em AIChE Journal}, 37(2):233--243, 1991.

\bibitem[KRFL09]{kavukcuoglu2009learning}
Koray Kavukcuoglu, Marc'Aurelio Ranzato, Rob Fergus, and Yann LeCun.
\newblock Learning invariant features through topographic filter maps.
\newblock In {\em 2009 IEEE Conference on Computer Vision and Pattern
  Recognition}, pages 1605--1612. IEEE, 2009.

\bibitem[Kri09]{krizhevsky2009learning}
Alex Krizhevsky.
\newblock Learning multiple layers of features from tiny images.
\newblock 2009.

\bibitem[KTVK18]{kolchinsky2018caveats-ICLR2018}
Artemy Kolchinsky, Brendan~D Tracey, and Steven Van~Kuyk.
\newblock Caveats for information bottleneck in deterministic scenarios.
\newblock {\em arXiv preprint arXiv:1808.07593}, 2018.

\bibitem[LCH{\etalchar{+}}06]{lecun2006tutorial}
Yann LeCun, Sumit Chopra, Raia Hadsell, M~Ranzato, and F~Huang.
\newblock A tutorial on energy-based learning.
\newblock {\em Predicting structured data}, 1(0), 2006.

\bibitem[LHB04]{lecun2004learning}
Yann LeCun, Fu~Jie Huang, and Leon Bottou.
\newblock Learning methods for generic object recognition with invariance to
  pose and lighting.
\newblock In {\em Proceedings of the 2004 IEEE Computer Society Conference on
  Computer Vision and Pattern Recognition, 2004. CVPR 2004.}, volume~2, pages
  II--104. IEEE, 2004.

\bibitem[LPZM20]{li2020multimodal-IJCV}
Ke~Li, Shichong Peng, Tianhao Zhang, and Jitendra Malik.
\newblock Multimodal image synthesis with conditional implicit maximum
  likelihood estimation.
\newblock {\em International Journal of Computer Vision}, 2020.

\bibitem[LQMS18]{lezama2018ole}
Jos{\'e} Lezama, Qiang Qiu, Pablo Mus{\'e}, and Guillermo Sapiro.
\newblock {OLE}: Orthogonal low-rank embedding-a plug and play geometric loss
  for deep learning.
\newblock In {\em Proceedings of the IEEE Conference on Computer Vision and
  Pattern Recognition}, pages 8109--8118, 2018.

\bibitem[MDHW07]{ma2007segmentation}
Yi~Ma, Harm Derksen, Wei Hong, and John Wright.
\newblock Segmentation of multivariate mixed data via lossy data coding and
  compression.
\newblock {\em IEEE Transactions on Pattern Analysis and Machine Intelligence},
  29(9):1546--1562, 2007.

\bibitem[MWHK19]{rate-distortion}
Jan MacDonald, Stephan W{\"{a}}ldchen, Sascha Hauch, and Gitta Kutyniok.
\newblock A rate-distortion framework for explaining neural network decisions.
\newblock {\em CoRR}, abs/1905.11092, 2019.

\bibitem[NMM19]{nina2019decoder}
Oliver Nina, Jamison Moody, and Clarissa Milligan.
\newblock A decoder-free approach for unsupervised clustering and manifold
  learning with random triplet mining.
\newblock In {\em Proceedings of the IEEE International Conference on Computer
  Vision Workshops}, pages 0--0, 2019.

\bibitem[NW06]{nocedal2006numerical}
Jorge Nocedal and Stephen~J. Wright.
\newblock {\em Numerical Optimization}.
\newblock Springer, New York, NY, USA, second edition, 2006.

\bibitem[OLV18]{oord2018representation}
Aaron van~den Oord, Yazhe Li, and Oriol Vinyals.
\newblock Representation learning with contrastive predictive coding.
\newblock {\em arXiv preprint arXiv:1807.03748}, 2018.

\bibitem[PFX{\etalchar{+}}17]{peng2017deep}
Xi~Peng, Jiashi Feng, Shijie Xiao, Jiwen Lu, Zhang Yi, and Shuicheng Yan.
\newblock Deep sparse subspace clustering.
\newblock {\em arXiv preprint arXiv:1709.08374}, 2017.

\bibitem[PGM{\etalchar{+}}19]{paszke2019pytorch}
Adam Paszke, Sam Gross, Francisco Massa, Adam Lerer, James Bradbury, Gregory
  Chanan, Trevor Killeen, Zeming Lin, Natalia Gimelshein, Luca Antiga, et~al.
\newblock Pytorch: An imperative style, high-performance deep learning library.
\newblock In {\em Advances in Neural Information Processing Systems}, pages
  8024--8035, 2019.

\bibitem[Qui86]{decision-trees}
J.~R. Quinlan.
\newblock Induction of decision trees.
\newblock {\em Mach. Learn.}, 1(1):81–106, March 1986.

\bibitem[QYW{\etalchar{+}}20]{ISOnet}
Haozhi Qi, Chong You, Xiaolong Wang, Yi~Ma, and Jitendra Malik.
\newblock Deep isometric learning for visual recognition.
\newblock In {\em Proceedings of the International Conference on International
  Conference on Machine Learning}, 2020.

\bibitem[RVM{\etalchar{+}}11]{contractive-ICML11}
Salah Rifai, Pascal Vincent, Xavier Muller, Xavier Glorot, and Yoshua Bengio.
\newblock Contractive auto-encoders: Explicit invariance during feature
  extraction.
\newblock In {\em In International Conference on Machine Learning}, page
  833–840, 2011.

\bibitem[SG02]{strehl2002cluster}
Alexander Strehl and Joydeep Ghosh.
\newblock Cluster ensembles---a knowledge reuse framework for combining
  multiple partitions.
\newblock {\em Journal of Machine Learning Research}, 3(Dec):583--617, 2002.

\bibitem[SZ15]{simonyan2014very}
Karen Simonyan and Andrew Zisserman.
\newblock Very deep convolutional networks for large-scale image recognition.
\newblock In {\em International Conference on Learning Representations}, 2015.

\bibitem[TZ15]{Tishby-ITW2015}
Naftali Tishby and Noga Zaslavsky.
\newblock Deep learning and the information bottleneck principle.
\newblock In {\em 2015 IEEE Information Theory Workshop (ITW)}, pages 1--5.
  IEEE, 2015.

\bibitem[VMS16]{GPCA}
Rene Vidal, Yi~Ma, and S.~S. Sastry.
\newblock {\em Generalized Principal Component Analysis}.
\newblock Springer Publishing Company, Incorporated, 1st edition, 2016.

\bibitem[WDCB05]{Wakin-2005}
Michael~B Wakin, David~L Donoho, Hyeokho Choi, and Richard~G Baraniuk.
\newblock The multiscale structure of non-differentiable image manifolds.
\newblock In {\em Proceedings of SPIE, the International Society for Optical
  Engineering}, pages 59141B--1, 2005.

\bibitem[WLW{\etalchar{+}}19]{wu2019deep}
Jianlong Wu, Keyu Long, Fei Wang, Chen Qian, Cheng Li, Zhouchen Lin, and
  Hongbin Zha.
\newblock Deep comprehensive correlation mining for image clustering.
\newblock In {\em Proceedings of the IEEE International Conference on Computer
  Vision}, pages 8150--8159, 2019.

\bibitem[WTL{\etalchar{+}}08]{wright2008classification}
John Wright, Yangyu Tao, Zhouchen Lin, Yi~Ma, and Heung-Yeung Shum.
\newblock Classification via minimum incremental coding length (micl).
\newblock In {\em Advances in Neural Information Processing Systems}, pages
  1633--1640, 2008.

\bibitem[WXYL18]{wu2018unsupervised}
Zhirong Wu, Yuanjun Xiong, Stella~X Yu, and Dahua Lin.
\newblock Unsupervised feature learning via non-parametric instance
  discrimination.
\newblock In {\em Proceedings of the IEEE Conference on Computer Vision and
  Pattern Recognition}, pages 3733--3742, 2018.

\bibitem[XGD{\etalchar{+}}17]{xie2017aggregated}
Saining Xie, Ross Girshick, Piotr Doll{\'a}r, Zhuowen Tu, and Kaiming He.
\newblock Aggregated residual transformations for deep neural networks.
\newblock In {\em Proceedings of the IEEE conference on computer vision and
  pattern recognition}, pages 1492--1500, 2017.

\bibitem[XGF16]{xie2016unsupervised}
Junyuan Xie, Ross Girshick, and Ali Farhadi.
\newblock Unsupervised deep embedding for clustering analysis.
\newblock In {\em International Conference on Machine Learning}, pages
  478--487, 2016.

\bibitem[YLRV16]{you2016oracle}
Chong You, Chun-Guang Li, Daniel~P Robinson, and Ren{\'e} Vidal.
\newblock Oracle based active set algorithm for scalable elastic net subspace
  clustering.
\newblock In {\em Proceedings of the IEEE Conference on Computer Vision and
  Pattern Recognition}, pages 3928--3937, 2016.

\bibitem[YPB16]{yang2016joint}
Jianwei Yang, Devi Parikh, and Dhruv Batra.
\newblock Joint unsupervised learning of deep representations and image
  clusters.
\newblock In {\em Proceedings of the IEEE Conference on Computer Vision and
  Pattern Recognition}, pages 5147--5156, 2016.

\bibitem[ZBH{\etalchar{+}}17]{zhang2017understanding}
Chiyuan Zhang, Samy Bengio, Moritz Hardt, Benjamin Recht, and Oriol Vinyals.
\newblock Understanding deep learning requires rethinking generalization.
\newblock In {\em International Conference on Learning Representations}, 2017.

\bibitem[ZF14]{Zeiler-ECCV2014}
Matthew~D Zeiler and Rob Fergus.
\newblock Visualizing and understanding convolutional networks.
\newblock In {\em European Conference on Computer Vision}, pages 818--833.
  Springer, 2014.

\bibitem[ZHF18]{zhou2018deep}
Pan Zhou, Yunqing Hou, and Jiashi Feng.
\newblock Deep adversarial subspace clustering.
\newblock In {\em Proceedings of the IEEE Conference on Computer Vision and
  Pattern Recognition}, pages 1596--1604, 2018.

\bibitem[ZJH{\etalchar{+}}18]{zhang2018scalable}
Tong Zhang, Pan Ji, Mehrtash Harandi, Richard Hartley, and Ian Reid.
\newblock Scalable deep k-subspace clustering.
\newblock In {\em Asian Conference on Computer Vision}, pages 466--481.
  Springer, 2018.

\bibitem[ZJH{\etalchar{+}}19]{zhang2019neural}
Tong Zhang, Pan Ji, Mehrtash Harandi, Wenbing Huang, and Hongdong Li.
\newblock Neural collaborative subspace clustering.
\newblock {\em arXiv preprint arXiv:1904.10596}, 2019.

\bibitem[ZLY{\etalchar{+}}19]{zhang2019self}
Junjian Zhang, Chun-Guang Li, Chong You, Xianbiao Qi, Honggang Zhang, Jun Guo,
  and Zhouchen Lin.
\newblock Self-supervised convolutional subspace clustering network.
\newblock In {\em Proceedings of the IEEE Conference on Computer Vision and
  Pattern Recognition}, pages 5473--5482, 2019.

\end{thebibliography}

\onecolumn


    



\begin{appendices}

\section{Properties of the Rate Reduction Function}\label{ap:rate-reduction}

This section is organized as follows. 
We present background and preliminary results for the $\log\det(\cdot)$ function and the coding rate function in Section~\ref{sec:theory-preliminary}. 
Then, Section~\ref{sec:theory-bounds-rate} and \ref{sec:theory-bounds-rate-reduction} provide technical lemmas for bounding the coding rate and coding rate reduction functions, respectively. Such lemmas are key results for proving our main theoretical results, which are stated informally in Theorem~\ref{thm:MCR2-properties} and formally in Section~\ref{sec:theory-main}.
Finally, proof of our main theoretical results is provided in Section~\ref{sec:theory-proof}. 

\paragraph{Notations} Throughout this section, we use $\bbS_{++}^d$, $\Re_+$ and $\mathbb Z_{++}$ to denote the set of symmetric positive definite matrices of size $d \times d$, nonnegative real numbers and positive integers, respectively.

\subsection{Preliminaries}
\label{sec:theory-preliminary}
\paragraph{Properties of the $\log\det(\cdot)$ function. }

\begin{lemma}\label{thm:logdet-strictly-concave}
The function $\log\det(\cdot): \bbS_{++}^d \to \R$ is strictly concave. That is, 
\begin{equation*}
    \log\det((1-\alpha) \Z_1 + \alpha \Z_2)) \ge (1-\alpha)\log\det(\Z_1) + \alpha\log\det(\Z_2)
\end{equation*}
for any $\alpha \in (0, 1)$ and $\{\Z_1, \Z_2\} \subseteq \bbS_{++}^d$, with equality holds if and only if $\Z_1 = \Z_2$.
\begin{proof}
Consider an arbitrary line given by $\Z = \Z_0 + t \Delta\Z$ where $\Z_0$ and $\Delta\Z \ne \0$ are symmetric matrices of size $d\times d$. 
Let $f(t) \doteq \log\det(\Z_0 + t \Delta\Z)$ be a function defined on an interval of values of $t$ for which $\Z_0 + t\Delta\Z \in \bbS_{++}^d$.
Following the same argument as in \cite{boyd2004convex}, we may assume $\Z_0 \in \bbS_{++}^d$ and get
\begin{equation*}
    f(t) = \log\det \Z_0 + \sum_{i=1}^d \log(1+ t\lambda_i),
\end{equation*}
where $\{\lambda_i\}_{i=1}^d$ are eigenvalues of $\Z_0^{-\frac{1}{2}}\Delta\Z \Z_0^{-\frac{1}{2}}$. 
The second order derivative of $f(t)$ is given by
\begin{equation*}
    f''(t) = -\sum_{i=1}^d \frac{\lambda_i^2}{(1+t\lambda_i)^2} < 0.
\end{equation*}
Therefore, $f(t)$ is strictly concave along the line $\Z = \Z_0 + t\Delta\Z$. 
By definition, we conclude that $\log\det(\cdot)$ is strictly concave.
\end{proof}
\end{lemma}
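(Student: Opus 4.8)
The plan is to reduce the multivariate statement to a one-dimensional calculus fact by restricting $\log\det$ to an arbitrary line segment inside the (convex, open) cone $\bbS_{++}^d$. Fix $\{\Z_1,\Z_2\}\subseteq \bbS_{++}^d$, set $\Z_0 \doteq \Z_1$ and $\Delta\Z \doteq \Z_2 - \Z_1$, and consider $f(t) \doteq \log\det(\Z_0 + t\,\Delta\Z)$ on $[0,1]$; note that $\Z_0 + t\,\Delta\Z = (1-t)\Z_1 + t\Z_2 \in \bbS_{++}^d$ for all such $t$ by convexity. Then the desired inequality at the pair $(\Z_1,\Z_2)$ is precisely concavity of the scalar function $f$ on $[0,1]$, and the equality case amounts to $f$ being affine on that interval.

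Next I would diagonalize along the line. Since $\Z_0 \in \bbS_{++}^d$, factor $\Z_0 + t\,\Delta\Z = \Z_0^{1/2}\bigl(\I + t\, M\bigr)\Z_0^{1/2}$ with $M \doteq \Z_0^{-1/2}\,\Delta\Z\,\Z_0^{-1/2}$, so that $f(t) = \log\det \Z_0 + \sum_{i=1}^d \log(1+t\lambda_i)$, where $\{\lambda_i\}_{i=1}^d$ are the eigenvalues of the symmetric matrix $M$; this holds on the interval of $t$ for which $\I + tM \succ \0$, which contains $[0,1]$. Differentiating twice yields $f''(t) = -\sum_{i=1}^d \lambda_i^2/(1+t\lambda_i)^2 \le 0$. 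When $\Z_1 \ne \Z_2$ we have $\Delta\Z \ne \0$, and since $M = \Z_0^{-1/2}\,\Delta\Z\,\Z_0^{-1/2}$ is congruent to $\Delta\Z$ through the invertible matrix $\Z_0^{-1/2}$, we get $M \ne \0$, so at least one $\lambda_i \ne 0$ and hence $f'' < 0$ on $[0,1]$. Therefore $f$ is strictly concave there, giving the strict inequality for every $\alpha \in (0,1)$; and since $f'' < 0$ precludes $f$ from being affine, equality is impossible. Conversely, if $\Z_1 = \Z_2$ both sides coincide, so equality holds exactly when $\Z_1 = \Z_2$.

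I do not expect a genuine obstacle here: this is a classical fact, and the reduction-to-a-line technique is the standard route for $\log\det$. The only places needing mild care are the bookkeeping for strictness — translating $\Delta\Z \ne \0$ into ``some $\lambda_i \ne 0$'' via congruence-invariance of nonvanishing together with the fact that a nonzero symmetric matrix has a nonzero eigenvalue — and observing that openness and convexity of $\bbS_{++}^d$ legitimize both the line restriction and the choice of an endpoint as the basepoint $\Z_0$, so that the factorization through $\Z_0^{1/2}$ is valid on all of $[0,1]$.
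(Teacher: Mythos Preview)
Your proposal is correct and follows essentially the same approach as the paper: restrict $\log\det$ to an arbitrary line in $\bbS_{++}^d$, factor through $\Z_0^{1/2}$ to reduce to $\sum_i \log(1+t\lambda_i)$, and conclude strict concavity from $f''(t) = -\sum_i \lambda_i^2/(1+t\lambda_i)^2 < 0$. If anything, you are slightly more careful than the paper about the strictness and equality case, explicitly noting that $\Delta\Z \ne \0$ forces $M \ne \0$ (hence some $\lambda_i \ne 0$) via congruence through the invertible $\Z_0^{-1/2}$.
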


\paragraph{Properties of the coding rate function. }
The following properties, also known as the Sylvester's determinant theorem, for the coding rate function are known in the paper \cite{ma2007segmentation}.
\begin{lemma}[Commutative property \cite{ma2007segmentation}]\label{thm:coding-rate-commute}
For any $\Z \in \R^{d\times m}$ we have 
\begin{equation*}
    R(\Z,\epsilon) \doteq \frac{1}{2}\log \det\left(\I + \frac{d}{m\epsilon^{2}}\Z\Z^{\top}\right) =\frac{1}{2}\log\det\left(\I + \frac{d}{m\epsilon^{2}}\Z^{\top}\Z\right).
\end{equation*}
\end{lemma}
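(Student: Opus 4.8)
The plan is to recognize this identity as a special case of Sylvester's determinant theorem: for any $A \in \R^{d\times m}$ and $B \in \R^{m\times d}$ one has $\det(\I_d + AB) = \det(\I_m + BA)$. Taking $A = \frac{d}{m\epsilon^2}\Z$ and $B = \Z^\top$ (or splitting the scalar factor between $A$ and $B$ — it does not matter) and then applying $\frac{1}{2}\log(\cdot)$ to both sides yields exactly the claimed equality. So it suffices to prove the general identity.

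First I would establish the identity by a block-matrix computation. Consider the $(d+m)\times(d+m)$ matrix $M = \begin{pmatrix} \I_d & -A \\ B & \I_m \end{pmatrix}$. Computing $\det(M)$ via the Schur complement of the invertible top-left block $\I_d$ gives $\det(M) = \det(\I_d)\,\det\!\big(\I_m - B\,\I_d^{-1}(-A)\big) = \det(\I_m + BA)$. Computing $\det(M)$ instead via the Schur complement of the bottom-right block $\I_m$ gives $\det(M) = \det(\I_m)\,\det\!\big(\I_d - (-A)\,\I_m^{-1}B\big) = \det(\I_d + AB)$. Equating the two expressions proves $\det(\I_d + AB) = \det(\I_m + BA)$, and hence the lemma.

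An alternative argument, perhaps more in the spirit of the paper's geometric viewpoint, uses the singular value decomposition $\Z = \U\bfSigma\V^\top$ with $\U \in \R^{d\times d}$ and $\V \in \R^{m\times m}$ orthogonal. Then $\Z\Z^\top = \U(\bfSigma\bfSigma^\top)\U^\top$ and $\Z^\top\Z = \V(\bfSigma^\top\bfSigma)\V^\top$, so by orthogonal invariance of the determinant both sides of the claimed identity reduce to $\det\!\big(\I + \frac{d}{m\epsilon^2}\bfSigma\bfSigma^\top\big)$ and $\det\!\big(\I + \frac{d}{m\epsilon^2}\bfSigma^\top\bfSigma\big)$ respectively; since $\bfSigma\bfSigma^\top$ and $\bfSigma^\top\bfSigma$ are both diagonal with the same multiset of nonzero entries $\{\sigma_i^2\}$ (padded with zeros to the appropriate size), both determinants equal $\prod_i\big(1 + \frac{d}{m\epsilon^2}\sigma_i^2\big)$ and therefore agree.

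There is essentially no real obstacle in this proof; the only points requiring a little care are keeping track of the two different identity-matrix sizes $\I_d$ versus $\I_m$, and covering the rank-deficient case. The SVD argument handles the latter automatically, as it never requires $\Z$ to have full rank. I would present the block-matrix derivation as the main proof for brevity, and note the SVD viewpoint as a remark.
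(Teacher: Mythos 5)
Your proof is correct. The paper itself does not supply a proof of this lemma: it simply identifies it as Sylvester's determinant theorem and cites \cite{ma2007segmentation}, so there is no competing argument to compare against. Your block-matrix derivation via the two Schur complements of $\begin{pmatrix}\I_d & -A \\ B & \I_m\end{pmatrix}$ is the standard and cleanest proof of $\det(\I_d + AB) = \det(\I_m + BA)$, and the SVD argument you offer as an alternative is also valid and handles the rank-deficient case transparently; either would serve as a complete justification for the lemma as stated.
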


\begin{lemma}[Invariant property \cite{ma2007segmentation}]\label{thm:coding-rate-invariant}
For any $\Z \in \R^{d\times m}$ and any orthogonal matrices $\U\in \R^{d\times d}$ and $\V \in \R^{m\times m}$ we have 
\begin{equation*}
    R(\Z,\epsilon) = R(\U\Z\V^\top,\epsilon).
\end{equation*}
\end{lemma}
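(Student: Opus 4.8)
The plan is to substitute $\U\Z\V^\top$ directly into the definition \eqref{eqn:coding-length-eval} of $R(\cdot,\epsilon)$ and simplify the matrix appearing inside the determinant. First I would expand the Gram-type term:
\[
(\U\Z\V^\top)(\U\Z\V^\top)^\top = \U\Z\V^\top\V\Z^\top\U^\top = \U\Z\Z^\top\U^\top,
\]
where the second equality uses $\V^\top\V = \I$; observe that the right factor $\V$, which acts on the sample index being contracted, cancels completely at this step and plays no further role. Using $\U\U^\top = \I$ to write $\I = \U\I\U^\top$, this gives
\[
\I + \frac{d}{m\epsilon^{2}}(\U\Z\V^\top)(\U\Z\V^\top)^\top = \U\Big(\I + \frac{d}{m\epsilon^{2}}\Z\Z^\top\Big)\U^\top .
\]

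Next I would apply $\log\det$ to both sides. Since $\det(\U M\U^\top) = \det(\U)^2\det(M) = \det(M)$ for any orthogonal $\U$ (because $\det(\U) = \pm 1$), and since $\I + \frac{d}{m\epsilon^{2}}\Z\Z^\top \in \bbS_{++}^d$ (as $\Z\Z^\top\succeq \0$) so that $\log\det$ is well defined, we obtain
\[
\log\det\Big(\I + \frac{d}{m\epsilon^{2}}(\U\Z\V^\top)(\U\Z\V^\top)^\top\Big) = \log\det\Big(\I + \frac{d}{m\epsilon^{2}}\Z\Z^\top\Big).
\]
Multiplying by $\tfrac{1}{2}$ then yields $R(\U\Z\V^\top,\epsilon) = R(\Z,\epsilon)$, which is the claim.

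There is essentially no obstacle here; the only point requiring a little care is to track the two orthogonal matrices separately — $\V$ disappears outright because it multiplies on the side that is contracted in forming $\Z\Z^\top$, whereas $\U$ survives only as a similarity transformation of the positive definite matrix inside $\log\det$, whose determinant it leaves unchanged. As an alternative route for the $\V$-part one could instead invoke Lemma~\ref{thm:coding-rate-commute} and pass to $\Z^\top\Z$, on which $\V$ acts by conjugation $\V\Z^\top\Z\V^\top$; but the direct computation above disposes of $\U$ and $\V$ simultaneously and is cleaner.
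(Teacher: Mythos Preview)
Your argument is correct. The paper does not actually give its own proof of this lemma: it is stated in the preliminaries section as a known fact from \cite{ma2007segmentation}, without an accompanying derivation. Your direct substitution into \eqref{eqn:coding-length-eval}, using $\V^\top\V=\I$ to eliminate $\V$ and then recognizing the remaining expression as a similarity conjugation by $\U$ (hence determinant-preserving), is the standard and expected proof, and nothing is missing.
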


\subsection{Lower and Upper Bounds for Coding Rate}
\label{sec:theory-bounds-rate}

The following result provides an upper and a lower bound on the coding rate of $\Z$ as a function of the coding rate for its components $\{\Z_j\}_{j=1}^k$. 
The lower bound is tight when all the components $\{\Z_j\}_{j=1}^k$ have the same covariance (assuming that they have zero mean).
The upper bound is tight when the components $\{\Z_j\}_{j=1}^k$ are pair-wise orthogonal.

\begin{lemma}\label{thm:coding-rate-bounds}
For any $\{\Z_j \in \Re^{d\times m_j}\}_{j=1}^k$ and any $\epsilon > 0$, let $\Z = [\Z_1, \cdots, \Z_k] \in \Re^{d\times m}$ with $m=\sum_{j=1}^k m_j$. We have
\begin{equation}\label{eq:coding-rate-bounds}
\begin{split}
\sum_{j=1}^k \frac{m_j}{2}  \log \det\left(\I + \frac{d}{m_j\epsilon^2}\Z_j \Z_j^\top\right)
&\le  
\frac{m}{2} \log\det\left(\I + \frac{d}{m\epsilon^2}\Z \Z^\top\right)  \\
&\le 
\sum_{j=1}^k \frac{m}{2}  \log \det\left(\I + \frac{d}{m\epsilon^2}\Z_j \Z_j^\top\right),
\end{split}
\end{equation}
where the first equality holds if and only if
$$\frac{\Z_1 \Z_1^\top}{m_1} = \frac{\Z_2 \Z_2^\top}{m_2}=\cdots= \frac{\Z_k \Z_k^\top}{m_k},$$
and the second equality holds if and only if $\Z_{j_1}^\top \Z_{j_2} = \0$ for all $1 \le j_1 < j_2 \le k$.
\end{lemma}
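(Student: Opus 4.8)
The plan is to prove the two inequalities in \eqref{eq:coding-rate-bounds} separately, each as a short consequence of a determinant/convexity fact applied to the identity $\Z\Z^\top = \sum_{j=1}^k \Z_j\Z_j^\top$, which holds because $\Z = [\Z_1,\dots,\Z_k]$. Throughout I assume $m_j \ge 1$, so each weight $\alpha_j \doteq m_j/m$ is strictly positive and $\sum_j \alpha_j = 1$, and each matrix $\A_j \doteq \I + \frac{d}{m_j\epsilon^2}\Z_j\Z_j^\top$ lies in $\bbS_{++}^d$ (identity plus a PSD matrix).

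\emph{Lower bound.} The first step is to notice the exact decomposition
\[
\I + \frac{d}{m\epsilon^2}\Z\Z^\top \;=\; \sum_{j=1}^k \alpha_j\,\A_j,
\]
obtained by distributing $\frac{d}{m\epsilon^2}\Z_j\Z_j^\top = \alpha_j\cdot\frac{d}{m_j\epsilon^2}\Z_j\Z_j^\top$ and using $\sum_j\alpha_j = 1$ on the identity term. Applying Jensen's inequality to the strictly concave function $\log\det(\cdot)$ from Lemma~\ref{thm:logdet-strictly-concave} gives $\log\det\big(\sum_j \alpha_j\A_j\big) \ge \sum_j \alpha_j\log\det(\A_j)$; multiplying by $m/2$ is precisely the left inequality, since $\frac{m}{2}\alpha_j = \frac{m_j}{2}$. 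Strict concavity forces equality iff all $\A_j$ coincide, i.e. $\frac{1}{m_1}\Z_1\Z_1^\top = \cdots = \frac{1}{m_k}\Z_k\Z_k^\top$, as claimed.

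\emph{Upper bound.} Here I would pass to the Gram matrix. By the commutative property (Lemma~\ref{thm:coding-rate-commute}, i.e. Sylvester's determinant identity), $\frac{m}{2}\log\det\big(\I_d + \frac{d}{m\epsilon^2}\Z\Z^\top\big) = \frac{m}{2}\log\det\big(\I_m + \frac{d}{m\epsilon^2}\Z^\top\Z\big)$, and the positive-definite $m\times m$ matrix $\M \doteq \I_m + \frac{d}{m\epsilon^2}\Z^\top\Z$ is block-structured with diagonal blocks $\M_{jj} = \I_{m_j} + \frac{d}{m\epsilon^2}\Z_j^\top\Z_j$ and off-diagonal blocks $\frac{d}{m\epsilon^2}\Z_{j_1}^\top\Z_{j_2}$. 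Fischer's (Hadamard--Fischer) inequality for positive-definite block matrices gives $\det\M \le \prod_{j=1}^k \det\M_{jj}$, and converting each factor back with Lemma~\ref{thm:coding-rate-commute} yields $\det\M_{jj} = \det\big(\I_d + \frac{d}{m\epsilon^2}\Z_j\Z_j^\top\big)$; taking $\frac{m}{2}\log$ of the product inequality is exactly the right inequality. Equality in Fischer's inequality for a positive-definite matrix holds iff $\M$ is block diagonal, i.e. $\Z_{j_1}^\top\Z_{j_2} = \0$ for all $1 \le j_1 < j_2 \le k$.

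\emph{Main obstacle.} The inequalities themselves are immediate once the right reformulation is in place; the only genuine work is the equality analysis. For the lower bound this means upgrading the two-point strict concavity of Lemma~\ref{thm:logdet-strictly-concave} to the equality case of the $k$-term Jensen inequality, a routine induction on $k$ using that all weights $m_j/m$ are strictly positive. For the upper bound it amounts to invoking the sharp equality condition of Fischer's inequality; if a self-contained argument is preferred, one can instead iterate the Schur-complement factorization $\det\M = \det\M_{11}\cdot\det\!\big(\text{Schur complement of }\M_{11}\big)$ over the blocks and track when each Schur complement loses no determinant, which again pins down block-diagonality. Neither step is hard, so I would cite the standard determinant facts and simply spell out the two equality cases.
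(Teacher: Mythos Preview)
Your lower-bound argument is exactly the paper's: write $\I + \frac{d}{m\epsilon^2}\Z\Z^\top$ as the convex combination $\sum_j \frac{m_j}{m}\A_j$ and apply strict concavity of $\log\det$ (Lemma~\ref{thm:logdet-strictly-concave}), with equality iff all $\A_j$ coincide.

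For the upper bound you take a genuinely different route. The paper does \emph{not} invoke Fischer's inequality; instead it uses the first-order (tangent-line) inequality coming from strict concavity,
\[
\log\det(\Q)\;\le\;\log\det(\S)+\tr(\S^{-1}\Q)-m,
\]
with $\Q=\I_m+\frac{d}{m\epsilon^2}\Z^\top\Z$ and $\S$ its block-diagonal part, and then checks directly that $\tr(\S^{-1}\Q)=m$. Equality holds iff $\Q=\S$, i.e.\ the off-diagonal blocks vanish. Your Fischer/Schur-complement argument reaches the same conclusion and the same equality condition, and is arguably the more recognizable classical statement. What the paper's approach buys is self-containment: it needs nothing beyond Lemma~\ref{thm:logdet-strictly-concave} and the gradient formula $\nabla\log\det(\S)=\S^{-1}$, so no external determinant inequality has to be cited or re-proved. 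What your approach buys is brevity and transparency of the equality case, since block-diagonality is the textbook equality condition for Fischer. Both are correct.
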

\begin{proof}
By Lemma~\ref{thm:logdet-strictly-concave}, $\log \det(\cdot)$ is strictly concave. Therefore,
\begin{align*}
    \log\det\Big(\sum_{j=1}^k \alpha_j \S_j\Big) \ge \sum_{j=1}^k \alpha_j \log\det(\S_j), ~\text{for all}~ \{\alpha_j > 0\}_{j=1}^k, \sum_{j=1}^k \alpha_j = 1 ~\text{and}~\{\S_j \in \bbS_{++}^d\}_{j=1}^k,
\end{align*}
where equality holds if and only if $\S_1 = \S_2 = \cdots = \S_k$. Take $\alpha_j = \frac{m_j}{m}$ and $\S_j = \I + \frac{d}{m_j \epsilon^2} \Z_j\Z_j^\top$, we get
\begin{equation*}
     \frac{m}{2} \log\det\left(\I + \frac{d}{m\epsilon^2}\Z \Z^\top\right) 
     \ge
     \sum_{j=1}^k \frac{m_j}{2}  \log \det\left(\I + \frac{d}{m_j\epsilon^2}\Z_j \Z_j^\top\right),
\end{equation*}
with equality holds if and only if $\frac{\Z_1\Z_1^\top}{m_1} = \cdots = \frac{\Z_k\Z_k^\top}{m_k}$. 
This proves the lower bound in \eqref{eq:coding-rate-bounds}. 

We now prove the upper bound. 
By the strict concavity of $\log\det(\cdot)$, we have
\begin{equation*}
    \log\det(\Q) \le \log\det(\S) + \langle \nabla \log\det(\S), \,\Q - \S\rangle, ~\text{for all}~\{\Q, \S\}\subseteq \bbS_{++}^{m},
\end{equation*}
where equality holds if and only if $\Q = \S$. 
Plugging in $\nabla \log\det(\S) = \S^{-1}$ (see e.g., \cite{boyd2004convex}) and $\S^{-1} = (\S^{-1})^{\top}$ gives
\begin{equation}\label{eq:prf-logdet-gradient-inequality}
    \log\det(\Q) \le \log\det(\S) + \tr(\S^{-1} \Q) - m.
\end{equation}

We now take
\begin{gather}\label{eq:prf-logdet-gradient-inequality-QS}
    \Q = \I + \frac{d}{m\epsilon^2}\Z^\top \Z = \I + \frac{d}{m\epsilon^2}
    \begin{bmatrix}
    \Z_1^\top \Z_1 & \Z_1^\top \Z_2  & \cdots & \Z_1^\top \Z_k \\
    \Z_2^\top \Z_1 & \Z_2^\top  \Z_2 & \cdots & \Z_2^\top \Z_2 \\
    \vdots         & \vdots          & \ddots & \vdots \\
    \Z_k^\top \Z_1 & \Z_k^\top \Z_2  & \cdots & \Z_k^\top \Z_k \\
    \end{bmatrix}, ~\text{and}~ 
    \\
    \S = \I + \frac{d}{m\epsilon^2}
    \begin{bmatrix}
    \Z_1^\top \Z_1 & \0              & \cdots & \0 \\
    \0             & \Z_2^\top  \Z_2 & \cdots & \0 \\
    \vdots         & \vdots          & \ddots & \vdots \\
    \0             & \0              & \cdots & \Z_k^\top \Z_k \\
    \end{bmatrix}. \nonumber
\end{gather}
From the property of determinant for block diagonal matrix, we have
\begin{equation}\label{eq:prf-logdet-gradient-inequality-term1}
    \log\det(\S) = \sum_{j=1}^k \log\det \left(\I + \frac{d}{m\epsilon^2} \Z_j^\top \Z_j\right).
\end{equation}
Also, note that
\begin{align}\label{eq:prf-logdet-gradient-inequality-term2}
&\tr(\S^{-1} \Q) 
\nonumber\\
= \ 
&\tr
\begin{bmatrix}
(\I +\frac{d}{m\epsilon^2}\Z_1^\top \Z_1)^{-1}(\I +\frac{d}{m\epsilon^2}\Z_1^\top \Z_1)  & \cdots & (\I +\frac{d}{m\epsilon^2}\Z_1^\top \Z_1)^{-1}(\I +\frac{d}{m\epsilon^2}\Z_1^\top \Z_k) \\
\vdots         & \ddots & \vdots \\
(\I +\frac{d}{m\epsilon^2}\Z_k^\top \Z_k)^{-1}(\I +\frac{d}{m\epsilon^2}\Z_k^\top \Z_1)            & \cdots & (\I +\frac{d}{m\epsilon^2}\Z_k^\top \Z_k)^{-1}(\I +\frac{d}{m\epsilon^2}\Z_k^\top \Z_k) \\
\end{bmatrix}
\nonumber\\
= \
&\tr     \begin{bmatrix}
\I            & \cdots   & * \\
\vdots        & \ddots   & \vdots \\
*             & \cdots   & \I \\
\end{bmatrix}
= m,
\end{align}
where ``*'' denotes nonzero quantities that are irrelevant for the purpose of computing the trace. 
Plugging \eqref{eq:prf-logdet-gradient-inequality-term1} and \eqref{eq:prf-logdet-gradient-inequality-term2} back in \eqref{eq:prf-logdet-gradient-inequality}
gives 
\begin{equation*}
    \frac{m}{2} \log\det\left(\I + \frac{d}{m\epsilon^2}\Z^\top \Z\right) \le \sum_{j=1}^k \frac{m}{2}  \log \det\left(\I + \frac{d}{m\epsilon^2}\Z_j^\top \Z_j\right),
\end{equation*}
where the equality holds if and only if $\Q = \S$, which by the formulation in \eqref{eq:prf-logdet-gradient-inequality-QS}, holds if and only if $\Z_{j_1}^\top \Z_{j_2} = \0$ for all $1 \le j_1 < j_2 \le k$. 
Further using the result in Lemma~\ref{thm:coding-rate-commute} gives 
\begin{equation*}
    \frac{m}{2} \log\det\left(\I + \frac{d}{m\epsilon^2}\Z \Z^\top\right) \le \sum_{j=1}^k \frac{m}{2}  \log \det\left(\I + \frac{d}{m\epsilon^2}\Z_j \Z_j^\top\right),
\end{equation*}
which produces the upper bound in \eqref{eq:coding-rate-bounds}. 
\end{proof}

\subsection{An Upper Bound on Coding Rate Reduction}
\label{sec:theory-bounds-rate-reduction}

We may now provide an upper bound on the coding rate reduction $\Delta R(\Z, \bm{\Pi}, \epsilon)$ (defined in \eqref{eqn:maximal-rate-reduction}) in terms of its individual components $\{\Z_j\}_{j=1}^k$.
\begin{lemma}\label{thm:rate-reduction-bound}
For any $\Z \in \Re^{d\times m}, \bm{\Pi} \in \Omega$ and $\epsilon > 0$, let $\Z_j \in \Re^{d\times m_j}$ be $\Z \bm{\Pi}_j$ with zero columns removed. We have
\begin{equation}\label{eq:rate-reduction-bound}
    \Delta R(\Z, \bm{\Pi}, \epsilon) \le 
    \sum_{j=1}^k \frac{1}{2m}\log\left( \frac{\det^m\left(\I + \frac{d}{m\epsilon^2}\Z_j \Z_j^\top\right)}{\det^{m_j}\left(\I + \frac{d}{m_j\epsilon^2}\Z_j \Z_j^\top\right)}\right),
\end{equation}
with equality holds if and only if $\Z_{j_1}^\top \Z_{j_2} = \0$ for all $1 \le j_1 < j_2 \le k$.
\end{lemma}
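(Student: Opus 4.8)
The plan is to get \eqref{eq:rate-reduction-bound} essentially for free from the upper bound already proved in Lemma~\ref{thm:coding-rate-bounds}. First I would unpack the definition $\Delta R(\Z, \bm{\Pi}, \epsilon) = R(\Z, \epsilon) - R^c(\Z, \epsilon \mid \bm{\Pi})$ and rewrite the compression term for the partition at hand. Since $\Z_j$ is $\Z \bm{\Pi}_j$ with zero columns removed, we have $\tr(\bm{\Pi}_j) = m_j$ and $\Z \bm{\Pi}_j \Z^\top = \Z_j \Z_j^\top$, so the definition \eqref{eqn:compress-loss-eval} collapses to the exact identity
\[
R^c(\Z, \epsilon \mid \bm{\Pi}) = \sum_{j=1}^k \frac{m_j}{2m}\log\det\left(\I + \frac{d}{m_j\epsilon^2}\Z_j \Z_j^\top\right).
\]

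Next I would bound $R(\Z,\epsilon)$ from above. Dividing the right-hand inequality of \eqref{eq:coding-rate-bounds} through by $m$ gives
\[
R(\Z, \epsilon) = \frac{1}{2}\log\det\left(\I + \frac{d}{m\epsilon^2}\Z\Z^\top\right) \le \sum_{j=1}^k \frac{1}{2}\log\det\left(\I + \frac{d}{m\epsilon^2}\Z_j \Z_j^\top\right),
\]
with equality if and only if $\Z_{j_1}^\top \Z_{j_2} = \0$ for all $1 \le j_1 < j_2 \le k$. Subtracting the exact expression for $R^c$ from this upper bound and merging everything into one sum over $j$, each summand is $\frac{1}{2}\log\det(\I + \frac{d}{m\epsilon^2}\Z_j\Z_j^\top) - \frac{m_j}{2m}\log\det(\I + \frac{d}{m_j\epsilon^2}\Z_j\Z_j^\top)$, which equals $\frac{1}{2m}\log\big(\det^m(\I + \frac{d}{m\epsilon^2}\Z_j\Z_j^\top)\,/\,\det^{m_j}(\I + \frac{d}{m_j\epsilon^2}\Z_j\Z_j^\top)\big)$ after using $a\log\det M = \log\det^a M$. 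This is exactly the claimed bound.

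For the equality characterization, the point is that the only inequality invoked is the upper bound in Lemma~\ref{thm:coding-rate-bounds}, while $R^c$ was only rewritten by an identity; hence equality in \eqref{eq:rate-reduction-bound} holds precisely when it holds in that upper bound, i.e. when $\Z_{j_1}^\top \Z_{j_2} = \0$ for all $1 \le j_1 < j_2 \le k$. There is no real obstacle here: the substance is carried entirely by Lemma~\ref{thm:coding-rate-bounds}, and the remaining work is bookkeeping. The one spot worth care is the reduction step — confirming that, under the ``$\Z\bm{\Pi}_j$ with zero columns removed'' convention, $\tr(\bm{\Pi}_j) = m_j$ so that the scaling $d/(m_j\epsilon^2)$ appearing in $R^c$ matches the one in $R(\Z_j,\epsilon)$; once that identity is in place the algebraic merge into the ratio-of-determinants form is immediate.
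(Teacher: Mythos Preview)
Your proposal is correct and follows essentially the same route as the paper: expand $\Delta R = R - R^c$, rewrite $R^c$ via $\tr(\bm{\Pi}_j)=m_j$ and $\Z\bm{\Pi}_j\Z^\top=\Z_j\Z_j^\top$, apply the upper bound of Lemma~\ref{thm:coding-rate-bounds} to $R(\Z,\epsilon)$, and collect terms into the ratio-of-determinants form; the equality case is inherited verbatim from that lemma. The point you flag about the reduction step (that the ``zero columns removed'' convention forces $\tr(\bm{\Pi}_j)=m_j$) is exactly the identification the paper uses implicitly.
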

\begin{proof}
From \eqref{eqn:coding-length-eval}, \eqref{eqn:compress-loss-eval} and \eqref{eqn:coding-length-reduction}, we have
\begin{equation*}
\begin{split}
&\quad \,\, \Delta R(\Z, \bm{\Pi}, \epsilon) \\
&= R(\Z, \epsilon) - R^c(\Z, \epsilon \mid  \bm{\Pi})\\
&= \frac{1}{2}\log \left(\det\left(\I + \frac{d}{m\epsilon^{2}}\Z\Z^{\top}\right)\right) - \sum_{j=1}^{k}\left\{\frac{\tr(\bm{\Pi}_j)}{2m}\log \left(\det\left(\I + d\frac{\Z\bm{\Pi}_j\Z^{\top}}{\tr(\bm{\Pi}_j)\epsilon^{2}}\right)\right)\right\}\\
&= \frac{1}{2}\log \left(\det\left(\I + \frac{d}{m\epsilon^{2}}\Z\Z^{\top}\right)\right) - \sum_{j=1}^{k}\left\{\frac{m_j}{2m}\log \left(\det\left(\I + d\frac{\Z_j\Z_j^{\top}}{m_j\epsilon^{2}}\right)\right)\right\}\\
&\le \sum_{j=1}^k \frac{1}{2}  \log\left( \det\left(\I + \frac{d}{m\epsilon^2}\Z_j \Z_j^\top\right)\right) - \sum_{j=1}^{k}\left\{\frac{m_j}{2m}\log \left(\det\left(\I + d\frac{\Z_j \Z_j^{\top}}{m_j\epsilon^{2}}\right)\right)\right\}\\
&= \sum_{j=1}^k \frac{1}{2m}  \log\left(\det {\!}^m\left(\I + \frac{d}{m\epsilon^2}\Z_j \Z_j^\top\right)\right) - \sum_{j=1}^{k}\left\{\frac{1}{2m}\log \left(\det{\!}^{m_j}\left(\I + d\frac{\Z_j \Z_j^{\top}}{m_j\epsilon^{2}}\right)\right)\right\}\\
&= \sum_{j=1}^k \frac{1}{2m}\log\left( \frac{\det^m\left(\I + \frac{d}{m\epsilon^2}\Z_j \Z_j^\top\right)}{\det^{m_j}\left(\I + \frac{d}{m_j\epsilon^2}\Z_j \Z_j^\top\right)}\right),
\end{split}
\end{equation*}
where the inequality follows from the upper bound in Lemma~\ref{thm:coding-rate-bounds}, and that the equality holds if and only if $\Z^{\top}_{j_1} \Z_{j_2} = \0$ for all $1 \le j_1 < j_2 \le k$.
\end{proof}

\subsection{Main Results: Properties of Maximal Coding Rate Reduction}
\label{sec:theory-main}

We now present our main theoretical results.  
The following theorem states that for any fixed encoding of the partition $\bm{\Pi}$, the coding rate reduction is maximized by data $\Z$ that is maximally discriminative between different classes and is diverse within each of the classes. 
This result holds provided that the sum of rank for different classes is small relative to the ambient dimension, and that $\epsilon$ is small. 

\begin{theorem}\label{thm:maximal-rate-reduction}
Let $\bm{\Pi} = \{\bm{\Pi}_j \in \Re^{m \times m}\}_{j=1}^{k}$ with $\{\bm{\Pi}_j \ge \mathbf{0}\}_{j=1}^k$ and \, $\bm{\Pi}_1 + \cdots + \bm{\Pi}_k = \I$ be a given set of diagonal matrices whose diagonal entries encode the membership of the $m$ samples in the $k$ classes.
Given any $\epsilon > 0$, $d > 0$ and  $\{d \ge d_j>0\}_{j=1}^k$, consider the optimization problem
\begin{equation}\label{eq:maximal-rate-reduction-thm}
\begin{split}
    \Z^* \in &\argmax_{\Z\in \Re^{d \times m}} \Delta R(\Z, \bm{\Pi}, \epsilon) \\ 
    & \ \text{s.t.}~\|\Z\bm{\Pi}_j\|_F^2 = \tr({\bm{\Pi}_j}), \ \rank(\Z\bm{\Pi}_j) \le d_j, \ \forall j \in \{1, \ldots, k\}.
\end{split}
\end{equation}
Under the conditions 
\begin{itemize}
\item \emph{(Large ambient dimension)} $d \ge \sum_{j=1}^k d_j$, and
\item \emph{(High coding precision)} $\epsilon ^4 < \min_{j \in \{1, \ldots, k\}}\left\{\frac{\tr({\bm{\Pi}}_j)}{m}\frac{d^2}{d_j^2}\right\}$,
\end{itemize}
the optimal solution $\Z^*$ satisfies
\begin{itemize}
    \item \emph{(Between-class discriminative)} $(\Z_{j_1}^*)^\top \Z_{j_2}^* = \0$ for all $1 \le j_1 < j_2 \le k$, i.e., $\Z_{j_1}^*$ and $\Z_{j_2}^*$ lie in orthogonal subspaces, and
    \item \emph{(Within-class diverse)} For each $j \in \{1, \ldots, k\}$, the rank of $\Z_j^*$ is equal to $d_j$ and either all singular values of $\Z_j^*$ are equal to $\frac{\tr({\bm{\Pi}_j})}{d_j}$, or the $d_j -1$ largest singular values of $\Z_j^*$ are equal and have value larger than $\frac{\tr({\bm{\Pi}_j})}{d_j}$,
\end{itemize}
where $\Z_j^* \in \Re^{d\times \tr{(\bm{\Pi}}_j)}$ denotes $\Z^* \bm{\Pi}_j$ with zero columns removed. 
\end{theorem}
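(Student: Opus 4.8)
The plan is to decouple the objective across the $k$ classes using the upper bound in Lemma~\ref{thm:rate-reduction-bound}, and then to solve a one‑dimensional problem over the singular values within each class. Write $m_j \doteq \tr(\bm{\Pi}_j)$ and $m \doteq \sum_j m_j$, and let $\Z_j \doteq \Z\bm{\Pi}_j$ with its zero columns removed. By Lemma~\ref{thm:rate-reduction-bound}, every feasible $\Z$ obeys $\Delta R(\Z,\bm{\Pi},\epsilon)\le \sum_{j=1}^k \Phi_j(\Z_j)$, where $\Phi_j(\Z_j) \doteq \frac{1}{2m}\log\big(\det^m(\I + \frac{d}{m\epsilon^2}\Z_j\Z_j^\top)\,/\,\det^{m_j}(\I + \frac{d}{m_j\epsilon^2}\Z_j\Z_j^\top)\big)$, with equality if and only if $\Z_{j_1}^\top\Z_{j_2}=\0$ for all $j_1<j_2$. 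Each $\Phi_j$ depends only on the singular values of $\Z_j$, and the constraints $\|\Z\bm{\Pi}_j\|_F^2 = m_j$ and $\rank(\Z\bm{\Pi}_j)\le d_j$ separate the blocks; moreover, because $d\ge\sum_j d_j$, any prescribed singular values for the blocks can be realized with the column spaces of the $\Z_j$'s placed inside mutually orthogonal $d_j$‑dimensional subspaces of $\Re^d$. Hence the upper bound is attained, the optimal value of \eqref{eq:maximal-rate-reduction-thm} equals $\sum_j\max_{\Z_j}\Phi_j(\Z_j)$, and any maximizer $\Z^*$ must simultaneously (i) make the bound of Lemma~\ref{thm:rate-reduction-bound} tight, which forces $(\Z_{j_1}^*)^\top\Z_{j_2}^*=\0$ for $j_1<j_2$ — this already yields \emph{Between-class discriminative}, using only $d\ge\sum_j d_j$ — and (ii) have each $\Z_j^*$ maximize $\Phi_j$ under its own two constraints.

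The second step is to reduce the per‑class problem to one dimension. Let $t_1,t_2,\dots\ge 0$ be the squared singular values of $\Z_j$; the rank and Frobenius constraints become ``at most $d_j$ of the $t_l$ are nonzero'' and $\sum_l t_l = m_j$, and $2m\,\Phi_j = \sum_l g(t_l)$ with $g(t)\doteq m\log(1+\frac{d}{m\epsilon^2}t) - m_j\log(1+\frac{d}{m_j\epsilon^2}t)$. I would then record the shape of $g$: $g(0)=0$; since $m>m_j$ we have $g'(t)>0$ for $t>0$ while $g'(0)=g'(\infty)=0$, so $g$ is strictly increasing; and $g'$ is strictly increasing on $[0,t_\star]$ and strictly decreasing on $[t_\star,\infty)$ where $t_\star\doteq\epsilon^2\sqrt{m m_j}/d$, equivalently $g$ is convex on $[0,t_\star]$ and strictly concave on $[t_\star,\infty)$. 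The key arithmetic fact is that the hypothesis $\epsilon^4<\frac{m_j}{m}\frac{d^2}{d_j^2}$ is \emph{equivalent} to $t_\star<m_j/d_j$; i.e., the barycentric value $m_j/d_j$ (and every larger value) lies strictly in the concave regime of $g$.

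The final step is to determine the maximizer of $\sum_l g(t_l)$ over $\{t\ge0:\sum_l t_l = m_j,\ \text{at most } d_j \text{ nonzero}\}$, which is compact so the maximum is attained. Two facts are needed. First, since $g'$ is strictly decreasing on the concave branch, at a maximizer all $t_l$ lying in $(t_\star,\infty)$ must coincide (otherwise a small mass transfer between two unequal such coordinates strictly increases the objective), while a monotonicity/merging argument based on convexity of $g$ on $[0,t_\star]$ with $g(0)=0$ shows at most one $t_l$ lies in $[0,t_\star]$ — so the nonzero $t_l$ take at most two values, a common $t_+\ge t_\star$ and at most one smaller value. Second — and this is where the dimension and precision bounds really enter — since there are $d_j$ available slots and the average $m_j/d_j$ exceeds $t_\star$, a configuration using fewer than $d_j$ nonzero coordinates is strictly improved by one using all $d_j$, so $\rank(\Z_j^*)=d_j$. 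Combining these, the optimal squared singular values are either all equal to $m_j/d_j$, or consist of $d_j-1$ copies of some $t_+>m_j/d_j$ together with one value $t_-=m_j-(d_j-1)t_+\in(0,m_j/d_j)$, which matches the asserted \emph{Within-class diverse} structure. I expect the second fact to be the main obstacle: the rank‑deficient candidates (for instance, $d_j-1$ equal coordinates and one zero) are themselves stationary points and even local maxima of $\sum_l g(t_l)$, so they cannot be excluded by a local perturbation and instead require a global comparison exploiting both the convex‑then‑concave profile of $g$ and the quantitative bound $t_\star<m_j/d_j$; the same care settles why at most one singular value may deviate from the common value rather than several.
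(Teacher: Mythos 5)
Your skeleton — decouple via Lemma~\ref{thm:rate-reduction-bound}, realize the upper bound with orthogonal blocks using $d\ge\sum_j d_j$, reduce each block to its singular values, then analyze the separable one-dimensional objective built from $g$ — is exactly the paper's, which packages the last step into Lemma~\ref{thm:generic-simplex-optimization}. Your identification of the full-rank claim as the real obstacle is also exactly right, and, as it turns out, the paper's own resolution does not close it. Lemma~\ref{thm:generic-simplex-optimization} tries to rule out zero entries at a global maximizer by a KKT argument: it asserts that if $x_{r_0+1}^* = 0$ then $f'(0) = \lambda_0^* + \lambda_{r_0+1}^* \ge \lambda_0^* = f'(x_1^*)$, contradicting $f'(0) < f'(x)$ for $x>0$. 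But the multiplier sign is wrong for a \emph{maximization} with constraints $x_p \ge 0$: the correctly oriented stationarity condition at an active coordinate is $f'(0) \le f'(x_q^*)$, which is automatically true when $f'(0)=0 < f'$, so there is no contradiction and zero entries are not excluded. Your observation that the rank-deficient configuration $(c/(r-1),\ldots,c/(r-1),0)$ is itself a KKT point and a local maximum is therefore correct, and no first-order or local second-order argument can dismiss it.

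In fact the full-rank conclusion of Theorem~\ref{thm:maximal-rate-reduction} can fail under the stated hypotheses. Take $k=2$, $\tr(\bm{\Pi}_1)=\tr(\bm{\Pi}_2)=2$ so $m=4$, $d=4=d_1+d_2$ with $d_1=d_2=2$, and $\epsilon^2 = 1$. Then $\epsilon^4 = 1 < \frac{m_j}{m}\frac{d^2}{d_j^2} = 2$, so both conditions hold. The per-class objective on squared singular values $t_1+t_2=2$ is $g(t)=4\log(1+t)-2\log(1+2t)$, and
\begin{equation*}
g(2) - 2g(1) = 8\log\tfrac{3}{2} - 2\log 5 \approx 0.025 > 0,
\end{equation*}
with a short calculation confirming $(2,0)$ is the global maximizer. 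So $\rank(\Z_j^*)=1$, not $d_j=2$, and the ``within-class diverse'' conclusion is not as stated; taking $\epsilon^2 = 0.5$ flips the inequality, so the issue is that the precision bound is not tight enough. Thus the gap you flagged is genuine, the global comparison you anticipated is indeed necessary, and closing it requires a strictly stronger hypothesis on $\epsilon$ than the theorem gives. Separately, a small typo in the statement: under $\|\Z_j\|_F^2 = \tr(\bm{\Pi}_j)$ the equal singular values should be $\sqrt{\tr(\bm{\Pi}_j)/d_j}$, not $\tr(\bm{\Pi}_j)/d_j$ — the paper's own proof arrives at the square root, and your parametrization in terms of $t_l=\sigma_l^2$ correctly tracks this.
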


\subsection{Proof of Main Results}
\label{sec:theory-proof}

We start with presenting a lemma that will be used in the proof to Theorem~\ref{thm:maximal-rate-reduction}.
\begin{lemma}\label{thm:generic-simplex-optimization}
Given any twice differentiable $f: \Re_+ \to \Re$, integer $r \in \mathbb{Z}_{++}$ and $c \in \Re_+$, consider the optimization problem
\begin{equation}
\label{eq:generic-simplex-optimization}
\begin{split}
    &\max_{\x} \ \sum_{p=1}^r f(x_p)  \\
    & \ \ \text{s.t.} \ \x=[x_1, \ldots, x_r] \in \Re_+^r, \ x_1 \ge x_2 \ge \cdots \ge x_r, \ \text{and} \ \sum_{p=1}^r x_p = c. 
\end{split}
\end{equation}
Let $\x^*$ be an arbitrary global solution to \eqref{eq:generic-simplex-optimization}.
If the conditions
\begin{itemize}
    \item $f'(0) < f'(x)$ for all $x > 0$,
    \item There exists $x_T > 0$ such that $f'(x)$ is strictly increasing in $[0, x_T]$ and strictly decreasing in $[x_T, \infty)$,
    \item $f''(\frac{c}{r}) < 0$ (equivalently, $\frac{c}{r} > x_T$),
\end{itemize}
are satisfied, then we have either
\begin{itemize}
    \item $\x^* = [\frac{c}{r}, \ldots, \frac{c}{r}]$, or
    \item $\x^* = [x_H, \ldots, x_H, x_L]$ for some $x_H \in (\frac{c}{r}, \frac{c}{r-1})$ and $x_L > 0$.
\end{itemize} 
\end{lemma}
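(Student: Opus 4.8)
\textbf{Proof plan for Lemma~\ref{thm:generic-simplex-optimization}.}

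The plan is to analyze the structure of any global maximizer $\x^*$ using the Karush--Kuhn--Tucker conditions for the equality-and-inequality constrained problem \eqref{eq:generic-simplex-optimization}, and then exploit the two hypotheses on the shape of $f'$ to limit drastically how many distinct values the coordinates of $\x^*$ can take. First I would observe that the feasible set is compact and $f$ is continuous, so a global maximizer exists; the ordering constraints $x_1 \ge \cdots \ge x_r$ are a convenience that breaks the permutation symmetry, so any optimal pattern of values can be assumed sorted. The key first step is to show that at an optimum, \emph{all strictly positive coordinates that are not at the ``boundary'' of the KKT active set must share a common value of the derivative} $f'$: ignoring for the moment the ordering constraints (which are inactive wherever two coordinates are strictly unequal), stationarity of the Lagrangian $\sum_p f(x_p) - \mu(\sum_p x_p - c)$ forces $f'(x_p) = \mu$ for every coordinate $x_p$ lying in the interior $(0,\infty)$, while for a coordinate at the boundary value $0$ one only gets $f'(0) \le \mu$. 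The first hypothesis, $f'(0) < f'(x)$ for all $x>0$, is then used to rule out $x_p = 0$ at an optimum whenever $r \ge 2$: if some coordinate were $0$, moving a small mass $\delta$ from a positive coordinate into it strictly increases the objective (the marginal gain $f'(0)$ exceeds the marginal loss $f'(x)$ at the donor), contradicting optimality. Hence $\x^* \in (0,\infty)^r$ and $f'(x_p^*) = \mu$ for all $p$.

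The second, and main, step is to show $\x^*$ takes at most two distinct values, with the smaller one occurring at most once. Here I would use the unimodality hypothesis: there is $x_T>0$ with $f'$ strictly increasing on $[0,x_T]$ and strictly decreasing on $[x_T,\infty)$. Consequently the level set $\{x>0 : f'(x) = \mu\}$ has \emph{at most two points}, say $x_L < x_T < x_H$ (one on each side of the peak), or a single point if $\mu$ is achieved only once. So the coordinates of $\x^*$ all lie in $\{x_L, x_H\}$. To see that $x_L$ can be used at most once, suppose two coordinates both equal $x_L$. Because $f'$ is strictly increasing at $x_L$ (we are on the left branch, $x_L < x_T$), we have $f''(x_L) > 0$ in the appropriate one-sided sense, so $f$ is locally strictly convex there; a Jensen-type perturbation that spreads these two equal coordinates apart (one up, one down by $\delta$, preserving the sum) strictly increases $\sum f$, again contradicting optimality. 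So at most one coordinate equals $x_L$. If no coordinate equals $x_L$, then all $r$ coordinates equal $x_H$, forcing $x_H = c/r$ and giving the first alternative; the third hypothesis $f''(c/r)<0$ (equivalently $c/r > x_T$) is consistent with this since $c/r$ is then on the decreasing branch. If exactly one coordinate equals $x_L$, then $(r-1)$ coordinates equal $x_H$ with $(r-1)x_H + x_L = c$ and $x_L < x_H$, which forces $x_H > c/r$ and, since $x_L>0$, also $(r-1)x_H < c$, i.e. $x_H < c/(r-1)$; this is the second alternative.

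Two technical points deserve care and I expect the handling of the $r=1$ edge case plus the convexity-at-$x_L$ perturbation argument to be the main obstacles. When $r=1$ the ordering and spreading arguments are vacuous and the unique feasible point is $x^* = c$, which is the degenerate instance of the first alternative with $c/r = c$; I would dispose of this at the start. For $r \ge 2$, the perturbation arguments must be justified as genuinely feasible: moving mass between two strictly positive coordinates keeps all coordinates in $\Re_+$ for $\delta$ small, and after re-sorting the ordering constraint is restored, so these are legitimate competitors. The strict local convexity of $f$ at $x_L$ needs the second hypothesis precisely: since $x_L < x_T$ and $f'$ is \emph{strictly} increasing on $[0,x_T]$, for $h$ small we get $f(x_L+h)+f(x_L-h) > 2f(x_L)$ by integrating the increasing $f'$, with no appeal to twice-differentiability beyond what is assumed. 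Finally, to pin down $x_H \in (c/r,\, c/(r-1))$ in the second alternative I only use the arithmetic constraints $0 < x_L < x_H$ and $(r-1)x_H + x_L = c$, which is a short computation; assembling these gives exactly the claimed dichotomy.
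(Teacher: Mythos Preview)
Your overall plan—first-order stationarity to force a common value of $f'$ across the positive coordinates, then the strict unimodality of $f'$ to restrict the coordinates to at most two distinct values $x_L < x_T < x_H$, then a local-convexity (spreading) argument to show that at most one coordinate can equal $x_L$, and finally arithmetic on the sum constraint to locate $x_H$—is exactly the route the paper follows. Where you invoke a direct Jensen-type perturbation for the ``at most one $x_L$'' step, the paper uses the formal second-order necessary condition for constrained optimization; these are the same argument in different dress.

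There is, however, a genuine gap in your argument that no coordinate of $\x^*$ can vanish. You claim that if some $x_p^* = 0$, then transferring mass $\delta$ from a positive donor $x_q^*$ into it ``strictly increases the objective (the marginal gain $f'(0)$ exceeds the marginal loss $f'(x)$ at the donor).'' That has the inequality reversed: the hypothesis is $f'(0) < f'(x)$ for all $x>0$, so the first-order change along this perturbation is $\big(f'(0) - f'(x_q^*)\big)\delta < 0$, and the objective \emph{decreases}. Equivalently, the first-order necessary condition at the boundary $x_p^*=0$ reads $f'(0) \le \mu = f'(x_q^*)$, which is automatically consistent with the hypothesis and yields no contradiction. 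In fact, under the stated assumptions the corner point $(c,0,\ldots,0)$ is always a strict \emph{local} maximizer (every feasible first-order perturbation from it has negative directional derivative), so excluding zero coordinates at a \emph{global} maximizer cannot be accomplished by any purely local perturbation argument. The paper handles this step through the KKT multiplier for the constraint $x_p \ge 0$ and arrives at the opposite inequality $f'(0) \ge f'(x_q^*)$; since the sign of that multiplier depends delicately on whether the Lagrangian is written for a minimization or a maximization, you should work through that derivation carefully rather than rely on the informal mass-transfer heuristic.
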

\begin{proof}
The result holds trivially if $r = 1$. Throughout the proof we consider the case where $r > 1$.

We consider the optimization problem with the inequality constraint $x_1 \ge \cdots \ge x_r$ in \eqref{eq:generic-simplex-optimization} removed:
\begin{equation}\label{eq:prf-generic-simplex-optimization}
\max_{\x=[x_1, \ldots, x_r] \in \Re_+^r} \ \sum_{p=1}^r f(x_p)  ~~~~\text{s.t.}~ \sum_{p=1}^r x_p = c.
\end{equation}
We need to show that any global solution $\x^* = [x_1^*, \ldots, x_r^*]$ to \eqref{eq:prf-generic-simplex-optimization} is either $\x^* = [\frac{c}{r}, \ldots, \frac{c}{r}]$ or $\x^* = [x_H, \ldots, x_H, x_L]\cdot \bm{P}$ for some $x_H > \frac{c}{r}$, $x_L > 0$ and permutation matrix $\bm{P} \in \Re^{r \times r}$.
Let 
\begin{equation*}
    \cL(\x, \blambda) = \sum_{p=1}^r f(x_p) - \lambda_0 \cdot \left(\sum_{p=1}^r x_p - c\right) - \sum_{p=1}^r \lambda_p x_p
\end{equation*}
be the Lagragian function for \eqref{eq:prf-generic-simplex-optimization} where $\blambda = [\lambda_0, \lambda_1, \ldots, \lambda_r]$ is the Lagragian multiplier. 
By the first order optimality conditions (i.e., the Karush–Kuhn–Tucker (KKT) conditions, see, e.g., \cite[Theorem 12.1]{nocedal2006numerical}), there exists $\blambda^*= [\lambda_0^*, \lambda_1^*, \ldots, \lambda_r^*]$ such that
\begin{align}
    \sum_{p=1}^r x_q^* &= c,\label{eq:kkt1}\\
    x_q^* &\ge 0, ~\forall q\in \{1, \ldots, r\},\label{eq:kkt2}\\
    \lambda_q^* &\ge 0, ~\forall q\in \{1, \ldots, r\},\label{eq:kkt3}\\
    \lambda_q^* \cdot x_q^* &= 0, ~\forall q\in \{1, \ldots, r\}, ~~\text{and}~\label{eq:kkt4}\\
    [f'(x_1^*), \ldots, f'(x_r^*)] &= [\lambda_0^*, \ldots, \lambda_0^*] + [\lambda_1^*, \ldots, \lambda_r^*].\label{eq:kkt5}
\end{align}
By using the KKT conditions, we first show that all entries of $\x^*$ are strictly positive. 
To prove by contradiction, suppose that $\x^*$ has $r_0$ nonzero entries and $r - r_0$ zero entries for some $1 \le r_0 < r$. 
Note that $r_0 \ge 1$ since an all zero vector $\x^*$ does not satisfy the equality constraint \eqref{eq:kkt1}.

Without loss of generality, we may assume that $x_p^* > 0$ for $p \le r_0$ and $x_p^* = 0$ otherwise. 
By \eqref{eq:kkt4}, we have
\begin{equation*}
    \lambda_1^* = \cdots = \lambda_{r_0}^* = 0.
\end{equation*}
Plugging it into \eqref{eq:kkt5}, we get
\begin{equation*}
    f'(x_1^*) = \cdots = f'(x_{r_0}^*) = \lambda_0^*.
\end{equation*}
From \eqref{eq:kkt5} and noting that $x_{r_0+1}=0$ we get
\begin{equation*}
    f'(0) = f'(x_{r_0+1}) = \lambda_0^* + \lambda_{r_0 + 1}^*.
\end{equation*}
Finally, from \eqref{eq:kkt3}, we have
\begin{equation*}
    \lambda_{r_0+1}^* \ge 0.
\end{equation*}
Combining the last three equations above gives $f'(0) - f'(x_1^*) \ge 0$, contradicting the assumption that $f'(0) < f'(x)$ for all $x > 0$. 
This shows that $r_0 = r$, i.e., all entries of $\x^*$ are strictly positive.
Using this fact and \eqref{eq:kkt4} gives 
\begin{equation*}
    \lambda_p^* = 0 ~~\text{for all}~p \in \{1, \ldots, r\}.
\end{equation*}
Combining this with \eqref{eq:kkt5} gives
\begin{equation}\label{eq:prf-equal-first-order}
    f'(x_1^*) = \cdots = f'(x_{r}^*) = \lambda_0^*.
\end{equation}
It follows from the fact that $f'(x)$ is strictly unimodal that
\begin{equation}\label{eq:prf-two-values}
    \exists \ x_H \ge x_L > 0 ~~\text{s.t.}~~\{x_p^*\}_{p=1}^r \subseteq \{x_L, x_H\}.
\end{equation}
That is, the set $\{x_p^*\}_{p=1}^r$ may contain no more than two values. 
To see why this is true, suppose that there exists three distinct values for $\{x_p^*\}_{p=1}^r$. 
Without loss of generality we may assume that $0 < x_1^* < x_2^* < x_3^*$. 
If $x_2^* \le x_T$ (recall $x_T := \arg\max_{x\ge 0} f'(x)$), then by using the fact that $f'(x)$ is strictly increasing in $[0, x_T]$, we must have $f'(x_1^*) < f'(x_2^*)$ which contradicts \eqref{eq:prf-equal-first-order}. 
A similar contradiction is arrived by considering $f'(x_2^*)$ and $f'(x_3^*)$ for the case where $x_2^* > x_T$. 

There are two possible cases as a consequence of \eqref{eq:prf-two-values}. 
First, if $x_L = x_H$, then we have $x_1^* = \cdots = x_r^*$. 
By further using \eqref{eq:kkt1} we get 
\begin{equation*}
     x_1^* = \cdots = x_r^* = \frac{c}{r}.
\end{equation*}

It remains to consider the case where $x_L < x_H$. 
First, by the unimodality of $f'(x)$, we must have $x_L < x_T < x_H$, therefore 
\begin{equation}\label{eq:prf-second-order-sign}
    f''(x_L) > 0 ~\text{and}~f''(x_H) < 0.
\end{equation} 
Let $\ell := |\{p: x_p = x_L\}|$ be the number of entries of $\x^*$ that are equal to $x_L$ and $h:= r - \ell$. 
We show that it is necessary to have $\ell = 1$ and $h = r-1$. 
To prove by contradiction, assume that $\ell > 1$ and $h < r-1$. 
Without loss of generality we may assume $\{x_p^* = x_H\}_{p=1}^{h}$ and $\{x_p^* = x_L\}_{p=h+1}^{r}$. 
By \eqref{eq:prf-second-order-sign}, we have
\begin{equation*}
    f''(x_p^*) > 0 ~\text{for all}~p > h.
\end{equation*}
In particular, by using $h < r-1$ we have
\begin{equation}\label{eq:prf-last-two-positive}
    f''(x_{r-1}^*) > 0 ~\text{and}~f''(x_{r}^*) > 0.
\end{equation}
On the other hand, by using the second order necessary conditions for constraint optimization (see, e.g., \cite[Theorem 12.5]{nocedal2006numerical}), the following result holds
\begin{equation}\label{eq:prf-second-order}
\begin{split}
    \v^\top \nabla_{\x\x}\mathcal{L}(\x^*, \blambda^*) \v &\le 0, ~~\text{for all}~ \left\{\v: \left\langle \nabla_{\x}\left(\sum_{p=1}^r x_p^* - c\right), \v \right\rangle = 0\right\}\\
    \iff\quad \sum_{p=1}^r f''(x_p^*) \cdot v_p^2 &\le 0, ~~\text{for all}~ \left\{\v=[v_1, \ldots, v_r]: \sum_{p=1}^r v_p = 0\right\}.
\end{split}
\end{equation}
Take $\v$ to be such that $v_1 = \cdots = v_{r-2} = 0$ and $v_{r-1} = - v_r \ne 0$. Plugging it into \eqref{eq:prf-second-order} gives
\begin{equation*}
    f''(x_{r-1}^*) + f''(x_{r}^*) \le 0, 
\end{equation*}
which contradicts \eqref{eq:prf-last-two-positive}. 
Therefore, we may conclude that $\ell = 1$. 
That is, $\x^*$ is given by
\begin{equation*}
\x^* = [x_H, \ldots, x_H, x_L], \ \text{where} \ x_H > x_L > 0.
\end{equation*}
By using the condition in \eqref{eq:kkt1}, we may further show that
\begin{align*}
&(r-1) x_H + x_L = c \implies x_H = \frac{c}{r-1} - \frac{c}{x_L} < \frac{x_L}{r-1}, \\
&(r-1) x_H + x_L = c \implies (r-1) x_H + x_H > c \implies x_H > \frac{c}{r},
\end{align*}
which completes our proof.
\end{proof}

\begin{proof}[Proof of Theorem~\ref{thm:maximal-rate-reduction}]
Without loss of generality, let $\Z^* = [\Z_1^*, \ldots, \Z_k^*]$ be the optimal solution of problem~\eqref{eq:maximal-rate-reduction-thm}. 

To show that $\Z_j^*, j \in \left\{ 1, \dots, k\right\}$ are pairwise orthogonal, suppose for the purpose of arriving at a contradiction that $(\Z_{j_1}^*)^\top \Z_{j_2}^* \ne \0$ for some $1 \le j_1 < j_2 \le k$. 
By using Lemma~\ref{thm:rate-reduction-bound}, the strict inequality in \eqref{eq:rate-reduction-bound} holds for the optimal solution $\Z^*$. That is, 
\begin{equation}\label{eq:prf-optimal-strict-inequality}
    \Delta R(\Z^*, \bm{\Pi}, \epsilon) < 
    \sum_{j=1}^k \frac{1}{2m}\log\left( \frac{\det^m \left(\I + \frac{d}{m\epsilon^2}\Z_j^* (\Z_j^*)^\top\right)}{\det^{m_j}\left(\I + \frac{d}{m_j\epsilon^2}\Z_j^* (\Z_j^*)^\top\right)}\right).
\end{equation}
On the other hand, since $\sum_{j=1}^k d_j \le d$, there exists $\{\U_j' \in \R^{d \times d_j}\}_{j=1}^k$ such that the columns of the matrix $[\U_1', \ldots, \U_k']$ are orthonormal. 
Denote $\Z_j^* = \U_j^* \bfSigma_j^* (\V_j^*)^\top$ the compact SVD of $\Z_j^*$, and let
\begin{equation*}
    \Z' = [\Z_1', \ldots, \Z_k'], ~~\text{where}~ \Z_j' = \U_j' \bfSigma_j^* (\V_j^*)^\top.
\end{equation*}
It follows that 
\begin{equation*}
    (\Z_{j_1}')^\top \Z_{j_2}' = \V_{j_1}^*\bfSigma_{j_1}^* (\U_{j_1}')^\top \U_{j_2}' \bfSigma_{j_2}^* (\V_{j_2}^*)^\top= \V_{j_1}^*\bfSigma_{j_1}^*  \0  \bfSigma_{j_2}^* (\V_{j_2}^*)^\top = \0 ~~\text{for all}~ 1\le j_1 < j_2 \le k. 
\end{equation*}
That is, the matrices $\Z_1', \ldots, \Z_k'$ are pairwise orthogonal. 
Applying Lemma~\ref{thm:rate-reduction-bound} for $\Z'$ gives
\begin{equation}\label{eq:prf-constructed-optimal}
\begin{split}
    \Delta R(\Z', \bm{\Pi}, \epsilon) &= 
    \sum_{j=1}^k \frac{1}{2m}\log\left( \frac{\det^m\left(\I + \frac{d}{m\epsilon^2}\Z_j' (\Z_j')^\top\right)}{\det^{m_j}\left(\I + \frac{d}{m_j\epsilon^2}\Z_j' (\Z_j')^\top\right)}\right)\\
    &= \sum_{j=1}^k \frac{1}{2m}\log\left( \frac{\det^m\left(\I + \frac{d}{m\epsilon^2}\Z_j^* (\Z_j^*)^\top\right)}{\det^{m_j}\left(\I + \frac{d}{m_j\epsilon^2}\Z_j^* (\Z_j^*)^\top\right)}\right),
\end{split}
\end{equation}
where the second equality follows from Lemma~\ref{thm:coding-rate-invariant}.
Comparing \eqref{eq:prf-optimal-strict-inequality} and \eqref{eq:prf-constructed-optimal} gives $\Delta R(\Z', \bm{\Pi}, \epsilon) > \Delta R(\Z^*, \bm{\Pi}, \epsilon)$, which contradicts the optimality of $\Z^*$.
Therefore, we must have 
\begin{equation*}
    (\Z_{j_1}^*)^\top \Z_{j_2}^* = \0 ~\text{for all}~1 \le j_1 < j_2 \le k.
\end{equation*} 
Moreover, from Lemma~\ref{thm:coding-rate-invariant} we have
\begin{equation}\label{eq:prf-coding-rate-decomposition}
    \Delta R(\Z^*, \bm{\Pi}, \epsilon) = 
    \sum_{j=1}^k \frac{1}{2m}\log\left( \frac{\det^m \left(\I + \frac{d}{m\epsilon^2}\Z_j^* (\Z_j^*)^\top\right)}{\det^{m_j}\left(\I + \frac{d}{m_j\epsilon^2}\Z_j^* (\Z_j^*)^\top\right)}\right).
\end{equation}
We now prove the result concerning the singular values of $\Z_j^*$. 
To start with, we claim that the following result holds:
\begin{equation}\label{eq:prf-rich-key}
     \Z_j^* \in \arg\max_{\Z_j} \  \log\left(\frac{\det^m\left(\I + \frac{d}{m\epsilon^2}\Z_j \Z_j^\top\right)}{\det^{m_j}\left(\I + \frac{d}{m_j\epsilon^2}\Z_j \Z_j^\top
     \right)}\right) ~~\text{s.t.}~\|\Z_j\|_F^2 = m_j,\, \rank(\Z_j) \le d_j.
\end{equation}
To see why \eqref{eq:prf-rich-key} holds, suppose that there exists $\widetilde{\Z}_j$ such that $\|\widetilde{\Z}_j\|_F^2 = m_j$, $\rank(\widetilde{\Z}_j) \le d_j$ and
\begin{equation}\label{eq:prf-tildeZ-inequality}
    \log\left(\frac{\det^m\left(\I + \frac{d}{m\epsilon^2}\widetilde{\Z}_j \widetilde{\Z}_j^\top\right)}{\det^{m_j}\left(\I + \frac{d}{m_j\epsilon^2}\widetilde{\Z}_j \widetilde{\Z}_j^\top\right)}\right) > \log\left(\frac{\det^m\left(\I + \frac{d}{m\epsilon^2}\Z_j^* (\Z_j^*)^\top\right)}{\det^{m_j}\left(\I + \frac{d}{m_j\epsilon^2}\Z_j^* (\Z_j^*)^\top\right)}\right).
\end{equation}

Denote $\widetilde{\Z}_j = \widetilde{\U}_j \widetilde{\bfSigma}_j\widetilde{\V}_j^\top$ the compact SVD of $\widetilde{\Z}_j$ and let
\begin{equation*}
    \Z' = [\Z_1^*, \ldots, \Z_{j-1}^*, \Z_j', \Z_{j+1}^*, \ldots, \Z_k^*], ~~\text{where}~\Z_j' := \U_j^* \widetilde{\bfSigma}_j\widetilde{\V}_j^\top.
\end{equation*}
Note that $\|\Z_j'\|_F^2 = m_j$, $\rank(\Z_j') \le d_j$ and $(\Z_j')^\top \Z_{j'}^* = \0$ for all $j' \ne j$. 
It follows that $\Z'$ is a feasible solution to \eqref{eq:maximal-rate-reduction-thm} and that the components of $\Z'$ are pairwise orthogonal.  
By using Lemma~\ref{thm:rate-reduction-bound}, Lemma~\ref{thm:coding-rate-invariant} and \eqref{eq:prf-tildeZ-inequality} we have
\begin{equation*}
\begin{split}
    &\Delta R(\Z', \bm{\Pi}, \epsilon) \\
    =\ & \frac{1}{2m}\log\left( \frac{\det^m\left(\I + \frac{d}{m\epsilon^2}\Z_j' (\Z_j')^\top\right)}{\det^{m_j}\left(\I + \frac{d}{m_j\epsilon^2}\Z_j' (\Z_j')^\top\right)}\right) + 
    \sum_{j' \ne j} \frac{1}{2m}\log\left( \frac{\det^m\left(\I + \frac{d}{m\epsilon^2}\Z_{j'}^* (\Z_{j'}^*)^\top\right)}{\det^{m_{j'}}\left(\I + \frac{d}{m_{j'}\epsilon^2}\Z_{j'}^* (\Z_{j'}^*)^\top\right)}\right)\\
    =\ & \frac{1}{2m}\log\left( \frac{\det^m\left(\I + \frac{d}{m\epsilon^2}\widetilde{\Z}_j (\widetilde{\Z}_j)^\top\right)}{\det^{m_j}\left(\I + \frac{d}{m_j\epsilon^2}\widetilde{\Z}_j (\widetilde{\Z}_j)^\top\right)}\right) + 
    \sum_{j' \ne j} \frac{1}{2m}\log\left( \frac{\det^m\left(\I + \frac{d}{m\epsilon^2}\Z_{j'}^* (\Z_{j'}^*)^\top\right)}{\det^{m_{j'}}\left(\I + \frac{d}{m_{j'}\epsilon^2}\Z_{j'}^* (\Z_{j'}^*)^\top\right)}\right)\\
    >\ & \frac{1}{2m}\log\left(\frac{\det^m\left(\I + \frac{d}{m\epsilon^2}\Z_j^* (\Z_j^*)^\top\right)}{\det^{m_j}\left(\I + \frac{d}{m_j\epsilon^2}\Z_j^* (\Z_j^*)^\top\right)}\right) + 
    \sum_{j' \ne j} \frac{1}{2m}\log\left( \frac{\det^m\left(\I + \frac{d}{m\epsilon^2}\Z_{j'}^* (\Z_{j'}^*)^\top\right)}{\det^{m_{j'}}\left(\I + \frac{d}{m_{j'}\epsilon^2}\Z_{j'}^* (\Z_{j'}^*)^\top\right)}\right)\\
    =\ & \sum_{j=1}^k \frac{1}{2m}\log \left(\frac{\det^m \left(\I + \frac{d}{m\epsilon^2}\Z_j^* (\Z_j^*)^\top\right)}{\det^{m_j}\left(\I + \frac{d}{m_j\epsilon^2}\Z_j^* (\Z_j^*)^\top\right)}\right).
\end{split}
\end{equation*}
Combining it with \eqref{eq:prf-coding-rate-decomposition} shows $\Delta R(\Z', \bm{\Pi}, \epsilon) > \Delta R(\Z^*, \bm{\Pi}, \epsilon)$, contradicting the optimality of $\Z^*$.
Therefore, the result in \eqref{eq:prf-rich-key} holds.

Observe that the optimization problem in \eqref{eq:prf-rich-key} depends on $\Z_j$ only through its singular values. 
That is, by letting $\bfsigma_j:=[\sigma_{1,j}, \ldots, \sigma_{\min(m_j, d),j}]$ be the singular values of $\Z_j$, we have
\begin{equation*}
     \log\left(\frac{\det^m\left(\I + \frac{d}{m\epsilon^2}\Z_j \Z_j^\top\right)}{\det^{m_j}\left(\I + \frac{d}{m_j\epsilon^2}\Z_j \Z_j^\top
     \right)} \right)
     = \sum_{p=1}^{\min\{m_j, d\}} \log\left( \frac{(1+\frac{d}{m \epsilon^2} \sigma_{p,j}^2)^m}{(1+\frac{d}{m_j \epsilon^2} \sigma_{p,j}^2)^{m_j}}\right),
\end{equation*}
also, we have
\begin{equation*}
  \|\Z_j\|_F^2 = \sum_{p=1}^{\min\{m_j, d\}} \sigma_{p,j}^2  ~~\text{and}~~ \rank(\Z_j) = \|\bfsigma_j\|_0. 
\end{equation*}
Using these relations, \eqref{eq:prf-rich-key} is equivalent to
\begin{equation}\label{eq:prf-sigma-optimization-all}
\begin{split}
    &\max_{\bfsigma_j \in \Re_+^{\min\{m_j, d\}}} \sum_{p=1}^{\min\{m_j, d\}} \log\left( \frac{(1+\frac{d}{m \epsilon^2} \sigma_{p,j}^2)^m}{(1+\frac{d}{m_j \epsilon^2} \sigma_{p,j}^2)^{m_j}}\right) \\
    &\ ~~\text{s.t.}~\sum_{p=1}^{\min\{m_j, d\}} \sigma_{p,j}^2 = m_j, ~\text{and}~ \ \rank(\Z_j) = \|\bfsigma_j\|_0 
\end{split}
\end{equation}
Let $\bfsigma_j^* = [\sigma_{1,j}^*, \ldots, \sigma_{\min\{m_j, d\},j}^*]$ be an optimal solution to \eqref{eq:prf-sigma-optimization-all}. 
Without loss of generality we assume that the entries of $\bfsigma_j^*$ are sorted in descending order. 
It follows that 
\begin{equation*}
\sigma_{p, j}^* = 0 \ ~\text{for all}~ \ p > d_j,
\end{equation*} 
and
\begin{equation}\label{eq:prf-sigma-optimization}
    [\sigma_{1,j}^*, \ldots, \sigma_{d_j, j}^*] = \argmax_{\substack{[\sigma_{1,j}, \ldots, \sigma_{d_j, j}] \in \Re^{d_j}_{+}\\ \sigma_{1,j} \ge \cdots \ge \sigma_{d_j,j}}} \  \sum_{p=1}^{d_j} \log\left( \frac{(1+\frac{d}{m \epsilon^2} \sigma_{p,j}^2)^m}{(1+\frac{d}{m_j \epsilon^2} \sigma_{p,j}^2)^{m_j}}\right)~~~~\text{s.t.}~\sum_{p=1}^{d_j} \sigma_{p,j}^2 = m_j.
\end{equation}

Then we define 
\begin{equation*}
    f(x; d, \epsilon, m_j, m) = \log\left( \frac{(1+\frac{d}{m \epsilon^2} x)^m}{(1+\frac{d}{m_j \epsilon^2} x)^{m_j}}\right),
\end{equation*}
and rewrite \eqref{eq:prf-sigma-optimization} as
\begin{equation}\label{eq:prf-sigma-optimization_f}
    \max_{\substack{[x_1, \ldots, x_{d_j}]  \in \Re_+^{d_j}\\x_1 \ge \cdots \ge x_{d_j}}} \ \sum_{p=1}^{d_j}  f(x_p; d, \epsilon, m_j, m) \ ~~\text{s.t.}~ \sum_{p=1}^{d_j} x_p = m_j.
\end{equation}
We compute the first and second derivative for $f$ with respect to $x$, which are given by
\begin{align*}
    f'(x; d, \epsilon, m_j, m) &= \frac{d^2 x (m-m_j)}{(dx+m\epsilon^2)(dx+m_j\epsilon^2)}, \\
    f''(x; d, \epsilon, m_j, m) &= \frac{d^2(m-m_j)(m m_j\epsilon^4 - d^2 x^2)}{(dx+m\epsilon^2)^2(dx+m_j\epsilon^2)^2}.
\end{align*}
Note that
\begin{itemize}
    \item $0 = f'(0) < f'(x)$ for all $x > 0$,
    \item $f'(x)$ is strictly increasing in $[0, x_T]$ and strictly decreasing in $[x_T, \infty)$, where $x_T =\epsilon^2\sqrt{\frac{m}{d}\frac{m_j}{d}}$, and
    \item by using the condition $\epsilon ^4 < \frac{m_j}{m}\frac{d^2}{d_j^2}$, we have $f''(\frac{m_j}{d_j}) < 0$.
\end{itemize}
Therefore, we may apply Lemma~\ref{thm:generic-simplex-optimization} and conclude that the unique optimal solution to \eqref{eq:prf-sigma-optimization_f} is either
\begin{itemize}
    \item $\x^* = [\frac{m_j}{d_j}, \ldots, \frac{m_j}{d_j}]$, or
    \item $\x^* = [x_H, \ldots, x_H, x_L]$ for some $x_H \in (\frac{m_j}{d_j}, \frac{m_j}{d_j -1})$ and $x_L > 0$.
\end{itemize} 
Equivalently, we have either
\begin{itemize}
    \item $[\sigma_{1, j}^*, \ldots, \sigma_{d_j, j}^*] = \left[\sqrt{\frac{m_j}{d_j}}, \ldots, \sqrt{\frac{m_j}{d_j}}\right]$, or
    \item $[\sigma_{1, j}^*, \ldots, \sigma_{d_j, j}^*] = [\sigma_H, \ldots, \sigma_H, \sigma_L]$ for some $\sigma_H \in \left(\sqrt{\frac{m_j}{d_j}}, \sqrt{\frac{m_j}{d_j -1}}\right)$ and $\sigma_L > 0$,
\end{itemize} 
as claimed.
\end{proof}

\newpage
\section{Additional Simulations and Experiments}\label{ap:additional-exp}
\subsection{Simulations - Verifying Diversity Promoting Properties of MCR$^2$}
As proved in Theorem~\ref{thm:maximal-rate-reduction}, the proposed MCR$^2$ objective promotes within-class diversity. In this section, we use simulated data to verify the diversity promoting property of MCR$^2$. As shown in Table~\ref{table:simulations}, we calculate our proposed MCR$^2$ objective on simulated data. We observe that orthogonal subspaces with \textit{higher} dimension achieve higher MCR$^2$ value, which is consistent with our theoretical analysis in Theorem~\ref{thm:maximal-rate-reduction}.

\begin{table}[h]
\begin{center}
\caption{\small \textbf{MCR$^2$ objective on simulated data.} We evaluate the proposed MCR$^2$ objective defined in \eqref{eqn:maximal-rate-reduction}, including ${R}$,  ${R}^{c}$, and  $\Delta{R}$, on simulated data. The output dimension $d$ is set as 512, 256, and 128. We set the batch size as $m=1000$ and random assign the label of each sample from $0$ to $9$, i.e., 10 classes.  We generate two types of data: 1) (\textsc{Random Gaussian}) For comparison with data without structures, for each class we generate random vectors sampled from Gaussian distribution (the dimension is set as the output dimension $d$) and normalize each vector to be on the unit sphere. 2) (\textsc{Subspace}) For each class, we generate vectors sampled from its corresponding subspace with  dimension $d_j$ and normalize each vector to be on the unit sphere. We consider the subspaces from different classes are orthogonal/nonorthogonal to each other.}
\label{table:simulations}
\begin{small}
\begin{sc}
\begin{tabular}{l | c c c c c}
\toprule
& ${R}$ &  ${R}^{c}$ &  $\Delta{R}$ &   Orthogonal? & Output Dimension\\
\midrule
Random Gaussian & 552.70 & 193.29 & 360.41 & {\cmark} & 512 \\
Subspace  ($d_j = 50$) & 545.63 & 108.46 & \textbf{437.17} & {\cmark} & 512 \\
Subspace  ($d_j = 40$)  & 487.07 & 92.71 & 394.36 & {\cmark} & 512 \\
Subspace  ($d_j = 30$) & 413.08 & 74.84 & 338.24 & {\cmark} & 512 \\
Subspace  ($d_j = 20$) & 318.52 &	54.48 &	264.04 & {\cmark} & 512 \\
Subspace  ($d_j = 10$) & 195.46 &	30.97 &	164.49 & {\cmark} & 512 \\
Subspace  ($d_j = 1$) & 31.18 &	4.27 &	26.91 & {\cmark} & 512 \\
\midrule
Random Gaussian & 292.71 &	154.13 &	138.57 & {\cmark} & 256 \\
Subspace  ($d_j = 25$) & 288.65 &	56.34 &	\textbf{232.31} & {\cmark} & 256 \\
Subspace  ($d_j = 20$)  & 253.51 &	47.58 &	205.92 & {\cmark} & 256 \\
Subspace  ($d_j = 15$) & 211.97 &	38.04 &	173.93 & {\cmark} & 256 \\
Subspace  ($d_j = 10$) & 161.87 &	27.52 &	134.35 & {\cmark} & 256 \\
Subspace  ($d_j = 5$) & 98.35 &	15.55 &	82.79 & {\cmark} & 256 \\
Subspace  ($d_j = 1$) & 27.73 &	3.92 &	23.80 & {\cmark} & 256 \\
\midrule
Random Gaussian & 150.05 &	110.85 &	39.19 & {\cmark} & 128 \\
Subspace  ($d_j = 12$) & 144.36 &	27.72 &	\textbf{116.63} & {\cmark} & 128 \\
Subspace  ($d_j = 10$) & 129.12 &	24.06 &	105.05 & {\cmark} & 128 \\
Subspace  ($d_j = 8$) & 112.01 &	20.18 &	91.83 & {\cmark} & 128 \\
Subspace  ($d_j = 6$) & 92.55 &	16.04 &	76.51 & {\cmark} & 128 \\
Subspace  ($d_j = 4$) & 69.57 &	11.51 &	58.06 & {\cmark} & 128 \\
Subspace  ($d_j = 2$) & 41.68 &	6.45 &	35.23 & {\cmark} & 128 \\
Subspace  ($d_j = 1$) & 24.28 &	3.57 &	20.70 & {\cmark} & 128 \\
\midrule
Subspace  ($d_j = 50$) & 145.60 &	75.31 &	70.29 & {\xmark} & 128 \\
Subspace  ($d_j = 40$) & 142.69 &	65.68 &	77.01 & {\xmark} & 128 \\
Subspace  ($d_j = 30$) & 135.42 &	54.27 &	81.15 & {\xmark} & 128 \\
Subspace  ($d_j = 20$) & 120.98 &	40.71 &	80.27 & {\xmark} & 128 \\
Subspace  ($d_j = 15$) & 111.10 &	32.89 &	78.21 & {\xmark} & 128 \\
Subspace  ($d_j = 12$) & 101.94 &	27.73 &	74.21 & {\xmark} & 128 \\
\bottomrule
\end{tabular}
\end{sc}
\end{small}
\end{center}
\vskip -0.1in
\end{table}

\subsection{Implementation Details}\label{sec:appendix-exp}
\paragraph{Training Setting.} We mainly use ResNet-18~\cite{he2016deep} in our experiments, where we use 4 residual blocks with layer widths $[64, 128, 256, 512]$.  The implementation of network architectures used in this paper are mainly based on this github repo.\footnote{\url{https://github.com/kuangliu/pytorch-cifar}} For data augmentation in the supervised setting, we apply the \texttt{RandomCrop} and \texttt{RandomHorizontalFlip}. For the supervised setting, we train the models for 500 epochs and use stage-wise learning rate decay every 200 epochs (decay by a factor of 10). For the supervised setting, we train the models for 100 epochs and use stage-wise learning rate decay at 20-th epoch and 40-th epoch (decay by a factor of 10).

\paragraph{Evaluation Details.} For the supervised setting, we set the number of principal components for nearest subspace classifier $r_j = 30$. We also study the effect of $r_j$ in Section~\ref{sec:appendix-subsec-sup}. For the CIFAR100 dataset, we consider 20 superclasses and set the cluster number as 20, which is the same setting as in \cite{chang2017deep, wu2018unsupervised}.

\paragraph{Datasets.} We apply the default datasets in PyTorch, including CIFAR10, CIFAR100, and STL10.

\paragraph{Augmentations $\mathcal{T}$ used for the self-supervised setting.} We apply the same data augmentation for CIFAR10 dataset and CIFAR100 dataset and the pseudo-code is as follows.

\begin{tcolorbox}
\begin{footnotesize}
\begin{verbatim}
import torchvision.transforms as transforms
TRANSFORM = transforms.Compose([
    transforms.RandomResizedCrop(32),
    transforms.RandomHorizontalFlip(),
    transforms.RandomApply([transforms.ColorJitter(0.4, 0.4, 0.4, 0.1)], p=0.8),
    transforms.RandomGrayscale(p=0.2),
    transforms.ToTensor()])
\end{verbatim}
\end{footnotesize}
\end{tcolorbox}

The augmentations we use for STL10 dataset and the pseudo-code is as follows.

\begin{tcolorbox}
\begin{footnotesize}
\begin{verbatim}
import torchvision.transforms as transforms
TRANSFORM = transforms.Compose([
    transforms.RandomResizedCrop(96),
    transforms.RandomHorizontalFlip(),
    transforms.RandomApply([transforms.ColorJitter(0.8, 0.8, 0.8, 0.2)], p=0.8),
    transforms.RandomGrayscale(p=0.2),
    GaussianBlur(kernel_size=9),
    transforms.ToTensor()])
\end{verbatim}
\end{footnotesize}
\end{tcolorbox}

\paragraph{Cross-entropy training details.} For CE models presented in Table~\ref{table:label-noise}, Figure \ref{fig:pca-ce-1}-\ref{fig:pca-ce-3}, and Figure~\ref{fig:heatmap-plot}, we use the same network architecture, ResNet-18~\cite{he2016deep}, for cross-entropy training on CIFAR10, and set the output dimension as 10 for the last layer.  We apply SGD, and set learning rate $\texttt{lr=0.1}$, momentum $\texttt{momentum=0.9}$, and weight decay $\texttt{wd= 5e{-4}}$. We set the total number of training epoch as 400, and use stage-wise learning rate decay every 150 epochs (decay by a factor of 10).

\subsection{Additional Experimental Results}

\subsubsection{PCA Results of MCR$^2$ Training versus Cross-Entropy Training}\label{sec:subsec-pca}

\begin{figure*}[h]
\subcapcentertrue
  \begin{center}
    \subfigure[\label{fig:pca-mcr-1}PCA: MCR$^2$ training learned features for overall data (first 30 components).]{\includegraphics[width=0.31\textwidth]{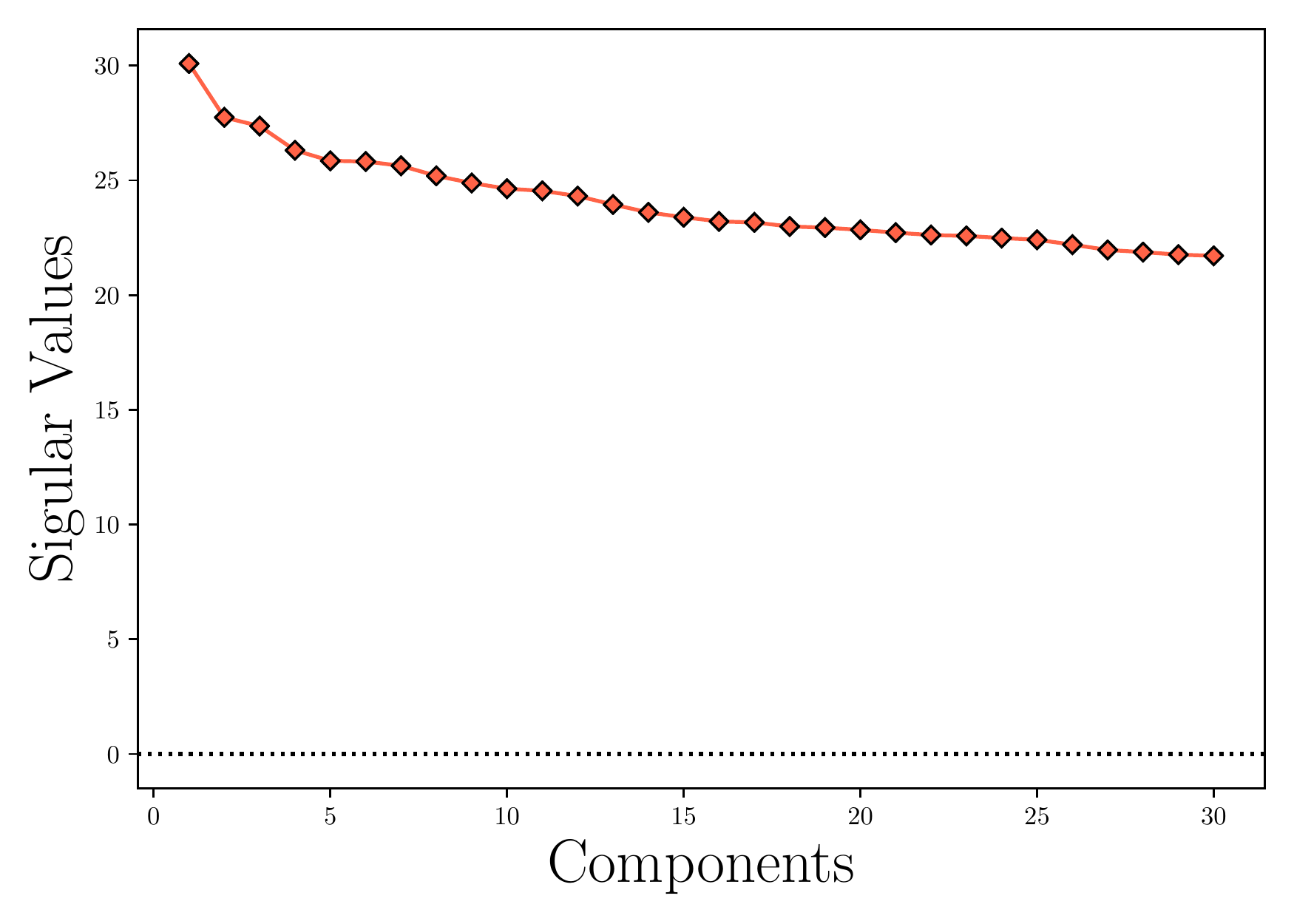}}
    \subfigure[\label{fig:pca-mcr-2}PCA: MCR$^2$ training learned features for overall data.]{\includegraphics[width=0.31\textwidth]{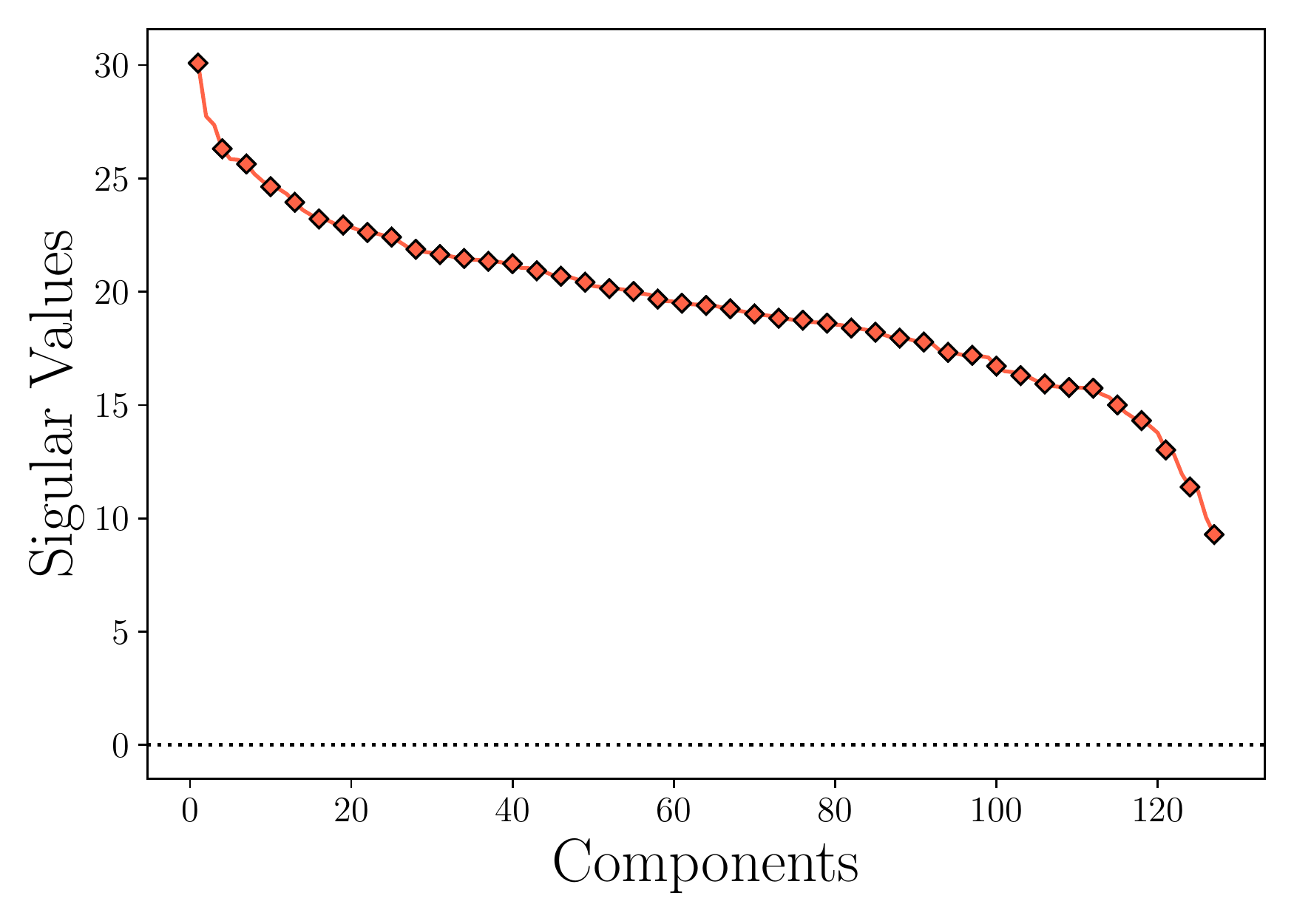}}
    \subfigure[\label{fig:pca-mcr-3}PCA: MCR$^2$ training learned features for every class.]{\includegraphics[width=0.31\textwidth]{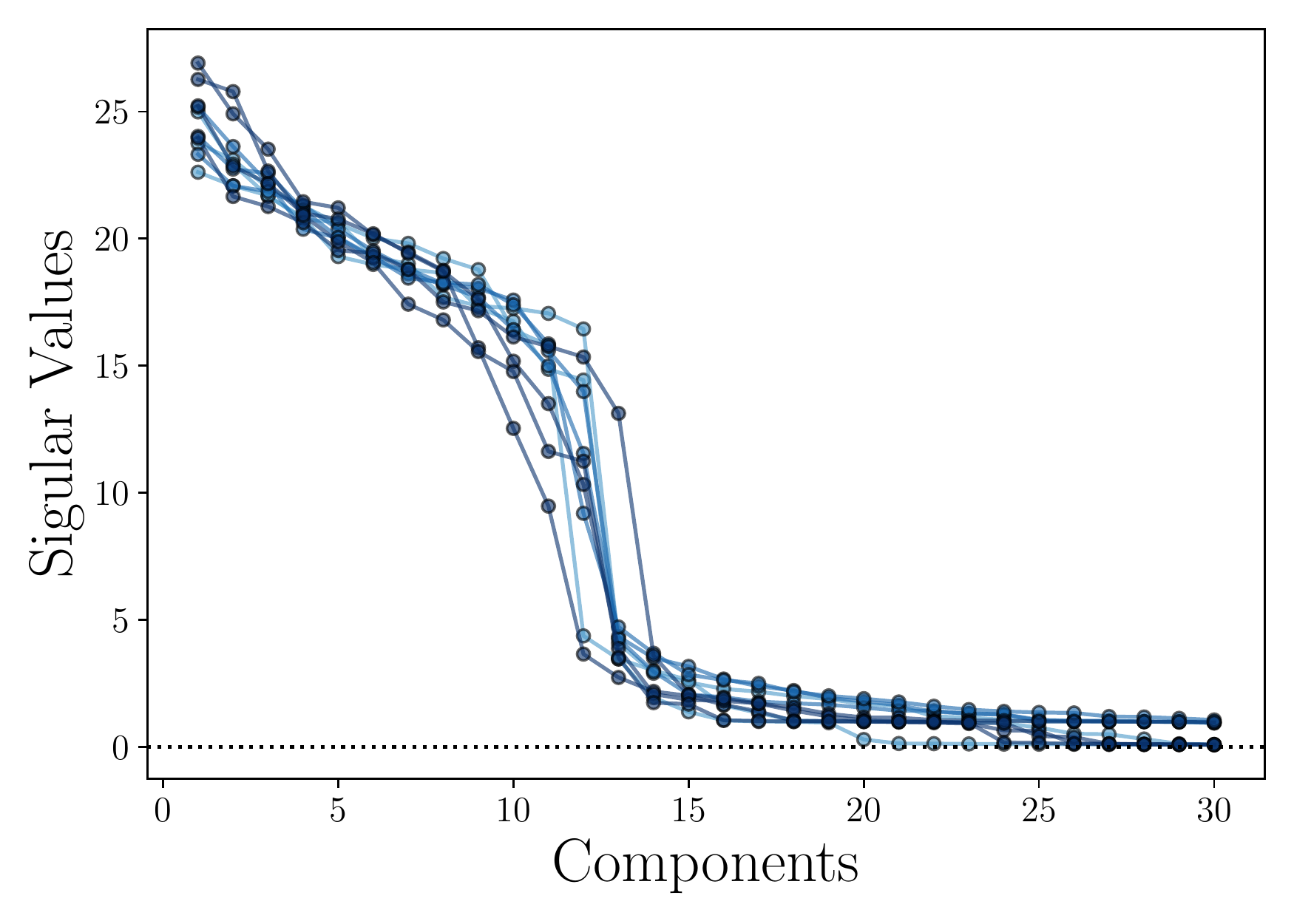}}
    \subfigure[\label{fig:pca-ce-1}PCA: cross-entropy training learned features for overall data (first 30 components).]{\includegraphics[width=0.31\textwidth]{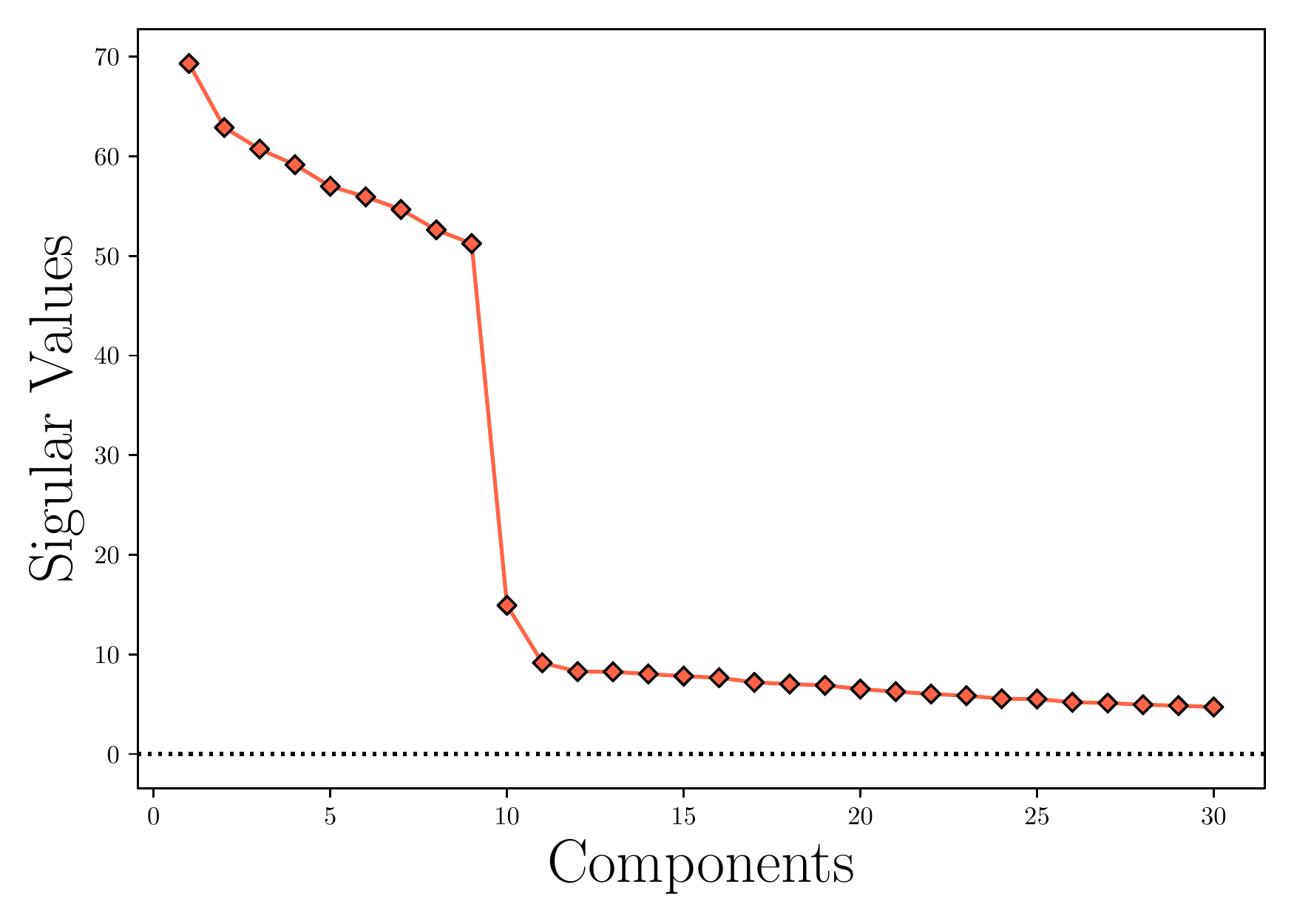}}
    \subfigure[\label{fig:pca-ce-2}PCA: cross-entropy training learned features for overall data.]{\includegraphics[width=0.31\textwidth]{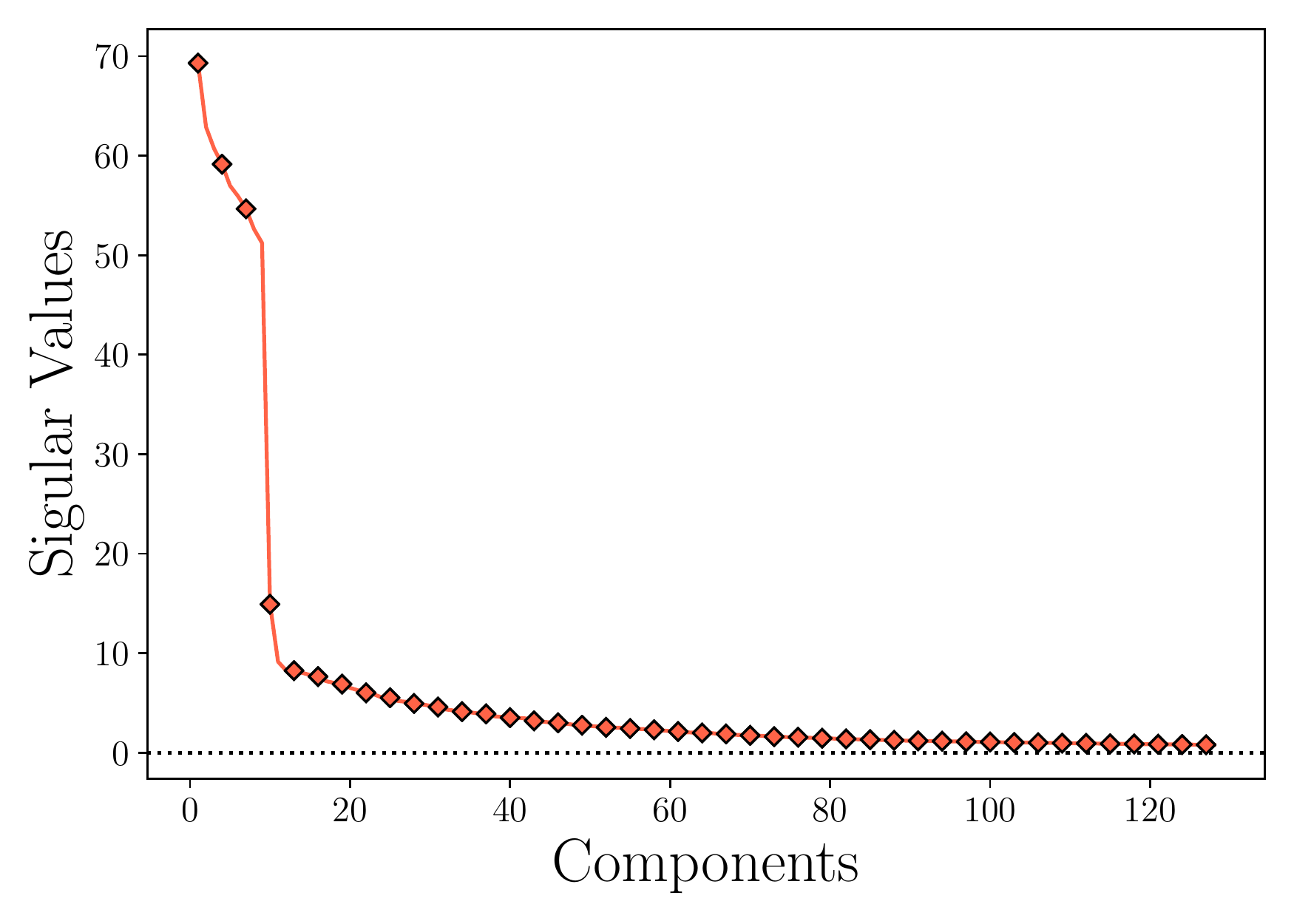}}
    \subfigure[\label{fig:pca-ce-3}PCA: cross-entropy training learned features for every class.]{\includegraphics[width=0.31\textwidth]{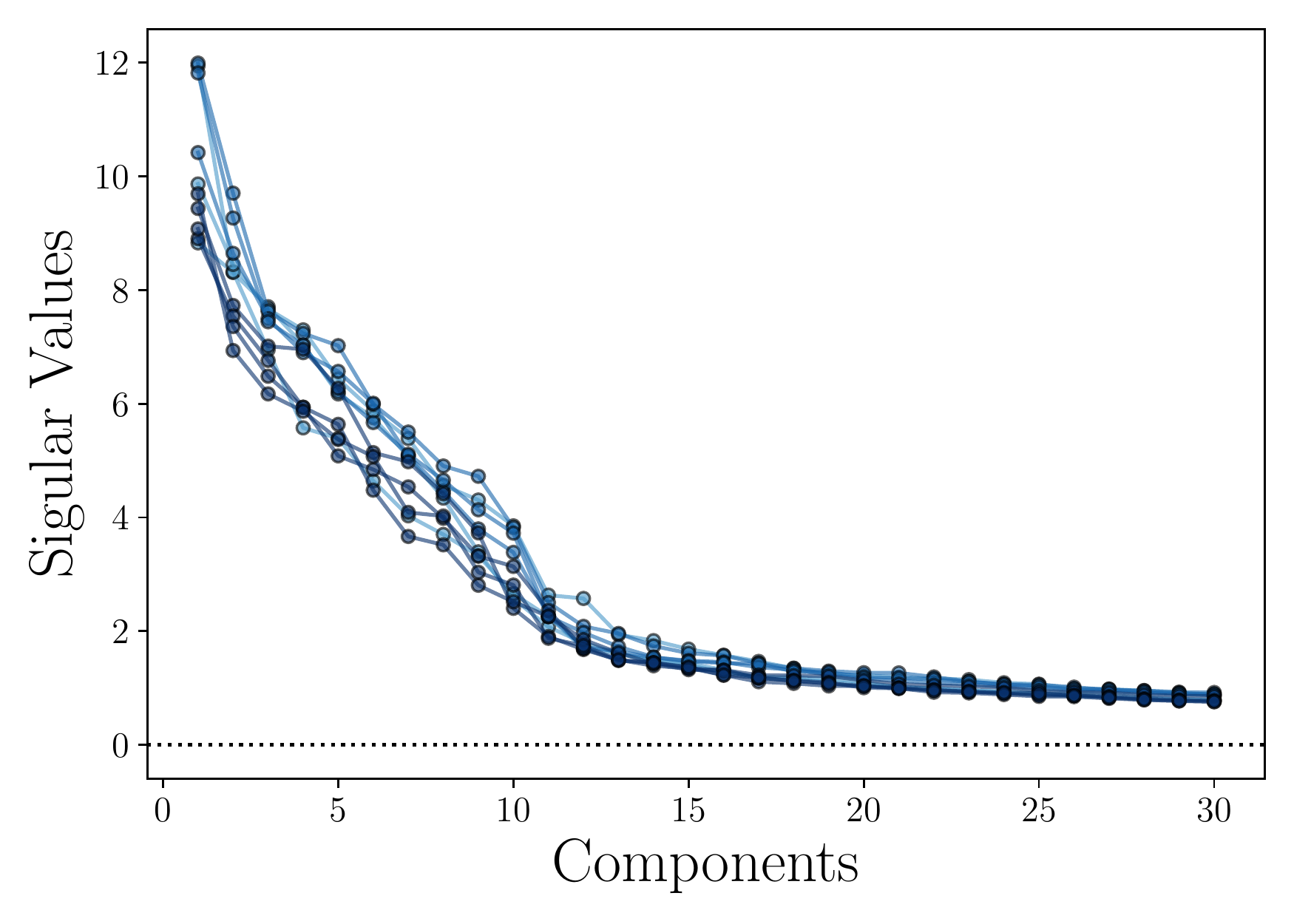}}
    \vskip -0.05in
    \caption{\small Principal component analysis (PCA) of learned representations for the  MCR$^2$ trained model (\textbf{first row}) and the cross-entropy trained model (\textbf{second row}).}
    \label{fig:pca-plot}
  \end{center}
\end{figure*}

\begin{figure*}[h]
  \begin{center}
    \subfigure{\includegraphics[width=0.47\textwidth]{Figures/heatmap.pdf}}
    \hspace{0.25cm}
    \subfigure{\includegraphics[width=0.47\textwidth]{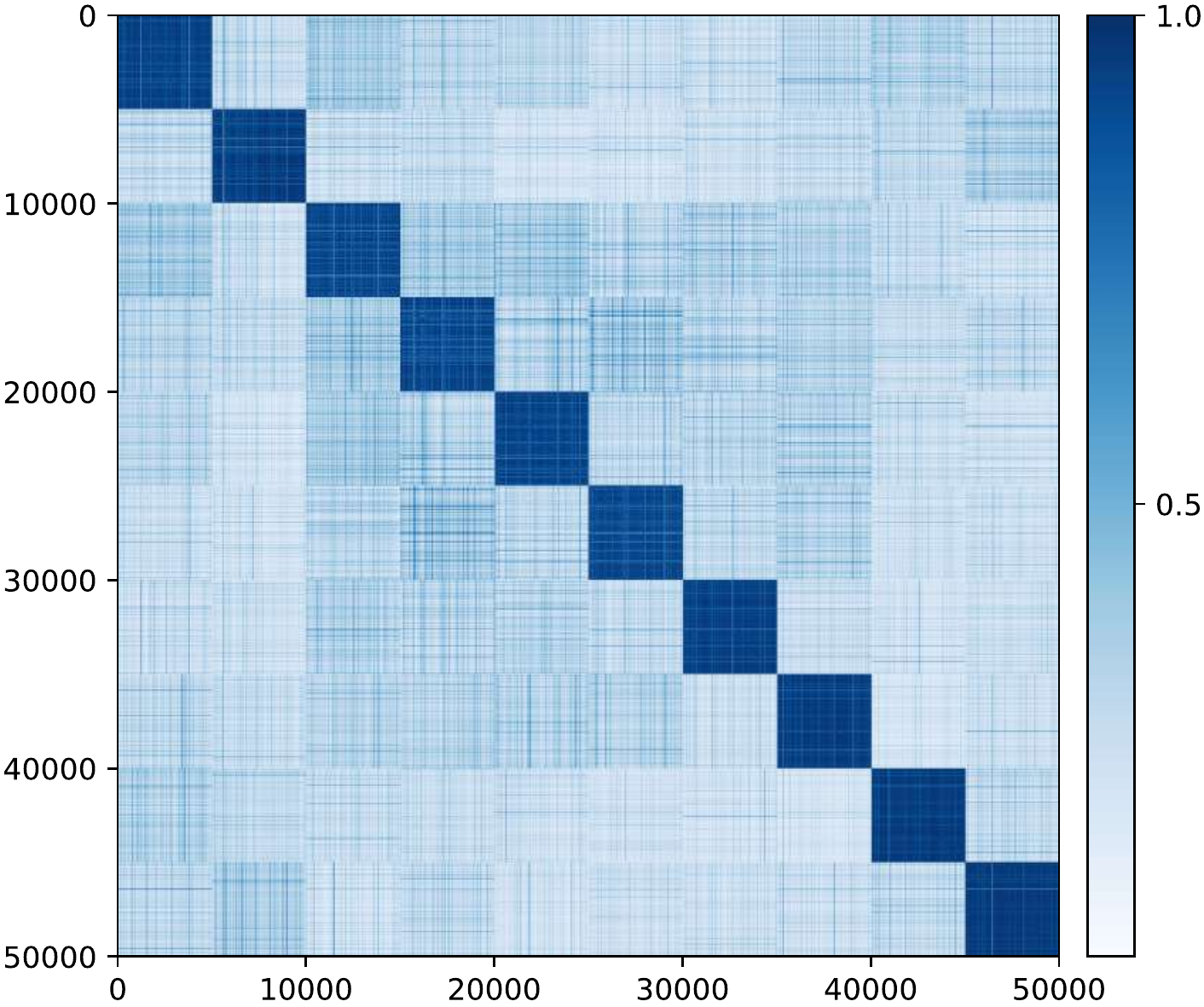}}
    \vskip -0.1in
    \caption{\small Cosine similarity between learned features by using the MCR$^2$ objective  (\textbf{left}) and CE loss (\textbf{right}).}
    \label{fig:heatmap-plot}
  \end{center}
  \vskip -0.1in
\end{figure*}

\begin{figure*}[h]
\subcapcentertrue
\begin{center}
    \subfigure[\label{fig:visual-bird}Bird]{\includegraphics[width=0.47\textwidth]{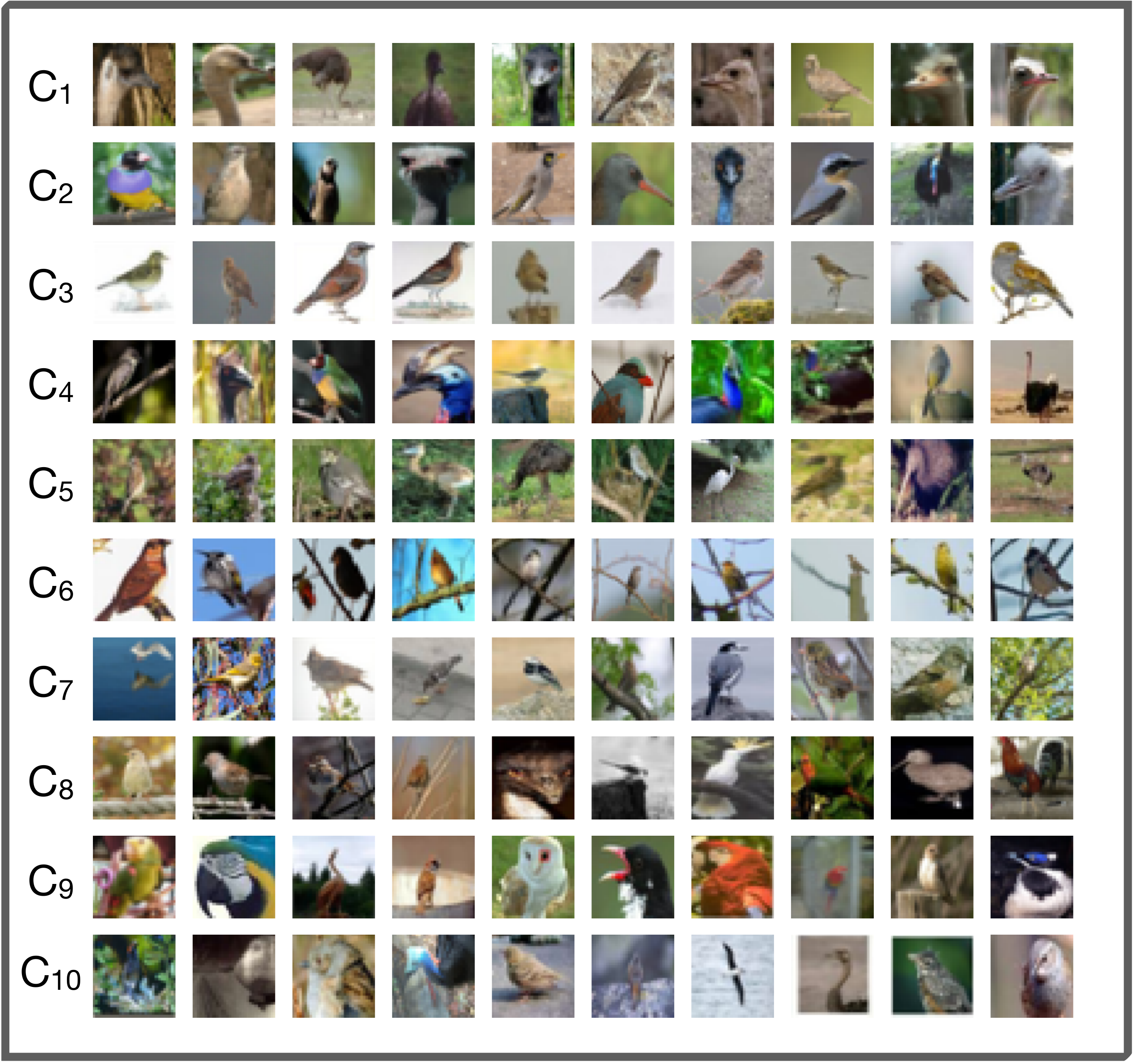}}
    \subfigure[\label{fig:visual-ship}Ship]{\includegraphics[width=0.47\textwidth]{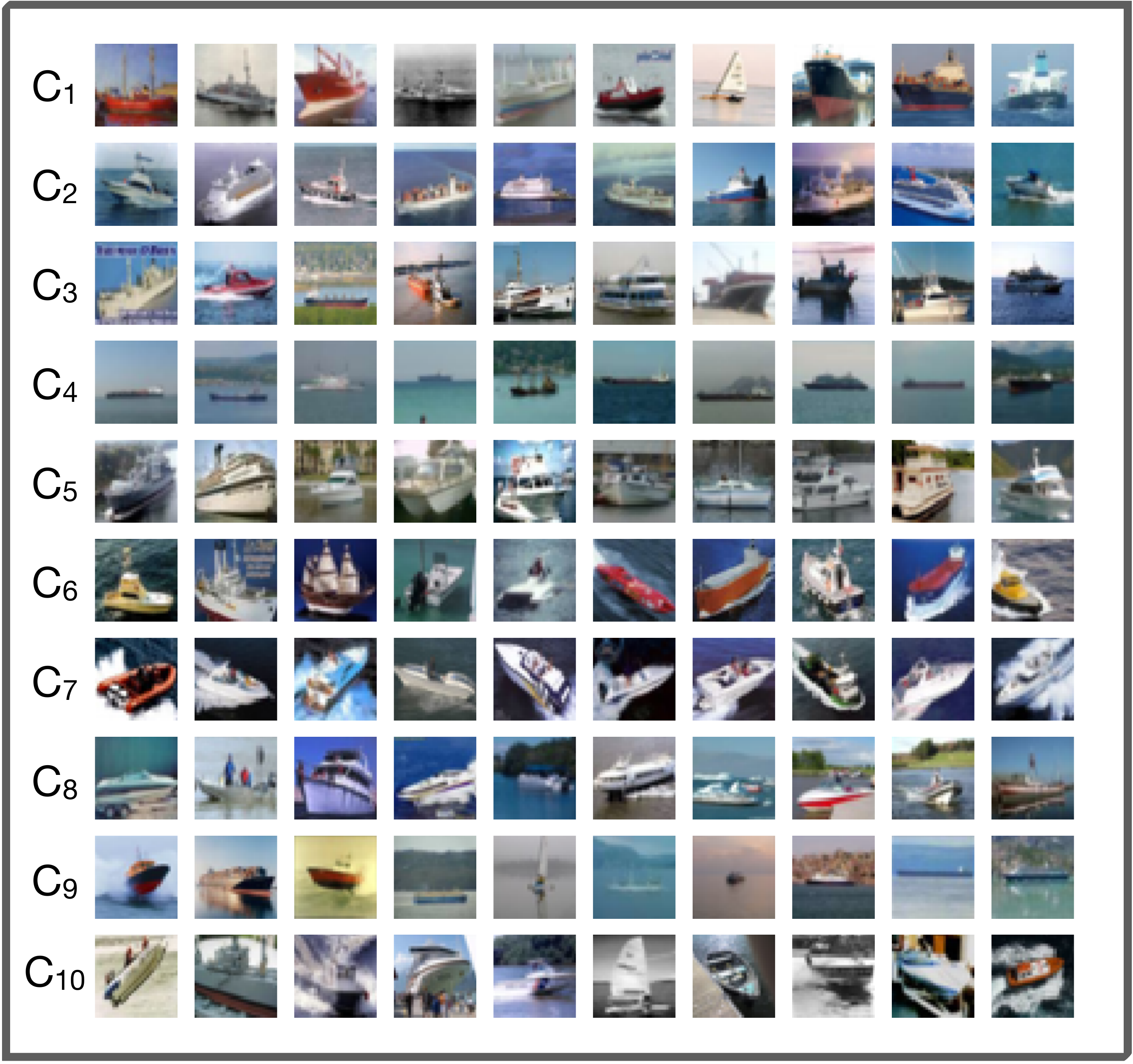}}
    \caption{\small Visualization of principal components learned for class 2-`Bird' and class 8-`Ship'. For each class $j$, we first compute the top-10 singular vectors of the SVD of the learned features $\Z_j$. Then for the $l$-th singular vector of class $j$, $\u_{j}^{l}$, and for the feature of the $i$-th image of class $j$, $\z_{j}^{i}$, we calculate the absolute value of inner product, $| \langle  \z_{j}^{i}, \u_{j}^{l} \rangle|$, then we select the top-10 images according to  $| \langle  \z_{j}^{i}, \u_{j}^{l} \rangle|$ for each singular vector. 
    In the above two figures, each row corresponds to one singular vector (component $C_l$). The rows are sorted based on the magnitude of the associated singular values, from large to small.}
\label{fig:visual-class-2-8}
\end{center}
\vskip -0.1in
\end{figure*}

\begin{figure*}[!h]
\subcapcentertrue
\begin{center}
    \subfigure[\label{fig:visual-overall-mcr}10  representative images from each class based on top-10 principal components of the SVD of learned representations by MCR$^2$.]{\includegraphics[width=0.46\textwidth]{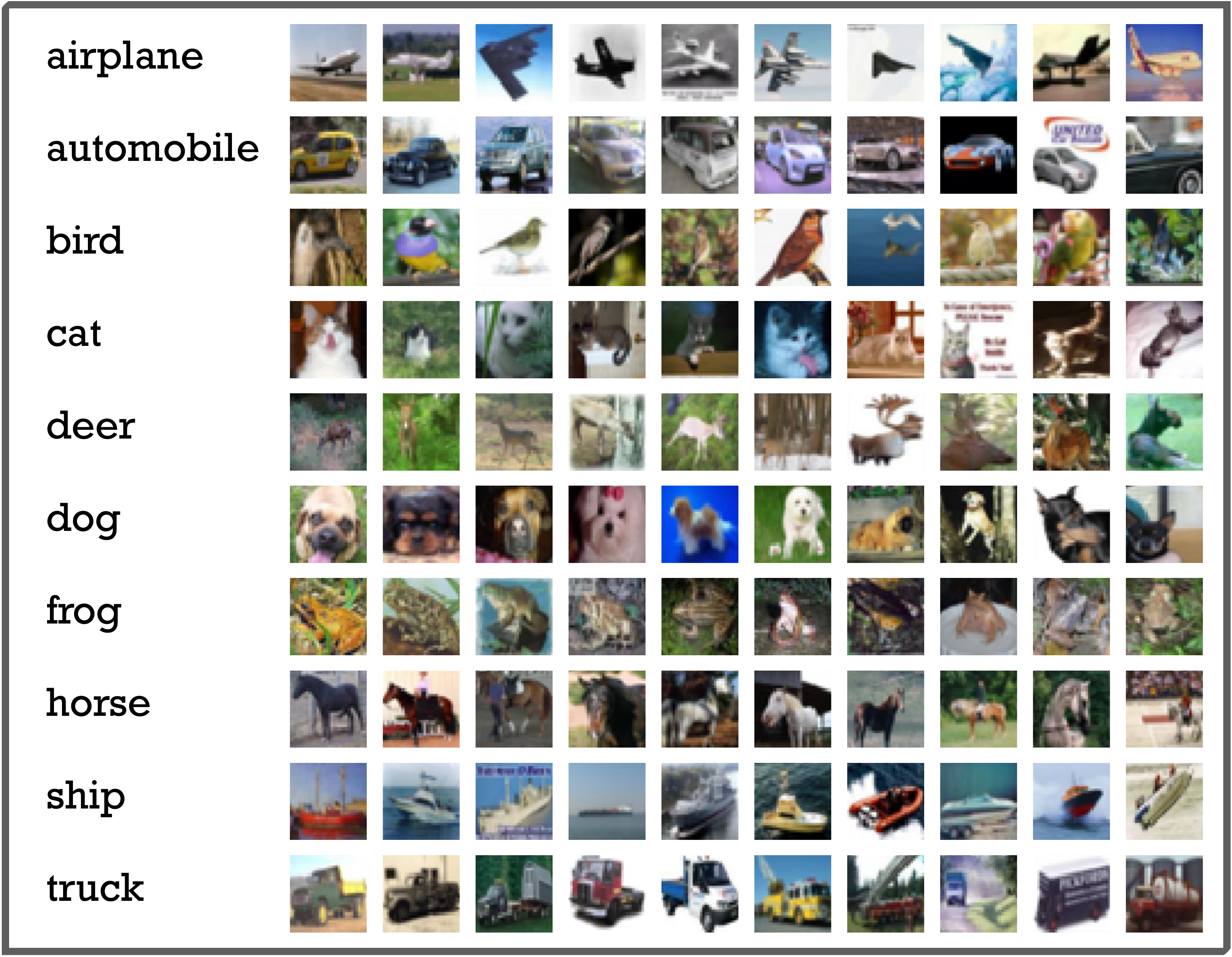}}
    \subfigure[\label{fig:visual-overall-cifar-website} Randomly selected 10 images from each class.]{\includegraphics[width=0.46\textwidth]{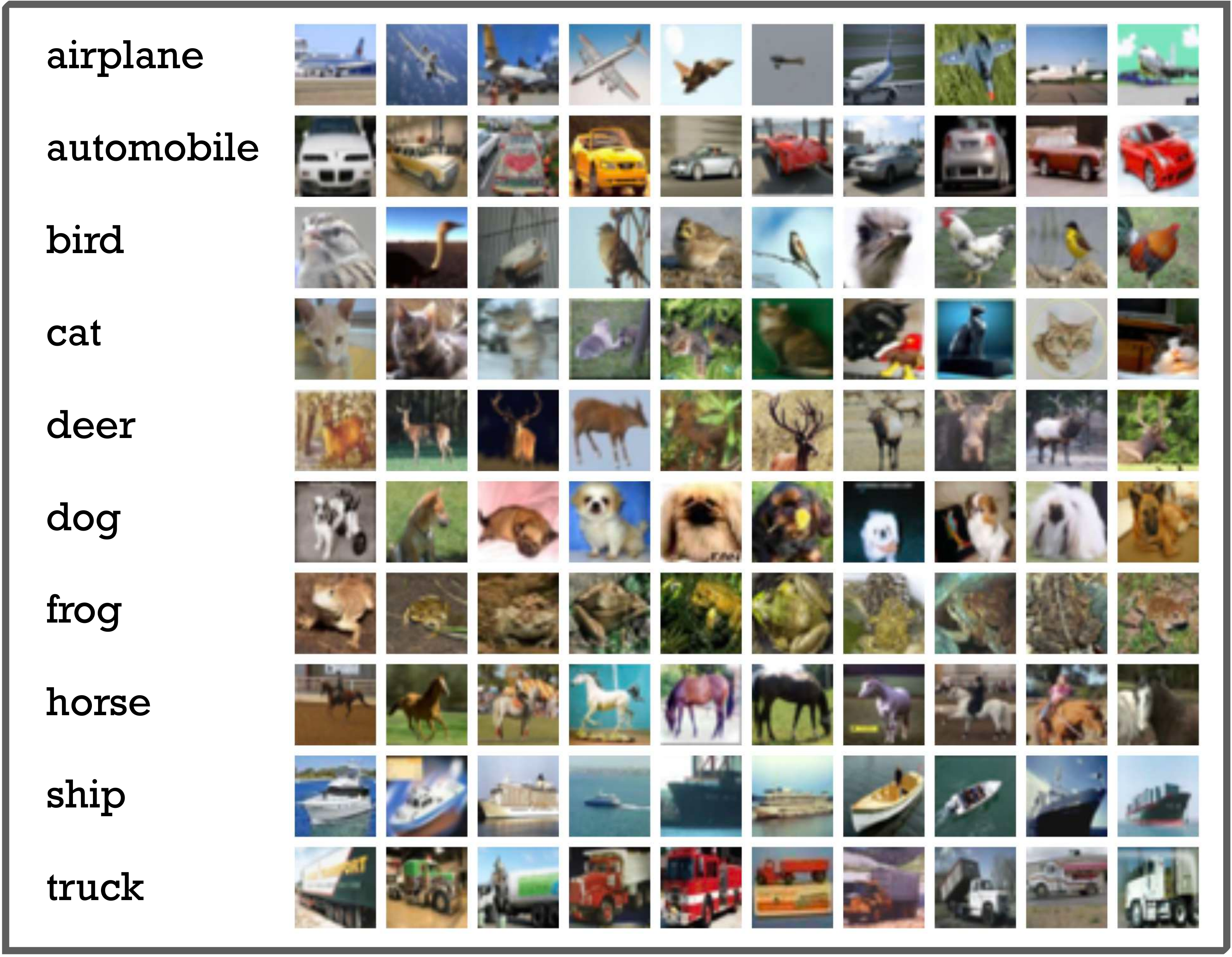}}
    \caption{\small Visualization of top-10 ``principal'' images for each class in the CIFAR10 dataset. \textbf{(a)} For each class-$j$, we first compute the top-10 singular vectors of the SVD of the learned features $\Z_j$. Then for the $l$-th singular vector of class $j$, $\u_{j}^{l}$, and for the feature of the $i$-th image of class $j$, $\z_{j}^{i}$, we calculate the absolute value of inner product, $| \langle  \z_{j}^{i}, \u_{j}^{l} \rangle|$, then we select the largest one for each singular vector within class $j$. Each row corresponds to one class, and each image corresponds to one singular vector, ordered by the value of the associated singular value. \textbf{(b)} For each class, 10 images are randomly selected in the dataset. These images are the ones displayed in the CIFAR dataset website~\cite{krizhevsky2009learning}.}
\label{fig:visual-overall-data}
\end{center}
\vskip -0.1in
\end{figure*}

For comparison, similar to Figure~\ref{fig:train-test-loss-pca-3}, we calculate the principle components of representations learned by MCR$^2$ training and cross-entropy training. For cross-entropy training, we take the output of the second last layer as the learned representation. The results are summarized in Figure~\ref{fig:pca-plot}. We also compare the cosine similarity between learned representations for both MCR$^2$ training and cross-entropy training, and the results are presented in  Figure~\ref{fig:heatmap-plot}. 

As shown in Figure~\ref{fig:pca-plot}, we observe that representations learned by MCR$^2$ are much more diverse, the dimension of learned features (each class) is  around a dozen, and the dimension of the overall features is nearly 120, and the output dimension is 128. In contrast, the dimension of the overall features learned using entropy is slightly greater than 10, which is much smaller than that learned by MCR$^2$. From Figure~\ref{fig:heatmap-plot}, for MCR$^2$ training,  we find that the  features of different class are almost orthogonal.

\paragraph{Visualize representative images selected from CIFAR10 dataset by using MCR$^2$.} As mentioned in Section \ref{sec:motivation}, obtaining the properties of desired representation in the proposed MCR$^2$ principle is equivalent to performing {\em nonlinear generalized principle components} on the given dataset. As shown in Figure~\ref{fig:pca-mcr-1}-\ref{fig:pca-mcr-3}, MCR$^2$ can indeed learn such diverse and discriminative  representations. In order to better interpret the representations learned by MCR$^2$, we select images according to their ``principal'' components (singular vectors using SVD) of the learned features.  In Figure~\ref{fig:visual-class-2-8}, we visualize images selected from class-`Bird' and class-`Ship'. For each class, we first compute top-10 singular vectors of the SVD of the learned features and then for each of the top singular vectors, we display in each row the top-10 images whose
corresponding features are closest to the singular vector. As shown in Figure~ \ref{fig:visual-class-2-8}, we observe that images in the same row share many common characteristics such as shapes, textures, patterns, and styles, whereas images in different rows are significantly different from each other -- suggesting our method captures all the different ``modes'' of the data even within the same class. Notice that top rows are associated with components with larger singular values, hence they are images that show up more frequently in the dataset.

In Figure~\ref{fig:visual-overall-mcr}, we visualize the 10 ``principal'' images selected from CIFAR10 for each of the 10 classes. That is, for each class, we display the 10 images whose corresponding features are most coherent with the top-10 singular vectors. We observe that the selected images are much more diverse and representative than those selected randomly from the dataset (displayed on the CIFAR official website), indicating such principal images can be used as a good ``summary'' of the dataset.

\subsubsection{Experimental Results of MCR$^2$ in the Supervised Learning Setting.}\label{sec:appendix-subsec-sup}

\paragraph{Training details for mainline experiment.} For the model presented in Figure~\ref{fig:low-dim} (\textbf{Right}) and Figure~\ref{fig:train-test-loss-pca},  we use ResNet-18 to parameterize $f(\cdot, \theta)$, and we set the output dimension $d=128$, precision $\epsilon^2=0.5$, mini-batch size $m=1,000$. We use SGD in Pytorch~\cite{paszke2019pytorch} as the optimizer, and set the  learning rate \texttt{lr=0.01}, weight decay \texttt{wd=5e-4}, and \texttt{momentum=0.9}.

\paragraph{Experiments for studying the effect of hyperparameters and architectures.} We present the experimental results of MCR$^2$ training in the supervised setting by using various training hyperparameters and different network architectures. The results are summarized in Table~\ref{table:ablation-supervise}. Besides the ResNet architecture, we also consider VGG architecture~\cite{simonyan2014very} and ResNext achitecture~\cite{xie2017aggregated}. From Table~\ref{table:ablation-supervise}, we find that larger batch size $m$ can lead to  better performance. Also, models with higher output dimension $d$ require larger training batch size $m$.

\begin{table}[h]
\vskip -0.05in
\centering
\caption{\small Experiments of MCR$^2$ in the supervised setting on the CIFAR10 dataset.}
\label{table:ablation-supervise}
\vskip -0.05in
\begin{small}
\begin{sc}
\begin{tabular}{ lcccccl}
\toprule
Arch & Dim $d$ &  Precision $\epsilon^2$ & BatchSize $m$ & {\texttt{lr}} & ACC & Comment \\
\midrule
\multirow{1}{*}{ResNet-18} & 128  & 0.5 & 1,000 & 0.01 &  92.20\% & Mainline, Fig~\ref{fig:train-test-loss-pca} \\
\midrule
ResNext-29  & 128  & 0.5 & 1,000 & 0.01  & 92.55\% & \multirow{2}{6em}{Different Architecture}  \\
VGG-11      & 128  & 0.5 & 1,000 & 0.01 & 90.76\% &        \\
\midrule
ResNet-18   & 512  & 0.5 & 1,000 & 0.01 & 88.60\% & \multirow{3}{6em}{Effect of Output Dimension}  \\
ResNet-18   & 256  & 0.5 & 1,000 & 0.01 & 92.10\% &        \\
ResNet-18   & 64  & 0.5 & 1,000 & 0.01 & 92.21\% &        \\
\midrule
ResNet-18   & 128  & 1.0 & 1,000 & 0.01 & 93.06\% & \multirow{3}{6em}{Effect of precision}  \\
ResNet-18   & 128  & 0.4 & 1,000 & 0.01 & 91.93\% &        \\
ResNet-18   & 128  & 0.2 & 1,000 & 0.01 & 90.06\% &        \\
\midrule
ResNet-18   & 128  & 0.5 & 500 & 0.01 & 82.33\% & \multirow{5}{6em}{Effect of Batch Size}  \\
ResNet-18   & 128  & 0.5 & 2,000 & 0.01 & 93.02\% &        \\
ResNet-18   & 128  & 0.5 & 4,000 & 0.01 & 92.59\% &        \\
ResNet-18   & 512  & 0.5 & 2,000 & 0.01 & 92.47\% &        \\
ResNet-18   & 512  & 0.5 & 4,000 & 0.01 & 92.17\% &        \\
\midrule
ResNet-18   & 128  & 0.5 & 1,000 & 0.05 & 86.02\% & \multirow{3}{6em}{Effect of \texttt{lr}}  \\
ResNet-18   & 128  & 0.5 & 1,000 & 0.005 & 92.39\% &        \\
ResNet-18   & 128  & 0.5 & 1,000 & 0.001 & 92.23\% &        \\
\bottomrule
\end{tabular}
\end{sc}
\end{small}
\vskip -0.1in
\end{table}

\paragraph{Effect of  $r_j$ on classification.} Unless otherwise stated, we set the number of components $r_j=30$ for nearest subspace classification. We study the effect of $r_j$ when used for classification, and the results are summarized in Table~\ref{table:effect-rj}. We observe that the nearest subspace classification works for a wide range of $r_j$.

\begin{table}[h]
\vskip -0.05in
\begin{center}
\caption{\small Effect of number of components $r_j$ for nearest subspace classification in the supervised setting.}
\vskip -0.05in
\label{table:effect-rj}
\begin{small}
\begin{sc}
\begin{tabular}{l|ccccc}
\toprule
Number of components & $r_j=10$ & $r_j=20$ & $r_j=30$ & $r_j=40$ & $r_j=50$  \\
\midrule
Mainline (Label Noise Ratio=0.0) & 92.68\%  & 92.53\% & 92.20\% & 92.32\% & 92.17\%  \\
\midrule
Label Noise Ratio=0.1 & 91.71\%  & 91.73\% & 91.16\% & 91.83\% & 91.78\%  \\
Label Noise Ratio=0.2 & 90.68\%  & 90.61\% & 89.70\% & 90.62\% & 90.54\%  \\
Label Noise Ratio=0.3 & 88.24\%  & 87.97\% & 88.18\% & 88.15\% & 88.10\%  \\
Label Noise Ratio=0.4 & 86.49\%  & 86.67\% & 86.66\% & 86.71\% & 86.44\%  \\
Label Noise Ratio=0.5 & 83.90\%  & 84.18\% & 84.30\% & 84.18\% & 83.76\%  \\
\bottomrule
\end{tabular}
\end{sc}
\end{small}
\end{center}
\vskip -0.15in
\end{table}

\paragraph{Effect of  $\epsilon^2$ on learning from corrupted labels.} To further study the proposed MCR$^2$ on learning from corrupted labels, we use different precision parameters, $\epsilon^2 = 0.75, 1.0$, in addition to the one shown in Table~\ref{table:label-noise}. Except for the precision parameter $\epsilon^2$, all the other parameters are the same as the mainline experiment (the first row in Table~\ref{table:ablation-supervise}). The first row ($\epsilon^2=0.5$) in Table~\ref{table:label-noise-appendix-precision} is identical to the \textsc{MCR$^2$ training} in Table~\ref{table:clustering}. Notice that with slightly different choices in $
\epsilon^2$, one might even see slightly improved performance over the ones reported in the main body. 

\begin{table}[h]
\vspace{-2mm}
\begin{center}
\caption{\small Effect of Precision $\epsilon^2$ on classification results with features learned with labels corrupted at different levels by using MCR$^2$ training.}
\label{table:label-noise-appendix-precision}
\begin{small}
\begin{sc}
\begin{tabular}{l | c c c c c }
\toprule
Precision  & Ratio=0.1 &  Ratio=0.2 &  Ratio=0.3 &  Ratio=0.4 &  Ratio=0.5 \\
\midrule
$\epsilon^2=0.5$ & {91.16\%} & {89.70\%} & {88.18\%} & {86.66\%} &  {84.30\%}\\
$\epsilon^2=0.75$ & \textbf{92.37\%} & {90.82\%} & \textbf{89.91\%} & \textbf{87.67\%} &  {83.69\%}\\
$\epsilon^2=1.0$ & {91.93\%} & \textbf{91.11\%} & {89.60\%} & {87.09\%} &  \textbf{84.53\%}\\
\bottomrule
\end{tabular}
\end{sc}
\end{small}
\end{center}
\end{table}

\subsubsection{Experimental Results of MCR$^2$ in the Self-supervised Learning Setting}\label{sec:appendix-subsec-selfsup}

\paragraph{Training details of MCR$^2$-{\scriptsize CTRL}.} For three datasets (CIFAR10, CIFAR100, and STL10), we use ResNet-18 as in the supervised setting, and we set the output dimension $d=128$, precision $\epsilon^2=0.5$, mini-batch size $k=20$, number of augmentations $n=50$, $\gamma_1 = \gamma_2 =20$. We observe that MCR$^2$-{\scriptsize CTRL} can achieve better clustering performance by using smaller $\gamma_2$, i.e., $\gamma_2=15$, on CIFAR10 and CIFAR100 datasets. We use SGD in Pytorch~\cite{paszke2019pytorch} as the optimizer, and set the  learning rate \texttt{lr=0.1}, weight decay \texttt{wd=5e-4}, and \texttt{momentum=0.9}.

\paragraph{Training dynamic comparison between MCR$^2$ and MCR$^2$-{\scriptsize CTRL}}.
In the self-supervised setting, we compare the training process for MCR$^2$ and MCR$^2$-\text{\scriptsize CTRL} in terms of $R, \widetilde{R}, R^c$, and $\Delta R$. For MCR$^2$ training, the features first expand (for both $R$ and $R^c$) then compress (for ). For MCR$^2$-\text{\scriptsize CTRL}, both $\widetilde{R}$ and $R^c$ first compress then $\widetilde{R}$ expands quickly and $R^c$ remains small, as we have seen in Figure \ref{fig:training-dynamic-controlling-compare} in the main body.

\paragraph{Clustering results comparison.} 
We compare the clustering performance between MCR$^2$ and MCR$^2$-{\scriptsize CTRL} in terms of NMI, ACC, and ARI. The clustering results are summarized in Table~\ref{table:mcr-mcrctrl-compare-appendix}. We find  that  MCR$^2$-{\scriptsize CTRL}  can achieve better performance for clustering.

\begin{table}[th]
\begin{center}
\caption{\small Clustering comparison between MCR$^2$ and MCR$^2$-{\scriptsize CTRL} on CIFAR10 dataset.}
\label{table:mcr-mcrctrl-compare-appendix}
\begin{small}
\begin{sc}
\begin{tabular}{l | c c c  }
\toprule
 & NMI & ACC & ARI  \\
\midrule
MCR$^2$ & 0.544 & 0.570 & 0.399\\
MCR$^2$-{\scriptsize Ctrl} & 0.630 & 0.684 & 0.508 \\
\bottomrule
\end{tabular}
\end{sc}
\end{small}
\end{center}
\end{table}

\subsubsection{Clustering Metrics and More  Results}\label{sec:appendix-subsec-clustering}
We first introduce the definitions of  normalized mutual information (NMI)~\cite{strehl2002cluster}, clustering accuracy (ACC), and adjusted rand index (ARI)~\cite{hubert1985comparing}.

\textbf{Normalized mutual information (NMI).} Suppose $Y$ is the ground truth partition and $C$ is the prediction partition. The NMI metric is defined as 
\begin{equation*}
    \text{NMI}(Y, C) = \frac{\sum_{i=1}^{k}\sum_{j=1}^{s}|Y_{i} \cap C_{j}|\log\left(\frac{m |Y_{i} \cap C_{j}| }{|Y_{i}| |C_{j}|}\right)}{\sqrt{\left(\sum_{i=1}^{k}|Y_i|\log\left(\frac{|Y_i|}{m}\right)\right) \left(\sum_{j=1}^{s}|C_j|\log\left(\frac{|C_j|}{m}\right)\right)}},
\end{equation*}
where $Y_i$ is the $i$-th cluster in $Y$ and $C_j$ is the $j$-th cluster in $C$, and $m$ is the total number of samples.

\textbf{Clustering accuracy (ACC).} Given $m$ samples, $\{(\x_i, \y_i)\}_{i=1}^m$.  For the $i$-th sample $\x_i$, let $\y_i$ be its ground truth label, and let $\bm{c}_i$ be its cluster label. The ACC metric is defined as 
\begin{equation*}
    \text{ACC}(\Y, \bm{C})= \max_{\sigma\in S}\frac{\sum_{i=1}^{m}\mathbf{1}\{\y_i = \sigma(\bm{c}_i)\}}{m},
\end{equation*}
where $S$ is the set includes all the one-to-one mappings from cluster to label, and $\Y = [\y_1, \dots, \y_m]$, $\bm{C} = [\bm{c}_1, \dots, \bm{c}_{m}]$.

\textbf{Adjusted rand index (ARI).} Suppose there are $m$ samples, and let $Y$ and $C$ be two clustering of these samples, where $Y = \{Y_1, \dots, Y_r\}$ and $C = \{C_1, \dots, C_{s}\}$. Let $m_{ij}$ denote the number of the intersection between $Y_i$ and $C_{j}$, i.e., $m_{ij} = |Y_i \cap C_j|$. The ARI metric is defined as 
\begin{equation*}
    \text{ARI} = \frac{\sum_{ij}\binom{m_{ij}}{2} - \left(\sum_{i}\binom{a_{i}}{2} \sum_{j}\binom{b_{j}}{2} \right)\big/ \binom{m}{2} }{\frac{1}{2}\left(\sum_{i}\binom{a_{i}}{2} +\sum_{j}\binom{b_{j}}{2} \right) - \left(\sum_{i}\binom{a_{i}}{2} \sum_{j}\binom{b_{j}}{2} \right)\big/ \binom{m}{2}},
\end{equation*}
where $a_{i} = \sum_{j}m_{ij}$ and $b_{j} = \sum_{i}m_{ij}$.

\paragraph{More experiments on the effect of hyperparameters of MCR$^2$-{\scriptsize CTRL}.}We provide more experimental results of MCR$^2$-{\scriptsize CTRL} training in the self-supervised setting by varying training hyperparameters on the STL10 dataset. The results are summarized in Table~\ref{table:ablation-self-supervise}. Notice that the choice of hyperparameters only has small effect on the performance with the MCR$^2$-{\scriptsize CTRL} objective. We may hypothesize that, in order to further improve the performance, one has to seek other, potentially better, control of optimization dynamics or strategies. We leave those for future investigation. 

\begin{table}[htp]
\centering
\caption{\small Experiments of MCR$^2$-{\scriptsize CTRL} in the self-supervised setting on STL10 dataset.}
\label{table:ablation-self-supervise}
\begin{small}
\begin{sc}
\begin{tabular}{ lccccc}
\toprule
Arch &   Precision $\epsilon^2$  & Learning Rate \texttt{lr} & NMI & ACC & ARI \\
\midrule
ResNet-18 &  0.5 & 0.1 & 0.446 & 0.491 & 0.290  \\
\midrule
ResNet-18  &  0.75 &  0.1 &   0.450 & 0.484 & 0.288   \\
ResNet-18  &  0.25 &  0.1 &   0.447 & 0.489 & 0.293   \\
ResNet-18  &  0.5 &  0.2  &   0.477 & 0.473 & 0.295   \\
ResNet-18  &  0.5 &  0.05  &   0.444 & 0.496 & 0.293  \\
ResNet-18  &  0.25 &  0.05 &   0.454 & 0.489 & 0.294  \\
\bottomrule
\end{tabular}
\end{sc}
\end{small}
\end{table}

\end{appendices}

\end{document}